\documentclass{article}

\usepackage[protrusion=true,expansion=true]{microtype}
\usepackage{graphicx}
\usepackage{booktabs} 
\usepackage{mathtools}
\usepackage{caption}
\usepackage{subcaption}
\usepackage{bbm} 
\usepackage{multirow} 
\usepackage{amsmath}
\usepackage{amssymb}
\usepackage{fontawesome5}
\usepackage{amsthm}
\usepackage{enumitem}
\usepackage{tabularx,booktabs,array}
\usepackage{xargs}
\usepackage[dvipsnames,table]{xcolor}   
\usepackage[most]{tcolorbox}
\usepackage{xparse}
\usepackage{booktabs, siunitx, multirow, graphicx,threeparttable,subcaption}  
\usepackage{arydshln}                    
\definecolor{refrow}{HTML}{E2E2F0}
\usepackage[numbers,sort&compress]{natbib}
\usepackage{wrapfig}
\usepackage{algorithm}
\usepackage{algorithmic}
\usepackage{hyperref}
\usepackage{comment}
\usepackage[normalem]{ulem} 
\usepackage[capitalize,noabbrev]{cleveref}
\usepackage{xspace}
\definecolor{projblue}{HTML}{BFEFFF}
\definecolor{gtrade}{HTML}{2E8B57} 

\hypersetup{
    colorlinks=true,
    breaklinks=true,
    citecolor=[RGB]{0, 102, 204},
    linkcolor=Maroon,
    urlcolor=[RGB]{0, 51, 153}
}
\usepackage{tikz}
\usetikzlibrary{decorations.pathreplacing}
\usetikzlibrary{shapes.geometric}
\definecolor{anchorblue}{RGB}{50,100,170}
\definecolor{cA}{RGB}{80,110,175}
\definecolor{cB}{RGB}{185,65,50}
\definecolor{cC}{RGB}{130,160,70}
\definecolor{maxred}{RGB}{190,55,55}
\definecolor{reassigngreen}{RGB}{40,140,80}
\definecolor{splitpurple}{RGB}{120,60,160}
\definecolor{cmred}{RGB}{200,50,50}
\definecolor{cmgreen}{RGB}{30,130,60}
\definecolor{gtrade}{RGB}{46,139,87}
\definecolor{rtrade}{RGB}{205,65,65}

\definecolor{floworg}{RGB}{200,130,40}
\definecolor{brenierblue}{RGB}{45,90,165}

\definecolor{icnnblue}{RGB}{0, 80, 160}
\definecolor{mlpred}{RGB}{180, 40, 30}

\definecolor{gtrade}{RGB}{72,145,100}
\definecolor{rtrade}{RGB}{190,82,82}

\usepackage[final,main]{neurips_2026}

\usepackage[colorinlistoftodos,prependcaption,textsize=tiny]{todonotes}
\newcommandx{\Daniel}[2][1=]{\todo[linecolor=orange,backgroundcolor=orange!25,bordercolor=orange,#1]{[Daniel]: #2}}
\newcommandx{\Alireza}[2][1=]{\todo[linecolor=red,backgroundcolor=red!25,bordercolor=red,#1]{[alireza]: #2}}
\newcommandx{\Ehsan}[2][1=]{\todo[linecolor=blue,backgroundcolor=blue!25,bordercolor=blue,#1]{[Ehsan]: #2}}
\newcommandx{\Buse}[2][1=]{\todo[linecolor=green!30!black!20,backgroundcolor=green!25!black!20,bordercolor=green!30!black!20,#1]{[Buse]: #2}}
\newcommandx{\Marco}[2][1=]{\todo[linecolor=purple,backgroundcolor=purple!25,bordercolor=purple,#1]{[Marco]: #2}}

\newcommand{\Autorefs}[2]{\hyperref[#1]{Assumptions~\ref*{#1}} and~\ref{#2}}

\theoremstyle{plain}
\newtheorem{theorem}{Theorem}[section]
\newtheorem{proposition}{Proposition}[section]
\newtheorem{lemma}{Lemma}[section]
\newtheorem{corollary}{Corollary}[section]
\newtheorem{example}{Example}[section]

\theoremstyle{definition}
\newtheorem{definition}{Definition}[section]
\newtheorem{assumption}{Assumption}[section]

\theoremstyle{remark}
\newtheorem{remark}{Remark}[section]

\crefname{theorem}{theorem}{theorems}
\Crefname{theorem}{Theorem}{Theorems}
\crefname{proposition}{proposition}{propositions}
\Crefname{proposition}{Proposition}{Propositions}
\crefname{lemma}{lemma}{lemmas}
\Crefname{lemma}{Lemma}{Lemmas}
\crefname{corollary}{corollary}{corollaries}
\Crefname{corollary}{Corollary}{Corollaries}
\crefname{definition}{definition}{definitions}
\Crefname{definition}{Definition}{Definitions}
\crefname{assumption}{assumption}{assumptions}
\Crefname{assumption}{Assumption}{Assumptions}
\crefname{remark}{remark}{remarks}
\Crefname{remark}{Remark}{Remarks}
\crefname{example}{example}{examples}
\Crefname{example}{Example}{Examples}

\usepackage[textsize=tiny]{todonotes}

\newcommand{\cA}{\mathcal{A}}

\newcommand{\cC}{\mathcal{C}}

\newcommand{\cM}{\mathcal{M}}

\newcommand{\cO}{\mathcal{O}}
\newcommand{\cP}{\mathcal{P}}

\newcommand{\cR}{\mathcal{R}}
\newcommand{\cS}{{\mathcal{S}}}
\newcommand{\cT}{{\mathcal{T}}}

\newcommand{\cZ}{\mathcal{Z}}
\newcommand{\diff}{\mathrm{d}}

\newcommand{\EE}{\mathbb{E}}

\newcommand{\NN}{\mathbb{N}}
\newcommand{\PP}{\mathbb{P}}

\newcommand{\RR}{\mathbb{R}}

\newcommand{\ZZ}{\mathbb{Z}}

\let\hat\widehat
\let\tilde\widetilde

\DeclareMathOperator*{\argmax}{arg\,max}

\newcommand{\ie}{\textit{i.e.}}
\newcommand{\eg}{\textit{e.g.}}
\newcommand{\cf}{\textit{cf.}}
\newcommand{\wasserstein}{\mathsf{W}}
\newcommand{\namedref}[2]{\hyperref[#1]{#2~(\ref*{#1})}}
\newcommand{\OT}{\mathrm{OT}}

\newcommand{\myparagraph}{\textbf}
\newcommand{\methodname}{ICNN-DRO\xspace}

\renewcommand{\algorithmicrequire}{\textbf{Input:}}
\renewcommand{\algorithmicensure}{\textbf{Output:}}

\NewDocumentCommand{\prettybox}{ O{} +m }{%
\begin{tcolorbox}[
  colback=orange!20,
  colframe=orange!20,
  colbacktitle=orange!20,
  coltitle=black,
  fonttitle=\bfseries,
  title={#1},
  boxrule=0.8pt,
  arc=3mm,
  left=4pt,
  right=4pt,
  top=4pt,
  bottom=4pt,
  width=\textwidth,
  before skip=4pt,
  after skip=4pt,
  breakable,
  before upper={%
    \setlength{\abovedisplayskip}{4pt}%
    \setlength{\abovedisplayshortskip}{0pt}%
    \setlength{\belowdisplayskip}{4pt}%
    \setlength{\belowdisplayshortskip}{2pt}%
  },
]
#2
\end{tcolorbox}
}

\NewDocumentCommand{\prettygreenbox}{ o +m }{%
  \IfNoValueTF{#1}{%
    \begin{tcolorbox}[
      enhanced,
      breakable,
      colback=green!15,
      colframe=green!60!black,
      boxrule=0.3pt,
      arc=3mm,
      left=4pt,
      right=4pt,
      top=4pt,
      bottom=4pt,
      width=\textwidth,
      before skip=4pt,
      after skip=4pt
    ]
    #2
    \end{tcolorbox}%
  }{%
    \begin{tcolorbox}[
      enhanced,
      breakable,
      colback=green!5,
      colframe=green!80!black,
      fonttitle=\bfseries,
      coltitle=black,
      title={#1},
      boxrule=0.8pt,
      arc=3mm,
      left=4pt,
      right=4pt,
      top=4pt,
      bottom=4pt,
      width=\textwidth,
      before skip=4pt,
      after skip=4pt
    ]
    #2
    \end{tcolorbox}%
  }%
}

\definecolor{cnneg}{rgb}{0.65,0.13,0.13}      
\definecolor{cuncon}{rgb}{0.13,0.13,0.65}     
\definecolor{cqedge}{rgb}{0.0,0.42,0.42}      
\definecolor{couterq}{rgb}{0.45,0.18,0.55}    

\title{Brenier Meets Adversarial Training:\\
Optimal Transport Geometry for Robust Learning}

\author{%
  \begin{tabular}{@{}c@{\hspace{1.8em}}c@{\hspace{1.8em}}c@{}}
    Alireza Abdollahpoorrostam$^{\star,1}$ &
    Ehsan Sharifian$^{\star,1}$ &
    Buse \c{S}en$^{\star,1}$
    \\[2pt]
    \multicolumn{3}{c}{%
      Marco Cuturi$^{2}$
      \hspace{3em}
      Daniel Kuhn$^{1}$%
    }
    \\[6pt]
    \multicolumn{3}{c}{\normalfont\small
      $^{1}$EPFL
      \qquad
      $^{2}$Apple
    }
    \\[8pt]
    \multicolumn{3}{c}{\normalfont
      \href{https://github.com/alirezaabdollahpour/ICNN-DRO}{\faGithub\ \texttt{Code}}
    }
  \end{tabular}%
}

\begin{document}
\addtocontents{toc}{\protect\setcounter{tocdepth}{-1}}

\maketitle
\renewcommand{\thefootnote}{}
\footnotetext{$^{\star}$Equal contribution.}
\renewcommand{\thefootnote}{\arabic{footnote}}

\begin{abstract}
Distributionally robust optimization (DRO) provides a principled framework for learning under distribution shift, but its practical use is hindered by the difficulty of evaluating worst-case risks for nonconvex loss functions. We study a penalized DRO formulation in which the adversary may choose any distribution but incurs a Wasserstein penalty for deviating from the empirical distribution. We show that the adversary’s problem can be reformulated as an optimization problem over transport maps that push empirical samples to adversarial ones, and we prove that optimal maps are cyclically monotone. We also show that standard adversarial training---based on per-sample local optimization---violates cyclical monotonicity and wastes transport costs unless the adversary is severely restricted. We propose two remedies. First, we introduce multi-start particle ascent, which alternates parallel gradient ascent with reassignment to enforce cyclical monotonicity across samples. Second, we parameterize adversarial maps as gradients of input-convex neural networks, which guarantees cyclical monotonicity by construction. Experiments on robust regression, image classification, and robust control show that our methods consistently outperform standard adversarial training and state-of-the-art baselines, achieving improved robustness and better generalization under distribution shift.
\looseness=-1
\end{abstract}

\vspace{-0.3cm}
\section{Introduction}
\vspace{-0.3cm}
\label{sec:intro}
Adversarial perturbations expose a pronounced fragility of modern machine learning systems, from early evidence of adversarial examples~\citep{szegedy2014intriguing, goodfellow2014explaining, nguyen2015deep, kurakin2017adversarial} to a broad robustness literature spanning stronger attacks, defense mechanisms~\citep{rade2022reducing, madry2018towards, Enemy} and
evaluation protocols~\citep{carlini2017towards, croce2020reliableevaluationadversarialrobustness, athalye2018obfuscatedgradientsfalsesense}. Yet, robustness to diverse threat models, common corruptions, and distribution shifts remains an open challenge even for well-trained models and certified defenses~\citep{tramer2019adversarial, Enemy, rade2022reducing,
taori2020measuringrobustnessnaturaldistribution}. Robust learning is therefore not only about surviving a prescribed attack, but about generalizing under broader distributional variation. \looseness=-1

Distributionally robust optimization (DRO)~\cite{kuhn2024distributionally} provides a principled framework for adversarial training. Instead of relying on a fixed attack, DRO models a worst-case adversary that perturbs the data distribution. A widely used approach is Wasserstein DRO, which requires the learned model to perform well for all distributions within a prescribed Wasserstein distance of the empirical distribution~\citep{mohajerin2018data, kuhn2019wasserstein, duchi2020learningmodelsuniformperformance, blanchet2017quantifyingdistributionalmodelrisk, gao2024wasserstein}. This formulation is appealing because the Wasserstein geometry encodes which distribution shifts are plausible, while the radius controls their magnitude. \looseness=-1

While conceptually appealing, DRO is not easy to use in practice because the worst-case problem searches over distributions. A common practical variant replaces the hard Wasserstein constraint with a penalized objective, in which the adversary may choose any distribution but incurs a Wasserstein penalty for deviating from the empirical distribution. Evaluating the resulting worst-case risk reduces to a finite-dimensional optimization problem over transported training samples. For standard machine learning models, this problem is typically nonconvex. Existing methods therefore approximate it through independent sample-wise updates or entropic surrogates~\citep{sinha2020certifying, wang2021sinkhorn, xu2025gradientflowsamplerbaseddistributionally}. 
A sufficiently strong penalty on moving samples can make this problem better behaved (more concave). But it does so only by making substantial perturbations too expensive, and hence by weakening the adversary itself. \looseness=-1

This paper takes a first step toward a theory of penalty-based Wasserstein DRO in the practically relevant regime where the adversary is powerful and thus faces a nonconvex problem. We also aim to develop strong approximate solution methods. The key enabling idea is to recast the adversary’s problem as an optimization problem over transport maps, which give rise to the following main contributions.
\looseness=-1
\begin{itemize}[itemsep=0.0pt, topsep=1pt, leftmargin=0.9em]
    \item \textbf{\autoref{sec:Preliminaries}}: We argue that the adversary’s optimal transport map is cyclically monotone and that restricting the adversary to cyclically monotone maps benefits {\em both} the adversary and the learner. \looseness=-1

    \item \textbf{\autoref{sec:implicit}}: We show that standard particle ascent methods for the dual DRO problem generally violate cyclical monotonicity in the nonconvex regime. As a repair, we introduce Multi-start Particle Ascent (MPA), which approximately solves the adversary’s problem by alternating parallel particle ascent over a batch of samples with optimal reassignment of the resulting particles. \looseness=-1

    \item \textbf{\autoref{sec:explicit_maps}}: We also propose an algorithm that models the adversary’s transport map as the gradient of an input-convex neural network (ICNN)~\citep{amos2017input}, thereby enforcing cyclical monotonicity by design. \looseness=-1

    \item \textbf{\autoref{sec:experiment}}: We evaluate both new methods on adversarial training for CIFAR-10 and on robust control tasks with uncertainty in physical parameters. We find that the ICNN-based method delivers the strongest overall robustness, while MPA remains competitive with the DRO baselines.
\end{itemize}

We do not focus on building a state-of-the-art robust training pipeline, which typically incorporates additional regularization techniques~\citep{Trades,rade2022reducing,Enemy}. Instead, our methods (MPA and the ICNN-based approach) are complementary and can be seamlessly integrated with such strategies.

\vspace{-0.3cm}
\section{Background}\label{sec:background}
\vspace{-0.3cm}
We first review some basics of optimal transport and robust machine learning. 

\myparagraph{Optimal transport.} We use $\mathcal{P}$ to denote the space of probability measures $\PP$ on $\mathbb{R}^m$ with finite second moments (\textit{i.e.}, $\mathbb{E}_{\PP}[\|Z\|_2^2] < \infty$), and we equip $\mathcal{P}$ with the Wasserstein distance. 
\begin{definition}[Wasserstein distance]\label{def:wass-defn} The Wasserstein distance of order 2 between two distributions $\PP, \hat{\PP}\in \mathcal{P}$ is given by  $\wasserstein(\mathbb{P}, \hat{\mathbb{P}}) \coloneqq \inf \{ \mathbb{E}_{\gamma} [\|Z - \hat{Z}\|_2^2] ^{1/2}: \gamma \in \Gamma(\mathbb{P}, \hat{\mathbb{P}})\}$, where $\Gamma(\mathbb{P}, \hat{\mathbb{P}})$ denotes the set of joint distributions or couplings of $Z$ and $\hat Z$ with marginals $\mathbb{P}$ and $\hat{\mathbb{P}}$, respectively. \looseness=-1
\end{definition}
The infimum in the definition of $\wasserstein(\PP,\hat{\PP})$ is always attained by some optimal coupling~$\gamma^\star$. Below we use~$\mathcal{T}$ to denote the set of all Borel-measurable functions $T: \mathbb{R}^m \to \mathbb{R}^m$, and we call elements of~$\mathcal{T}$ transport maps. 

If~$\hat{\mathbb{P}}$ is absolutely continuous with respect to the Lebesgue measure, then the optimal coupling~$\gamma^\star$ is induced by a transport map. Specifically, Brenier's theorem asserts that there is a unique $T^\star \in \mathcal{T}$ that is cyclically monotone and satisfies $\gamma^\star = (T^\star, \mathrm{id})_{\#} \hat{\mathbb{P}}$ and $\PP=T^\star_\#\hat{\PP}$ \cite{Brenier1991}.
\begin{definition}[Cyclical monotonicity]
\label{def:cm_functional_map}
Let $\cS_k$ denote the set of all permutations of $\{1, \dots, k\}$. A set $\cC\subseteq \mathbb R^m \times \mathbb R^m$ is called cyclical monotone if for any $k\in\NN$, any collection of pairs $\{(\hat{z}_1, z_1), \dots, (\hat{z}_k, z_k)\} \subseteq\cC$ and any $\sigma\in\cS_k$, the following equivalent conditions hold:
\vspace{-0.099cm}
{%
    \abovedisplayskip=1pt plus 1pt minus 1pt
    \belowdisplayskip=1pt plus 1pt minus 1pt
    \begin{equation*}
         \sum_{i=1}^{k} \|\hat{z}_i - z_i\|_2^2 \leq \sum_{i=1}^{k} \|\hat{z}_i - z_{\sigma(i)}\|_2^2 \;\; \iff \;\;  \sum_{i=1}^{k} \langle z_{\sigma(i)} - z_i, \hat{z}_i \rangle \leq 0.
    \end{equation*}
}%
We say that a map $T\in\cT$ is cyclically monotone on a set $\cZ\subseteq \RR^m$ if its graph $ \{(\hat{z}, T(\hat{z})) : \hat{z} \in \cZ\}$ is a cyclically monotone set. If $\cZ=\RR^m$, we simply say that $T$ is cyclically monotone.
\end{definition}
By Rockafellar's theorem~\citep[Theorem~2.5.3]{figalli2021invitation}, a map $T \in \mathcal{T}$ is cyclically monotone if and only if there exists a convex function $\psi: \mathbb{R}^m \to \mathbb{R} \cup \{\infty\}$ such that $T(z) \in \partial\psi(z)$  for all $z\in\RR^m$. Any transport map that violates cyclical monotonicity induces a suboptimal coupling between~$\PP$ and~$\hat\PP$, and the degree of suboptimality is sometimes referred to as the Monge gap~\cite{uscidda2023monge}.
\begin{definition}[Monge gap]\label{def:monge_gap} Given any $\hat{\PP}\in\cP$, the Monge gap of a map $T\in\cT$ is defined as
{%
    \abovedisplayskip=0.8pt plus 0.8pt minus 0.8pt
    \belowdisplayskip=0.5pt plus 0.8pt minus 0.8pt
    \begin{equation*}
        \cM_{\hat{\PP}}(T)\coloneqq \EE_{\hat{\PP}}[\|\hat Z -T(\hat Z)\|_2^2] - \wasserstein^2(T_{\#}\hat{\PP}, \hat{\PP}).
    \end{equation*}
}%
\end{definition}
Note that $\cM_{\hat{\PP}}(T)\geq0$, and $\cM_{\hat{\PP}}(T)=0$ if and only if $T$ is cyclically monotone on the support of $\hat\PP$. \looseness=-1

\myparagraph{Robust learning.}
Let $f : \Theta \times \mathbb{R}^m \to \mathbb{R}$ be a function that assigns each model parameter~$\theta$ from within a closed convex set~$\Theta \subseteq \mathbb{R}^d$ and each data point $z \in \mathbb{R}^m$ a training loss~$f(\theta,z)$. Throughout the paper we assume that~$f(\theta,z)$ is continuous in~$\theta$ and differentiable as well as Lipschitz continuous in~$z$.
Empirical Risk Minimization (ERM) seeks to minimize the expected loss under some $\hat{\mathbb{P}}\in\mathcal P$.
{%
\abovedisplayskip=1pt plus 1pt minus 1pt
\belowdisplayskip=1pt plus 1pt minus 1pt
\begin{equation}\tag{ERM}\label{ERM Problem}
    \min_{\theta \in \Theta} \; \mathbb{E}_{\hat{\mathbb{P}}} [f(\theta, \hat{Z})]
\end{equation}
}%
The data distribution $\hat{\PP}$ is typically set to the empirical measure $\hat{\mathbb{P}} = \frac{1}{N}\sum_{i=1}^N \delta_{\hat{z}_i}$ corresponding to a given training dataset. While computationally convenient, ERM is sensitive to distribution shifts and tends to overfit to the training data. As a remedy, Robust Optimization (RO) minimizes the expected value of the {\em worst-case} loss with respect to all data realizations in an $\varepsilon$-neighborhood of~$\hat Z$ \cite{madry2018towards}.
\begin{equation}\tag{RO}\label{prob:robust-opt}
    \min_{\theta \in \Theta} \mathbb{E}_{\hat{\mathbb{P}}} \Big[ \max \left\{ f(\theta, z) : \|z - \hat{Z}\|_2 \leq \varepsilon\right\} \Big]
\end{equation}
While RO accounts for local data perturbations, it does not provide protection against global distribution shifts. Moreover, unless~$f(\theta, z)$ is concave in~$z$, the embedded maximization problems in~\eqref{prob:robust-opt} can only be solved approximately, which is usually done via iterative first-order methods~\cite{madry2018towards}. Distributionally Robust Optimization (DRO) offers a principled approach to adversarial training that models distribution shifts explicitly. In penalty-based Wasserstein DRO a fictitious adversary maximizes the expected loss over all possible data distributions~$\PP\in\mathcal P$ subject to a penalty proportional to $\wasserstein^2(\mathbb{P},\hat{\mathbb{P}})$ for choosing distributions that differ from~$\hat\PP$. Thus, the training goal is to solve
\begin{equation}\tag{DRO}\label{prob:regularized_dro}
\min_{\theta \in \Theta} \;\bigl\{F(\theta) \coloneqq \max_{\mathbb{P} \in \cal P}\; \mathbb{E}_{\mathbb{P}}  \left[f(\theta, Z)\right] - \lambda\wasserstein^2(\mathbb{P},\hat{\mathbb{P}})\bigr\},
\end{equation}
where~$\lambda > 0$ captures the strength of the penalty. We refer to~$F(\theta)$ as the \emph{adversarial risk}.

\vspace{-0.3cm}
\section{Reparameterizations of the Adversarial Risk}
\vspace{-0.3cm}
\label{sec:Preliminaries}
Evaluating the adversarial risk~$F(\theta)$ amounts to maximizing a concave functional over an infinite-dimensional space of probability measures. Standard duality results in Wasserstein DRO such as \citep[Proposition~1]{sinha2020certifying} allow us to reformulate~$F(\theta)$ as the empirical expectation of the \emph{adversarial loss}, which is defined as the negative Moreau envelope of~$-f(\theta,z)$; see~\autoref{app:Technical_Background} for details.
\begin{equation}\label{eq:regularized_dro-dual}
    F(\theta) = \mathbb{E}_{\hat{\mathbb{P}}} \Big[\max_{z \in \RR^m} \bigl\{ f(\theta, z) - \lambda \|z - \hat{Z}\|_2^2 \bigr\}\Big].
\end{equation}
Hence, evaluating~$F(\theta)$ reduces to solving an unconstrained global maximization problem over~$z$ for each empirical sample~$\hat{z}$. The main results of this paper will exploit yet another reformulation of~$F(\theta)$ as the maximum of a functional optimization problem over the space of transport maps~$\mathcal{T}$. Indeed, by the interchangeability principle~\citep[Theorem 14.60]{rockafellar1998variational}, we can recast the adversarial risk in~\eqref{eq:regularized_dro-dual} as
\begin{align}\label{eq:maximize-over-maps}
\begin{aligned}
     F(\theta) =\max_{T \in \mathcal{T}}\; \mathbb{E}_{\hat{\mathbb{P}}} \left[ f(\theta, T(\hat{Z})) - \lambda \|T(\hat{Z}) - \hat{Z}\|_2^2 \right].
\end{aligned}
\end{align}
The reformulation~\eqref{eq:maximize-over-maps} seeks an adversarial transport map~$T^\star$ that simultaneously solves the global optimization problems associated with \emph{all} empirical samples in~\eqref{eq:regularized_dro-dual}. Even though the objective function value of~$T^\star$ in problem~\eqref{eq:maximize-over-maps} depends only on its behavior on the support of~$\hat{\mathbb{P}}$, it is expedient to extend~$T^\star$ to the entire ambient space. Formally, this can be done by constructing a set-valued mapping that assigns each empirical sample~$\hat{z} \in \mathbb{R}^m$ the corresponding set of adversarial samples
\begin{align}\label{eq:T_star_argmax}
     S(\hat{z}) \coloneqq \arg\max_{z \in \RR^m} \left\{ f(\theta, z) - \lambda \|z - \hat{z}\|_2^2 \right\}.
\end{align}
Next, we define the optimal adversarial map~$T^\star$ as a Borel-measurable selector of this set-valued mapping. The existence of such a selector is shown in \autoref{prop:measurability_Tstar}. Since $T^\star(\hat z)\in S(\hat{z})$ for all $\hat z\in\RR^m$, \citep[Theorem 14.60]{rockafellar1998variational} implies that~$T^\star$ solves~\eqref{eq:maximize-over-maps}. In addition, the pushforward $T^\star_\# \hat{\mathbb{P}}$ constitutes a worst-case distribution for the inner maximization problem in~\eqref{prob:regularized_dro}; see \autoref{prop:inner_deterministic_optimizer}. 
\looseness=-1

The equivalent reformulations~\eqref{eq:regularized_dro-dual} and~\eqref{eq:maximize-over-maps} of the adversarial risk~$F(\theta)$ give rise to two complementary methods for constructing adversarial maps. The pointwise formulation~\eqref{eq:regularized_dro-dual} gives rise to methods for constructing \emph{implicit} maps, which are only defined implicitly through simple algorithms for evaluating the map at a given input~$\hat z\in\mathbb{R}^m$. Conversely, the global formulation~\eqref{eq:maximize-over-maps} inspires the design of \emph{explicit} maps, which are modeled as explicit parametric functions such as neural networks.
\looseness=-1

A simple approach for constructing implicit maps is due to \citet{sinha2020certifying}. They show that if $\nabla_z f(\theta, z)$ is $L_{zz}$-Lipschitz continuous in~$z$ and~$\lambda > L_{zz}/2$, then problem~\eqref{eq:T_star_argmax} has a strongly concave objective function and thus a unique maximizer. Hence, $T^\star$ can be evaluated pointwise by solving~\eqref{eq:T_star_argmax} via gradient ascent. They further show that problem~\eqref{eq:regularized_dro-dual} is amenable to standard stochastic gradient descent (SGD) methods. Specifically, for each sample $\hat{z} \sim \hat{\mathbb{P}}$, they compute~$T^\star(\hat z)$ by solving~\eqref{eq:T_star_argmax} and then use Danskin's theorem to differentiate the parametric maximum in~\eqref{eq:regularized_dro-dual} with respect to~$\theta$, thus constructing an unbiased stochastic gradient for~$F(\theta)$. This naturally leads to an efficient inner-maximization outer-minimization loop. However, this procedure hinges on the condition $\lambda >  L_{zz}/2$, which is difficult to satisfy in practice if $f(\theta, z)$ depends on~$z$ through a neural network. As shown in~\cite{virmaux2018lipschitz}, even computing first-order Lipschitz constants is NP-hard, with loose upper bounds reaching the order of~$10^6$ for standard architectures like AlexNet~\citep{krizhevsky2012imagenet}. Bounding~$L_{zz}$, which relates to the network's Hessian, generally inherits and often exacerbates these first-order intractabilities. When $\lambda$ is set sufficiently large to guarantee strong concavity, adversarial perturbations are pulled close to the training samples, and the adversarial risk degenerates toward the empirical risk. In other words, the very condition that makes the problem tractable undermines its robustness. 
\looseness=-1

In practice, reasonable values of~$\lambda$ that correspond to a sufficiently powerful adversary are significantly smaller than~$L_{zz}$ and invariably render problem~\eqref{eq:T_star_argmax} nonconvex. Given the intractability of nonconvex optimization, the best we can thus hope for is to construct (implicit or explicit) adversarial maps that solve~\eqref{eq:maximize-over-maps} approximately. The \emph{adversarial error} $\varepsilon(\theta, T)$ of a given map~$T\in\mathcal T$ in problem~\eqref{eq:maximize-over-maps} corresponding to a fixed parameter~$\theta \in\Theta$ is defined as the degree of suboptimality of~$T$, that is,
\begin{equation}
    \label{eq:oracle_error}
    \varepsilon(\theta, T) \coloneqq F(\theta) - \EE_{\hat{\PP}}\bigl[f(\theta,T(\hat Z))-\lambda\|T(\hat Z)-\hat Z\|_2^2\bigr] \ge  0.
\end{equation}
The following proposition shows that access to an oracle that approximately solves~\eqref{eq:maximize-over-maps} with small adversarial error enables one to solve~\eqref{prob:regularized_dro} to high accuracy via projected gradient descent.
\looseness=-1

\begin{proposition}[Convergence with inexact adversary]
\label{prop:convex_bound_inexact}
Assume that $f(\theta, z)$ is convex in $\theta$ with
uniformly bounded gradients $\|\nabla_\theta f(\theta, z)\|_2 \le L_\theta$
for all $\theta \in \Theta$ and $z \in \RR^m$. Fix any~$\theta_0\in\Theta$, let
$\theta^\star \in \arg\min_{\theta \in \Theta} F(\theta)$, and consider $H$ projected gradient descent iterates satisfying
\[
    \theta_{t+1}=\operatorname{Proj}_\Theta\!\bigl(\theta_t - \alpha\, g_t\bigr), \quad\text{where}
    \quad
    g_t =\EE_{\hat{\PP}} \bigl[\nabla_\theta f(\theta_t, T_t(\hat Z))\bigr],
\]
where $T_t\in\cT$ is a near-optimal adversarial map that approximately solves~\eqref{eq:maximize-over-maps} at~$\theta=\theta_t$. If the constant step size is set to $\alpha = \|\theta_0 - \theta^\star\|_2 / L_{\theta}\sqrt{H}$, the average iterate $\bar\theta_H \coloneqq \frac{1}{H}\sum_{t=0}^{H-1}\theta_t$ satisfies
\begin{equation*}
\label{eq:convex_bound_inexact}
F(\bar\theta_H) - F(\theta^\star)
\;\le\;
\frac{L_{\theta}\,\|\theta_0 - \theta^\star\|_2}{\sqrt{H}}
\;+\;
\frac{1}{H}\sum_{t=0}^{H-1}\varepsilon(\theta_t, T_t).
\end{equation*}
\end{proposition}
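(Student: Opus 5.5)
The plan is to run the standard projected subgradient analysis, treating $g_t$ as an exact gradient of a surrogate function. For fixed $t$, define
\[
\phi_t(\theta) \coloneqq \EE_{\hat{\PP}}\bigl[f(\theta, T_t(\hat Z)) - \lambda\|T_t(\hat Z)-\hat Z\|_2^2\bigr].
\]
The map $\theta\mapsto \phi_t(\theta)$ is convex, since $f$ is convex in $\theta$ and the penalty does not depend on $\theta$. Its gradient at $\theta_t$ is exactly $g_t$, obtained by differentiating under the expectation; this is immediate when $\hat{\PP}$ is empirical and otherwise follows from dominated convergence via $\|\nabla_\theta f\|_2\le L_\theta$.

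Two comparisons between $\phi_t$ and $F$ are needed.
\begin{itemize}
\item By \eqref{eq:maximize-over-maps}, $F(\theta)\ge \phi_t(\theta)$ for every $\theta\in\Theta$, since $T_t$ is a feasible map.
\item By definition \eqref{eq:oracle_error}, $\phi_t(\theta_t) = F(\theta_t) - \varepsilon(\theta_t,T_t)$.
\end{itemize}
Combining these with convexity of $\phi_t$ yields an \emph{approximate subgradient inequality} for $F$:
\[
F(\theta^\star) \ge \phi_t(\theta^\star) \ge \phi_t(\theta_t) + \langle g_t, \theta^\star-\theta_t\rangle = F(\theta_t) - \varepsilon(\theta_t,T_t) + \langle g_t,\theta^\star-\theta_t\rangle .
\]
Rearranging gives
\[
F(\theta_t)-F(\theta^\star)\le \langle g_t,\theta_t-\theta^\star\rangle + \varepsilon(\theta_t,T_t).
\]
In words, $g_t$ is an $\varepsilon(\theta_t,T_t)$-subgradient of $F$ at $\theta_t$. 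This is the key step; the rest is bookkeeping.

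Next comes the usual descent recursion. Nonexpansiveness of $\operatorname{Proj}_\Theta$ onto the closed convex set $\Theta$ gives
\[
\|\theta_{t+1}-\theta^\star\|_2^2\le \|\theta_t-\theta^\star\|_2^2 - 2\alpha\langle g_t,\theta_t-\theta^\star\rangle + \alpha^2\|g_t\|_2^2 .
\]
Jensen's inequality gives $\|g_t\|_2\le L_\theta$. Rearranging, summing over $t=0,\dots,H-1$ and telescoping yields
\[
\sum_{t=0}^{H-1}\langle g_t,\theta_t-\theta^\star\rangle\le \frac{\|\theta_0-\theta^\star\|_2^2}{2\alpha}+\frac{\alpha H L_\theta^2}{2}.
\]

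To finish, $F$ is convex as a pointwise supremum of functions convex in $\theta$, by \eqref{eq:maximize-over-maps}. Jensen's inequality therefore gives $F(\bar\theta_H)\le \frac1H\sum_t F(\theta_t)$. Averaging the approximate subgradient inequality and inserting the telescoped bound, we get
\[
F(\bar\theta_H)-F(\theta^\star)\le \frac{\|\theta_0-\theta^\star\|_2^2}{2\alpha H}+\frac{\alpha L_\theta^2}{2}+\frac1H\sum_t\varepsilon(\theta_t,T_t).
\]
Substituting $\alpha=\|\theta_0-\theta^\star\|_2/(L_\theta\sqrt H)$ makes the first two terms sum to $L_\theta\|\theta_0-\theta^\star\|_2/\sqrt H$, which is the claimed bound.

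I expect the only delicate point to be the approximate subgradient step. It relies on the fact that the lower bound $F\ge\phi_t$ holds globally, because $T_t$ is feasible in \eqref{eq:maximize-over-maps}. Together with the exactness of $g_t$ as a gradient of $\phi_t$, this avoids any need for a Danskin-type argument about the differentiability of $F$. No smoothness of $F$ is used anywhere.
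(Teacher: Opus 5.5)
Your proposal is correct and follows the paper's proof essentially step for step. Both arguments use the same ingredients: the $\varepsilon(\theta_t,T_t)$-subgradient inequality from convexity of $f$ in $\theta$ and feasibility of $T_t$ in \eqref{eq:maximize-over-maps}, the nonexpansive-projection recursion with $\|g_t\|_2\le L_\theta$, telescoping with the balancing step size, and Jensen's inequality for the convex $F$. The only cosmetic difference is that the paper integrates the pointwise first-order convexity inequality for $f(\cdot,T_t(\hat z))$ directly, whereas you differentiate $\phi_t$ under the expectation, so the paper never needs that interchange.
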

\vspace{-0.3cm}
\autoref{prop:convex_bound_inexact} adapts a standard convergence result for minimax problems to our setting~\citep[\S~3.3]{bertsekas2015convex}. For completeness we provide a short proof in~\autoref{app:ot_structure}. It suggests that, in order to compute near-optimal minimizers for~\eqref{prob:regularized_dro}, one needs access to a strong adversarial oracle that outputs near-optimal adversarial maps. While \autoref{prop:convex_bound_inexact} only holds under the simplifying assumption that the loss function~$f(\theta, z)$ is convex in~$\theta$, we expect the conclusions of \autoref{prop:convex_bound_inexact} to remain valid in the vicinity of local minimizers even if the loss is nonconvex. The next proposition shows that the adversarial error of any transport map is bounded below by its Monge gap. \looseness=-1
{\setlength{\abovedisplayskip}{2pt}
 \setlength{\belowdisplayskip}{2pt}
 \setlength{\abovedisplayshortskip}{0pt}
 \setlength{\belowdisplayshortskip}{0pt}
\begin{proposition}[Suboptimality from wasteful transport]
\label{prop:wasted_transport}
We have \(
   \varepsilon(\theta, T)  \geq \lambda \cM_{\hat\PP}(T)
\)
for all~$T \in \mathcal{T}$.
\end{proposition}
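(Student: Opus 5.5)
The plan is to compare the value of $T$ in problem~\eqref{eq:maximize-over-maps} with the value of the pushforward distribution $\PP_T \coloneqq T_\#\hat\PP$ in the primal problem~\eqref{prob:regularized_dro}. Since $F(\theta)$ is a maximum over all $\PP\in\cP$, we have
\begin{equation*}
F(\theta) \;\ge\; \EE_{\PP_T}[f(\theta,Z)] - \lambda \wasserstein^2(\PP_T,\hat\PP).
\end{equation*}
For this step we need $\PP_T\in\cP$, i.e., finite second moments. If $\EE_{\hat\PP}[\|T(\hat Z)\|_2^2]=\infty$, then, because $f$ is Lipschitz in $z$ and hence grows at most linearly, the objective of $T$ in~\eqref{eq:maximize-over-maps} equals $-\infty$. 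In that case $\varepsilon(\theta,T)=+\infty$ and the claim is trivial, interpreting $\cM_{\hat\PP}(T)$ as possibly infinite. In the main case of an empirical $\hat\PP$, all moments are finite automatically.

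Next, I rewrite the expected loss under $\PP_T$ using the change-of-variables formula for pushforwards:
\begin{equation*}
\EE_{\PP_T}[f(\theta,Z)] = \EE_{\hat\PP}[f(\theta,T(\hat Z))].
\end{equation*}
Substituting this into the definition~\eqref{eq:oracle_error} gives
\begin{align*}
\varepsilon(\theta,T) &= F(\theta) - \EE_{\hat\PP}[f(\theta,T(\hat Z))] + \lambda\EE_{\hat\PP}[\|T(\hat Z)-\hat Z\|_2^2]\\
&\ge -\lambda\wasserstein^2(T_\#\hat\PP,\hat\PP) + \lambda\EE_{\hat\PP}[\|T(\hat Z)-\hat Z\|_2^2] = \lambda\cM_{\hat\PP}(T),
\end{align*}
where the last equality is exactly \autoref{def:monge_gap}.

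There is essentially no hard step; the only subtlety is the integrability issue above, namely ensuring that $\PP_T$ is admissible and that no expression of the form $\infty-\infty$ arises. I would handle it by splitting into two cases: either $\EE_{\hat\PP}\|T(\hat Z)\|_2^2<\infty$, where the argument above applies, or the objective of $T$ is $-\infty$, where the bound holds trivially. The finiteness of $F(\theta)$ itself follows from~\eqref{eq:regularized_dro-dual} under the Lipschitz assumption on $f$.
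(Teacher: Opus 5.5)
Your proposal is correct and follows the same route as the paper. Both use $T_\#\hat\PP$ as a feasible candidate in the inner problem of \eqref{prob:regularized_dro}, apply the change-of-variables formula, and identify the remaining terms with $\lambda\cM_{\hat\PP}(T)$. Both also handle the infinite-second-moment case by noting that $\varepsilon(\theta,T)=\infty$ there, so the bound holds trivially.
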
}
Note that $\cM_{\hat \PP}(T)>0$ indicates that~$T$ transports $\hat\PP$ to $T_\#\hat \PP$ in an inefficient manner. \autoref{prop:wasted_transport} implies that such inefficient transport necessarily leads to suboptimality in~\eqref{eq:maximize-over-maps} and weakens the convergence guarantees of \autoref{prop:convex_bound_inexact}. The following result is a direct corollary of \autoref{prop:wasted_transport}. \looseness=-1

\begin{proposition}
    \label{prop:optimal_map_cm} The optimal adversarial map $T^\star$  is cyclically monotone, and thus $\cM_{\hat \PP}(T^\star)=0$.
\end{proposition}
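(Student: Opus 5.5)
Since $T^\star$ is a Borel-measurable selector of the argmax mapping $S$ in~\eqref{eq:T_star_argmax}, it attains the maximum in~\eqref{eq:maximize-over-maps}; this was noted after its construction, via \citep[Theorem 14.60]{rockafellar1998variational}. Hence $\varepsilon(\theta,T^\star)=0$. Combining this with \autoref{prop:wasted_transport} and the nonnegativity of the Monge gap gives
\[
0 \le \lambda\, \cM_{\hat\PP}(T^\star) \le \varepsilon(\theta,T^\star) = 0 .
\]
Because $\lambda>0$, it follows that $\cM_{\hat\PP}(T^\star)=0$. This settles the second claim. By the characterization stated after \autoref{def:monge_gap}, it also shows that $T^\star$ is cyclically monotone on the support of $\hat\PP$.

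The remaining issue is the first claim, read as cyclical monotonicity of $T^\star$ on all of $\RR^m$ rather than only on $\operatorname{supp}\hat\PP$. For this I would use a direct pointwise argument, which avoids the measure-theoretic route. Take finitely many points $\hat z_1,\dots,\hat z_k\in\RR^m$, set $z_i=T^\star(\hat z_i)$, and fix a permutation $\sigma$. Since $z_i\in S(\hat z_i)$, optimality against the competitor $z_{\sigma(i)}$ gives, for each $i$,
\[
f(\theta,z_i)-\lambda\|z_i-\hat z_i\|_2^2 \;\ge\; f(\theta,z_{\sigma(i)})-\lambda\|z_{\sigma(i)}-\hat z_i\|_2^2 .
\]
Summing over $i$, the loss terms cancel because $\sigma$ is a bijection, so $\sum_i f(\theta,z_i)=\sum_i f(\theta,z_{\sigma(i)})$. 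What remains is
\[
\sum_i\|\hat z_i-z_i\|_2^2 \;\le\; \sum_i\|\hat z_i-z_{\sigma(i)}\|_2^2 ,
\]
which is exactly the condition in \autoref{def:cm_functional_map}. Therefore the graph of $T^\star$, and in fact the graph of the whole set-valued map $S$, is cyclically monotone on $\RR^m$. This is the same exchange argument that underlies \autoref{prop:wasted_transport}, applied to arbitrary finite point sets.

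I do not expect a real obstacle. The one point needing care is the cancellation of $f$ under the permutation. It is legitimate because each maximization problem in~\eqref{eq:T_star_argmax} is over all of $\RR^m$, so every $z_{\sigma(i)}$ is a feasible competitor for the problem at $\hat z_i$. That feasibility is what allows the pointwise argument to extend beyond $\operatorname{supp}\hat\PP$.
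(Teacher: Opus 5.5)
Your core argument for cyclical monotonicity on all of $\RR^m$ is exactly the paper's proof: each $T^\star(\hat z_{\sigma(i)})$ is a feasible competitor in the problem at $\hat z_i$, and summing over $i$ cancels the loss terms. The paper then reads off $\cM_{\hat\PP}(T^\star)=0$ from cyclical monotonicity on the support. Your separate derivation of the Monge-gap claim from $\varepsilon(\theta,T^\star)=0$ and \autoref{prop:wasted_transport} (the ``direct corollary'' route mentioned in the main text) is also valid, and it has the mild advantage of not relying on the unproved characterization stated after \autoref{def:monge_gap}.
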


\autoref{prop:optimal_map_cm} establishes a necessary optimality condition for Wasserstein DRO problems. It is reminiscent of Brenier's theorem~\citep{Brenier1991}, which establishes a similar optimality condition for optimal transport problems. However, it holds even if~$\hat{\mathbb{P}}$ is discrete and~$f(\theta,z)$ fails to be concave in~$z$. 

\looseness=-1

Evaluating~$F(\theta)$ requires solving a separate optimization problem for every point in the support~of~$\hat \PP$; see~\eqref{eq:regularized_dro-dual}. Unless~$\lambda$ is astronomically large, all of these optimization problems are nonconvex and thus intractable. However, these global optimization problems are {\em coupled} because their objective functions differ only with regard to the anchor point~$\hat z$ of the quadratic penalty term. Accordingly, 
\autoref{prop:optimal_map_cm} reveals that their respective global maximizers~$T^\star(\hat z)$ are connected through the cyclical monotonicity of~$T^\star$. 
Even though the exact adversarial map~$T^\star$ is hard to compute, the insight that a map~$T$ can only be optimal if it is cyclically monotone reduces the search space for good adversarial maps and guides us, by virtue of \autoref{prop:convex_bound_inexact}, toward better solutions for~\eqref{prob:regularized_dro}.
\looseness=-1

\vspace{-0.3cm}
\section{Implicit Adversarial Maps}
\label{sec:implicit}
\vspace{-0.3cm}
\emph{Implicit} adversarial maps are defined in terms of an algorithm for point evaluation. A simple implicit map is obtained by using gradient ascent initialized at~$\hat z$ for solving the nonconvex maximization problem~\eqref{eq:regularized_dro-dual} to local optimality. This method naturally generalizes the approach by \citet{sinha2020certifying} to nonconvex instances of~\eqref{eq:regularized_dro-dual} with moderate penalty parameters~$\lambda\leq L_{zz}/2$. We first provide a formal definition of this particle ascent method. \looseness=-1
\begin{wrapfigure}[12]{r}{0.60\textwidth}
\vspace{-0.3cm}
\centering
\includegraphics[width=1.0\linewidth, clip] 
{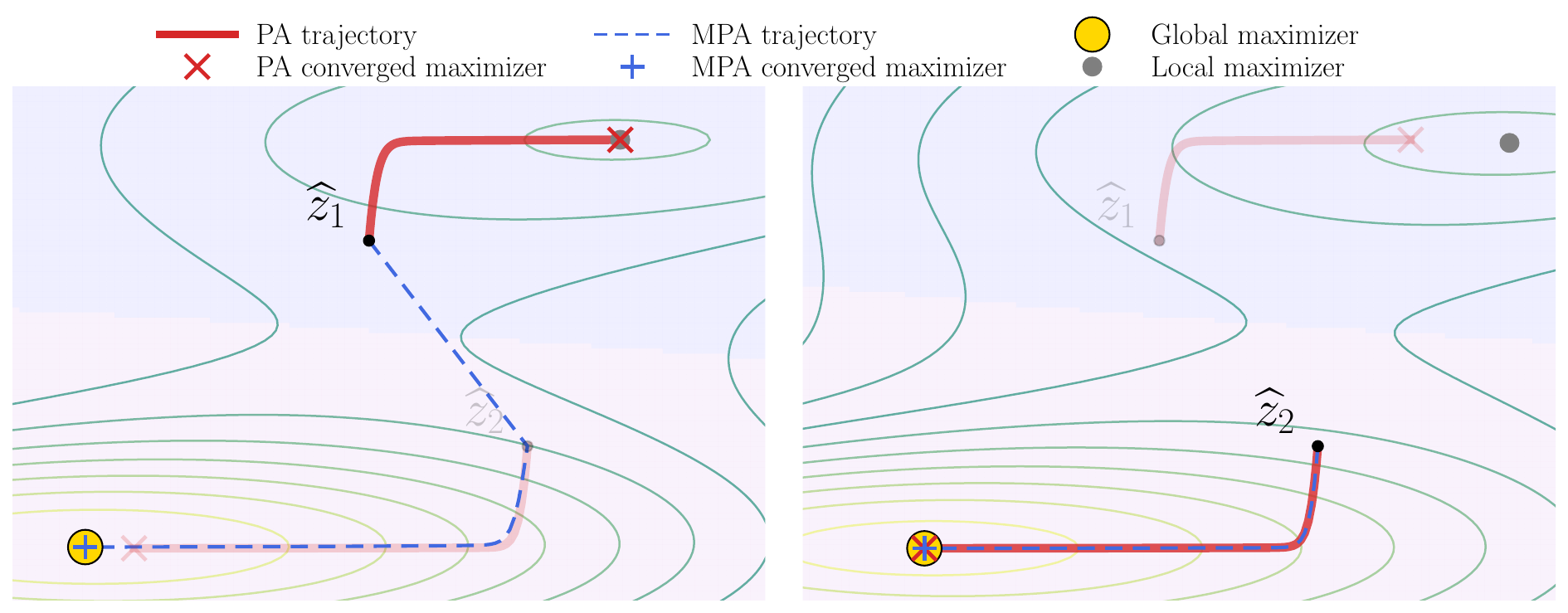}
\caption{PA and MPA trajectories over the contours of $f(\theta, z) - \lambda\|z - \hat{z}_i\|_2^2$ for $i=1$ (left) and $i=2$ (right).}
\label{fig:2D_counterexample}
\vspace{-0.15cm}
\end{wrapfigure}

\myparagraph{Particle Ascent (PA).}
PA approximately solves the maximization problem~\eqref{eq:regularized_dro-dual} by using gradient ascent initialized at~$\hat{z}$, either in continuous or discrete time. In both cases, we use~$T_{\mathrm{pa}}$ to denote the resulting family of implicit maps parametrized by time. The correct interpretation of~$T_{\mathrm{pa}}$ will always be clear from the context. In continuous time, $T_{\mathrm{pa}}: \RR_+ \times \RR^m \to \RR^m$ is defined via the ODE
\begin{align*}
  \partial_t T_{\mathrm{pa}}(t,\hat{z})
  = \nabla_z f\!\left(\theta, T_{\mathrm{pa}}(t,\hat{z})\right)
    - 2\lambda\bigl(T_{\mathrm{pa}}(t,\hat{z}) - \hat{z}\bigr)
\end{align*}
initialized by~$T_{\mathrm{pa}}(0,\hat{z}) = \hat{z}$. In discrete time, $T_{\mathrm{pa}}: \ZZ_{+} \times \RR^m \to \RR^m$ is defined via the recursion
\begin{equation*}
  T_{\mathrm{pa}}(t+1,\hat{z})
  = T_{\mathrm{pa}}(t,\hat{z})
  + \eta\!\left(
      \nabla_z f(\theta, T_{\mathrm{pa}}(t,\hat{z}))
      - 2\lambda\bigl(T_{\mathrm{pa}}(t,\hat{z}) - \hat{z}\bigr)
    \right)
\end{equation*}
initialized by $T_{\mathrm{pa}}(0,\hat{z}) = \hat{z}$, where $\eta>0$ is the step size. When~$t$ is fixed and clear from context, we abbreviate $T_{\mathrm{pa}}(t,\hat z)$ as $T_{\mathrm{pa}}(\hat z)$. Note that~$T_{\mathrm{pa}}$ is implicitly defined through an optimization oracle.
\looseness=-1

The per-sample formulation of PA is computationally appealing, as it allows different particles to be updated in parallel. Even though $T_{\mathrm{pa}}$ is generically suboptimal in~\eqref{eq:maximize-over-maps} unless $\lambda>L_{zz}/2$, it is guaranteed to be cyclically monotone (in fact, monotone) for one-dimensional data.
\begin{proposition}[Monotonicity of $T_{\mathrm{pa}}$ when $m=1$]
\label{prop:pa_monotonicity}
If $m=1$ and $\nabla_z f(\theta,\cdot)$ is $L_{zz}$-Lipschitz, then 
\begin{itemize}[leftmargin=2em]\vspace{-0.2cm}
    \item the continuous-time PA map $T_{\mathrm{pa}}(t,\cdot)$ is cyclically monotone for all $t\in\RR_+$;
    \vspace{-0.2cm}
    \item the discrete-time PA map $T_{\mathrm{pa}}(t,\cdot)$ is cyclically monotone for all $t \in \ZZ_+$ if $\eta\le 1/(L_{zz}+2\lambda)$.
    \vspace{-0.2cm}
\end{itemize}
\end{proposition}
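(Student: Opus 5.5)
In one dimension a map is cyclically monotone if and only if it is monotone (nondecreasing). Indeed, a nondecreasing $T$ on $\RR$ satisfies $T(\hat z)\in\partial\psi(\hat z)$ for the convex function $\psi(x)=\int_0^x T(s)\,\diff s$, and Rockafellar's theorem then gives cyclical monotonicity. The plan is therefore to show that, for every fixed $t$, the map $\hat z\mapsto T_{\mathrm{pa}}(t,\hat z)$ is nondecreasing. In fact I would prove the stronger statement that it is differentiable in $\hat z$ with strictly positive derivative.

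\textbf{Continuous time.} Write $u(t)=T_{\mathrm{pa}}(t,\hat z)$ and $h(z)=\nabla_z f(\theta,z)$. The right-hand side $h(u)-2\lambda(u-\hat z)$ is Lipschitz in $(u,\hat z)$, so solutions exist globally, are unique, and depend Lipschitz-continuously on $\hat z$. My first choice is a comparison argument, which avoids differentiability of $h$. Take $\hat z_1<\hat z_2$ and set $w(t)=u_2(t)-u_1(t)$, so that $w(0)=\hat z_2-\hat z_1>0$. Then
\[
\dot w = \bigl(h(u_2)-h(u_1)\bigr) - 2\lambda w + 2\lambda(\hat z_2-\hat z_1).
\]
Define $a(t)=(h(u_2)-h(u_1))/w$ when $w\neq0$, and $a(t)=0$ otherwise. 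Since $h$ is $L_{zz}$-Lipschitz, $|a(t)|\le L_{zz}$, and $a$ is measurable. The equation becomes the linear ODE $\dot w=(a-2\lambda)w+2\lambda(\hat z_2-\hat z_1)$. Variation of constants gives
\[
w(t)=e^{\int_0^t(a-2\lambda)}w(0)+\int_0^t e^{\int_s^t(a-2\lambda)}\,2\lambda(\hat z_2-\hat z_1)\,\diff s>0 .
\]
Hence $u_1(t)<u_2(t)$ for all $t$. The key point is that the coupling term $2\lambda(\hat z_2-\hat z_1)$ has the right sign. Without it, uniqueness of solutions alone would already prevent trajectories from crossing, which gives the same conclusion.

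\textbf{Discrete time.} I would use induction on $t$ with the same difference variable. Let $w_t=T_{\mathrm{pa}}(t,\hat z_2)-T_{\mathrm{pa}}(t,\hat z_1)$ and $\delta=\hat z_2-\hat z_1>0$. Then
\[
w_{t+1}=\bigl(1-2\lambda\eta+\eta a_t\bigr)w_t+2\lambda\eta\,\delta,\qquad |a_t|\le L_{zz}.
\]
The step-size condition $\eta\le 1/(L_{zz}+2\lambda)$ gives $1-2\lambda\eta+\eta a_t\ge 1-\eta(2\lambda+L_{zz})\ge0$. So if $w_t\ge0$, then $w_{t+1}\ge 2\lambda\eta\delta>0$. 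Starting from $w_0=\delta>0$, induction gives $w_t>0$ for all $t\in\ZZ_+$, so the map is strictly increasing.

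\textbf{Main obstacle.} The main point of care is the discrete step. Here the step-size bound is exactly what prevents overshooting, which could otherwise reverse the order of the particles; this is the only place the hypothesis on $\eta$ enters. The remaining steps are routine: Lipschitz well-posedness of the ODE, and the final conversion from monotone to cyclically monotone via the antiderivative $\psi$ and Rockafellar's theorem. Both hold on all of $\RR$, so cyclical monotonicity holds without restricting to the support of $\hat\PP$.
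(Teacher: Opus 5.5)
Your proposal is correct. The discrete-time part matches the paper's proof: the paper also inducts on the gap $\delta_t$, bounds the gradient difference below by $-L_{zz}\delta_t$, and uses $\eta\le 1/(L_{zz}+2\lambda)$ to make the coefficient of $\delta_t$ nonnegative. Your difference quotient $a_t$ just repackages that bound. The one-dimensional reduction is only cited in the paper, while you justify it via the antiderivative $\psi$. That is fine, since only the easy direction (summing subgradient inequalities) is needed.

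The genuine difference is in continuous time. The paper uses a first-crossing contradiction. If the order were reversed at time $t$, the gap $\delta$ would have a first zero $t^\star<t$ with $\delta'(t^\star)\le 0$. But at a coincidence point the gradient terms cancel exactly, so $\delta'(t^\star)=2\lambda(\hat z_2-\hat z_1)>0$. You instead linearize the gap equation with a bounded measurable coefficient and solve it by variation of constants. The paper's route is shorter and avoids measurability and absolute-continuity bookkeeping. Yours gives strict monotonicity with an explicit lower bound on $w(t)$. It also makes the two regimes parallel: the continuous integrating factor $e^{\int(a-2\lambda)}$ is automatically positive, whereas its discrete analogue $1+\eta(a_t-2\lambda)$ is nonnegative only under the step-size condition. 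This explains why the condition appears only in discrete time.

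One small point: you open by promising differentiability in $\hat z$ with a positive derivative, but you never prove it. In general it would need $\nabla_z f(\theta,\cdot)$ to be $C^1$, not merely Lipschitz. Nothing in your argument depends on it, so just drop that sentence.
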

Unfortunately, $T_{\mathrm{pa}}$ ceases to be cyclically monotone already in two dimensions.

\begin{example}[$T_{\mathrm{pa}}$ violates cyclical monotonicity]
\label{ex:counterexample}
Set $m=2$, and consider the loss function
\(
    f(\theta, z) = \sum_{k=1}^2 a_k
    e^{-(z-\mu_k)^\top\Sigma_k^{-1}(z-\mu_k)}
\)
with $a_1=200$, $a_2=500$, $\mu_1 = (6,2)^\top$, $\mu_2=(-6,-2)^\top$, and $\Sigma_1 = \Sigma_2 = \mathrm{diag}(64, 1)$. We compute
$T_{\mathrm{pa}}$ in discrete time for $\lambda = 3$ and $t = 8{,}000$ steps with step size $\eta = 0.001$, initialized at $\hat{z}_1 = (-1,1)^\top$ and $\hat{z}_2 = (1,-1)^\top$. The two particles converge to different local maxima, $T_{\mathrm{pa}}(\hat{z}_1) = (2.17, 1.98)^\top$ and $T_{\mathrm{pa}}(\hat{z}_2) = (-3.96, -2.00)^\top$ (see \autoref{fig:2D_counterexample}), yielding $\langle T_{\mathrm{pa}}(\hat{z}_1) - T_{\mathrm{pa}}(\hat{z}_2),\, \hat{z}_1 - \hat{z}_2 \rangle = -4.32 < 0$, a strict violation of cyclical monotonicity.\looseness=-1
\end{example}

\autoref{ex:counterexample} shows via \autoref{prop:wasted_transport} that PA wastes transport costs and generates a suboptimal adversarial map. While we cannot hope to solve the nonconvex problem~\eqref{eq:maximize-over-maps} to global optimality, we can efficiently improve any suboptimal map~$T$ by forcing its Monge gap to~0. To see this, assume that $\hat{\PP} = \tfrac{1}{N}\sum_{i=1}^N \delta_{\hat z_i}$. Then, the Monge gap satisfies
\mbox{$\cM_{\hat\PP}(T)
    = \max_{\sigma \in \cS_N}
    \frac{2}{N}\sum_{i=1}^{N}
    \bigl\langle T(\hat z_{\sigma(i)}) - T(\hat z_i), \hat z_i \bigr\rangle$} 
\citep{uscidda2023monge}. The optimal permutation~$\sigma$ identifies a reassignment of the empirical samples~$\{\hat z_i\}_{i=1}^N$ to the adversarial samples $\{T(\hat z_i)\}_{i=1}^N$, thus inducing an improved map~$T^\sigma$ with a strictly higher objective function value in~\eqref{eq:maximize-over-maps} while ensuring that $T_\#\hat\PP = T^\sigma_\#\hat\PP$. An even better map is obtained by allowing many-to-one assignments rather than bijective permutations, which collapse multiple empirical samples to a common output and thereby modify $T_\#\hat\PP$ itself. Such an assignment can reduce the adversarial error by strictly more than the Monge gap. From now on, we call a map~$T$ assignment-stationary if~$T(\hat z_i)$ attains the largest objective value in problem~\eqref{eq:T_star_argmax} with~$\hat z=\hat z_i$ among all candidate solutions~$\{T(\hat z_j)\}_{j=1}^N$. It cannot be improved by changing the output of~$\hat z_i$ to that of~$\hat z_j$ for any~$j\neq i$. 

\begin{definition}[Assignment-stationarity]
\label{def:assignment_stationary}
A map \(T \in \mathcal{T}\) is \emph{assignment-stationary} on
\(\{\hat z_i\}_{i=1}^N\) if 
\begin{equation}
\label{eq:assign-stat}
f(\theta,T(\hat z_i))-\lambda\|T(\hat z_i)-\hat z_i\|_2^2
\;\ge\;
f(\theta,T(\hat z_j))-\lambda\|T(\hat z_j)-\hat z_i\|_2^2
\qquad
\forall i,j\in[N].
\end{equation}
\end{definition}

Assignment-stationarity can be viewed as a discrete, batch-wide analogue of standard continuous stationarity. Just as an algorithm for solving a single maximization problem should not stop if the objective function can be improved locally by moving along an ascent direction, an algorithm for solving $N$ maximization problems in tandem should not stop if the $i$-th objective function can be improved by jumping to the candidate solution of the $j$-th problem for some $j\neq i$ (even if the $i$-th candidate solution is a local maximizer). Beyond guaranteeing that no sample is trapped in a suboptimal assignment, assignment-stationarity implies cyclical monotonicity.

\begin{lemma}
\label{lem:assign_stat_implies_cm}
If $\hat{\mathbb{P}} = \frac{1}{N}\sum_{i=1}^N \delta_{\hat z_i}$ while \(T \in \mathcal{T}\) is assignment-stationary on \(\{\hat z_i\}_{i=1}^N\), then $T$ is cyclically monotone on \(\{\hat z_i\}_{i=1}^N\).
\end{lemma}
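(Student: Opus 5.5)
The plan is to sum the assignment-stationarity inequalities along an arbitrary permutation. In that sum the loss values cancel, and what is left is exactly the quadratic-cost form of cyclical monotonicity from \autoref{def:cm_functional_map}.

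Fix any $k\in\NN$ and any collection of pairs $(\hat z_{i_1},T(\hat z_{i_1})),\dots,(\hat z_{i_k},T(\hat z_{i_k}))$ from the graph of $T$ on $\{\hat z_i\}_{i=1}^N$. Fix also any permutation $\sigma\in\cS_k$. To simplify notation, relabel the points as $\hat z_1,\dots,\hat z_k$ and write $z_\ell\coloneqq T(\hat z_\ell)$. Repeated indices are harmless here, since \eqref{eq:assign-stat} holds for all pairs $i,j\in[N]$, including $i=j$.

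For each $\ell\in\{1,\dots,k\}$, apply \eqref{eq:assign-stat} with $i=\ell$ and $j=\sigma(\ell)$:
\[
f(\theta,z_\ell)-\lambda\|z_\ell-\hat z_\ell\|_2^2\;\ge\; f(\theta,z_{\sigma(\ell)})-\lambda\|z_{\sigma(\ell)}-\hat z_\ell\|_2^2 .
\]
Now sum these $k$ inequalities over $\ell$. Because $\sigma$ is a bijection of $\{1,\dots,k\}$, we have $\sum_\ell f(\theta,z_{\sigma(\ell)})=\sum_\ell f(\theta,z_\ell)$, so the loss terms cancel exactly. This is the key point: the argument needs no concavity or other regularity of $f$. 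After dividing by $\lambda>0$, what remains is
\[
\sum_{\ell=1}^k\|\hat z_\ell-z_\ell\|_2^2\le\sum_{\ell=1}^k\|\hat z_\ell-z_{\sigma(\ell)}\|_2^2 .
\]
This is precisely the first characterization of cyclical monotonicity in \autoref{def:cm_functional_map}. The equivalent inner-product form follows by expanding the squares and using $\sum_\ell\|z_{\sigma(\ell)}\|_2^2=\sum_\ell\|z_\ell\|_2^2$.

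Since $k$, the chosen pairs, and $\sigma$ were arbitrary, the graph $\{(\hat z_i,T(\hat z_i))\}_{i=1}^N$ is cyclically monotone. I do not expect a real obstacle. The only point needing care is that the definition quantifies over arbitrary finite collections of graph points, possibly with repeats, and not just over permutations of the full set $\{1,\dots,N\}$. The relabeling above handles this, because \eqref{eq:assign-stat} is available for every ordered pair $(i,j)$. As an immediate consequence, the Monge gap formula quoted before \autoref{def:assignment_stationary} yields $\cM_{\hat\PP}(T)=0$.
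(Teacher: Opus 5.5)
Your proof is correct and follows essentially the same approach as the paper, which proves this lemma by repeating the argument for \autoref{prop:optimal_map_cm}: sum the inequalities \eqref{eq:assign-stat} with $j=\sigma(i)$ over $i$, cancel the loss terms, and obtain the quadratic-cost form of cyclical monotonicity. Your explicit handling of arbitrary finite subcollections and repeated indices is a detail the paper leaves implicit.
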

The converse implication fails. A cyclically monotone map need not be assignment-stationary because cyclical monotonicity ignores the loss~$f$ (counterexamples exist even for~$T=\mathrm{id}$).

From \autoref{ex:counterexample} we know that the PA map generically violates cyclical monotonicity and consequently fails to be assignment-stationary (\autoref{lem:assign_stat_implies_cm}). We thus propose Multi-start Particle Ascent (MPA) as a refinement of PA that enforces assignment-stationarity on the batch to which it is applied.

\myparagraph{Multi-Start Particle Ascent (MPA).}
MPA computes the right-hand side of~\eqref{eq:regularized_dro-dual} approximately by constructing an adversarial sample for each empirical sample in a given batch of size~$B$. The algorithm alternates between batch-wide reassignment of each empirical sample~$\hat z_i$ to the best adversarial sample in the current pool~$\mathcal Z=\{z_i\}_{i=1}^B$ and parallel local optimization of the $B$ adversarial samples in~$\cZ$; see the inner loop in Algorithm~\ref{alg:implicit_mpa}, which runs over~$R$ rounds. MPA ends with a final reassignment step to enforce assignment-stationarity and thus cyclical monotonicity on~$\{\hat z_i\}_{i=1}^B$. When $R=1$, the reassignment steps highlighted in blue are disabled, and Algorithm~\ref{alg:implicit_mpa} reduces to PA. When $R>1$, the reassignment steps are activated. Both PA and MPA can be embedded into a projected gradient descent algorithm for training~$\theta$; see the outer loop in Algorithm~\ref{alg:implicit_mpa}, which runs over~$E$ epochs.

\begin{wrapfigure}[18]{r}{0.65\textwidth}
\vspace{-0.8\baselineskip}
\begin{minipage}{0.65\textwidth}
\hrule\vspace{2pt}
\captionsetup{font=footnotesize, justification=raggedright, singlelinecheck=false}
\captionof{algorithm}{Training with implicit adversarial maps}\label{alg:implicit_mpa}
\vspace{-7pt}\hrule\vspace{2pt}
\footnotesize
\begin{algorithmic}
\REQUIRE distribution $\hat\PP$; batch size $B$; initial  iterate $\theta$; step sizes $\alpha$, $\eta$; \#~epochs $E$; \#~inner steps $K$; \#~rounds $R$ ($R\!=\!1$: PA;\ \colorbox{projblue}{$R\!>\!1$: MPA})
\FOR{$e\in[E]$} \vspace{-0.1cm}
    \STATE Sample $\{\hat z_i\}_{i=1}^B \stackrel{iid}{\sim} \hat \PP$, and set $z_i \gets \hat z_i $ for all $i\in [B]$
    \FOR{$r\in[R]$}
            \STATE \colorbox{projblue}{$\mathcal Z \gets \{z_i\}_{i=1}^B$}
            \STATE \colorbox{projblue}{$z_i \gets
                   \arg\max_{z\in\mathcal Z} f(\theta\!,z)
                   - \lambda\|z-\hat z_i\|_2^2\ \;\text{for all} \; i\in[B]$}
            \STATE \textbf{do} $K$ \textbf{steps of} $z_i \gets z_i + \eta\left(\nabla_z f(\theta\!,z_i)
                - 2\lambda(z_i-\hat z_i)\right)$ for all $i\in[B]$
    \ENDFOR
    \STATE \colorbox{projblue}{$\mathcal Z \gets \{z_i\}_{i=1}^B$}
    \STATE \colorbox{projblue}{$z_i \gets
           \arg\max_{z\in\mathcal Z} f(\theta\!,z)
           - \lambda\|z-\hat z_i\|_2^2\ \;\text{for all} \; i\in[B]$}
    \STATE $\theta\!\gets\! \mathrm{Proj}_\Theta\!\bigl(\theta
            - \tfrac{\alpha}{B}\!\sum_{i=1}^B\! \nabla_\theta f(\theta\!,z_i)\bigr)$
\ENDFOR
\ENSURE $\theta$
\end{algorithmic}
\vspace{2pt}\hrule
\end{minipage}
\end{wrapfigure}

MPA is reminiscent of multi-start heuristics in non-convex optimization. Indeed, a standard multi-start heuristic launches several local searches from different starting points. Their purpose is to explore different regions (or basins) of the objective landscape. The resulting solutions form a pool of alternatives, from which one can select the best solution. MPA extends this familiar idea from a single optimization problem to a family of $B$ coupled optimization problems. During reassignment, the local maxima discovered by the different problems using PA are shared as candidate starting solutions. Thus, each problem is no longer confined to the basin reached from its own empirical sample. Instead, it can jump to promising basins discovered by other samples.

MPA has several striking properties. First, as the inner loop of Algorithm~\ref{alg:implicit_mpa} is followed by a reassignment step, all adversarial maps generated by MPA are guaranteed to be assignment-stationary and therefore also cyclically monotone on the batch (see \autoref{lem:assign_stat_implies_cm}). Second, if each local optimization phase is run over $K=\infty$ many iterations, which ensures that a stationary point is reached, then the inner loop of Algorithm~\ref{alg:implicit_mpa} is guaranteed to converge after finitely many rounds. 

\begin{theorem}[Finite termination of MPA (informal version of \autoref{thm:mpa_finite})]
\label{thm:mpa_finite_termination_informal}
If~$K=\infty$, then the adversarial samples $\{z_i\}_{i=1}^B$ generated by MPA cease to change after finitely many rounds.
\end{theorem}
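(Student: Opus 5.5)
Finite termination will follow from a potential-function argument. The potential is the batch objective
\[
\Phi(z_1,\dots,z_B) \coloneqq \sum_{i=1}^B \phi_i(z_i),\qquad \phi_i(z)\coloneqq f(\theta,z)-\lambda\|z-\hat z_i\|_2^2 ,
\]
evaluated after each phase of the inner loop of Algorithm~\ref{alg:implicit_mpa}. Two facts drive the argument: $\Phi$ is nondecreasing along the iterates, and it can take only finitely many values at the ends of the rounds.

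\textbf{Monotonicity.} In the reassignment step, each $z_i$ is replaced by $\arg\max_{z\in\mathcal Z}\phi_i(z)$. Since the current $z_i$ itself lies in $\mathcal Z$, no $\phi_i$ decreases, and hence $\Phi$ does not decrease. I will fix a deterministic tie-breaking rule: keep the current $z_i$ whenever it is among the maximizers. Then $z_i$ changes only if $\phi_i$ strictly increases. In the local phase with $K=\infty$, each particle follows gradient ascent on $\phi_i$ from its current position. Under a standard step-size condition $\eta\le 1/(L_{zz}+2\lambda)$, each step is an ascent step, so $\phi_i$ is nondecreasing. The phase converges to a stationary point $s_i(z_i)$ of $\phi_i$, a limit I will take as part of the formal assumptions of \autoref{thm:mpa_finite}. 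Because $\phi_i$ is $-2\lambda$-strongly concave up to the bounded-curvature term $f$, and $f$ is Lipschitz, $\phi_i$ is coercive. Consequently the iterates stay bounded and the limit exists along a subsequence; I will assume convergence outright.

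\textbf{Finiteness of values.} After round $1$, every $z_i$ is a limit point of gradient ascent on some $\phi_i$. I will restrict attention to the set of possible outputs $\mathcal L_i=\{s_i(w): w\in\mathcal W\}$, where $\mathcal W$ is the finite set of points that can ever enter the pool. To make $\mathcal W$ finite, I will define it inductively. The pool after reassignment consists of previous outputs, and a point $z_j$ that is already stationary for $\phi_j$ remains fixed under further ascent on $\phi_j$. The delicate case is a point moved from index $j$ to index $i$: gradient ascent on $\phi_i$ started there produces a new point. The key observation is that each round's pool is the image of the previous pool under the deterministic maps $s_i$ applied after an assignment $\{1,\dots,B\}\to\{1,\dots,B\}$. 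Since the assignment at each round is determined by the current configuration, the process is a deterministic dynamical system, so the configuration sequence is determined by its start.

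To get termination rather than mere eventual periodicity, I will combine this with strict increase. Any round in which some $z_i$ changes must strictly increase $\Phi$. For the reassignment step this follows from the tie-breaking rule. For the local phase, a point that was stationary for $\phi_i$ stays put, so movement only occurs after a reassignment, which already gave a strict increase. The main obstacle is showing that the configurations form a finite set, so that a strictly increasing $\Phi$ cannot increase forever. First I will argue that every configuration is built from at most $B$ starting points through compositions of the maps $s_i$ and reassignments. I will then bound the depth of these compositions: once $z=s_i(w)$ is stationary for $\phi_i$, applying $s_i$ again returns $z$, and $s_j$ applied to it yields a point whose $\phi_j$-value exceeds $\phi_j(z)$.

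Failing a clean combinatorial bound, I will fall back on the formal statement's hypothesis, presumably that each $\phi_i$ has finitely many stationary points or critical values, as for generic or analytic $f$. Under that hypothesis every $z_i$ after round one lies in the finite set of critical points of $\phi_i$, so the configurations range over a finite product set. Then $\Phi$ takes finitely many values, strictly increases whenever anything changes, and therefore stabilizes after finitely many rounds, after which the $z_i$ never change.
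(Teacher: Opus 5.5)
Your fallback argument is essentially the paper's proof of \autoref{thm:mpa_finite}. After round one every $z_i$ is stationary for $\phi_i$, neither the self-favoring reassignment nor ascent lowers any $\phi_i$, every round that changes something strictly raises some $\phi_i$, and finiteness of the critical values bounds the number of such rounds. The paper tracks the vector $(\phi_1(z_1),\dots,\phi_B(z_B))$ instead of your sum $\Phi$, which gives the explicit bound $1+\sum_i(|f_i(\cM_i)|-1)$.

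The two ingredients you assume, namely convergence of each ascent phase and finiteness of the critical values, are exactly what the paper derives from real analyticity in \autoref{lem:ascent_convergence}, \autoref{lem:compact_analytic_subset} and \autoref{cor:finite_critical_values}. There $\cM_i$ is a compact analytic set with finitely many connected components, and $f_i$ is constant on each component. Analyticity gives finitely many critical values but not finitely many critical points, so drop the claim that configurations range over a finite product set and argue through the finitely many values of $\Phi$, as your last step already does. Your earlier combinatorial route would not close without such a finiteness hypothesis.
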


Third, MPA guarantees global optimality whenever the batch of empirical samples is sufficiently~large.

\begin{theorem}[Global optimality guarantee of MPA (informal version of \autoref{thm:mpa_global_optimality})]
\label{thm:mpa_global_optimality_informal}
Suppose that the empirical samples are drawn independently from a distribution with an everywhere positive density. Then, for every tolerance~$\epsilon>0$, the implicit adversarial maps generated by MPA are $\epsilon$-optimal in~\eqref{eq:maximize-over-maps} with probability at least~$1-\beta(B,\epsilon)$, where~$\beta(B,\epsilon)>0$ converges to~$0$ as~$B$ grows.
\end{theorem}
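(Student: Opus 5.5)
The plan is to exploit the fact that the optimal map $T^\star$ only needs to be \emph{found} by one particle, after which the final reassignment step spreads it to all samples whose global maximizer lies nearby. I would therefore fix $\theta$ and $\epsilon>0$, and reduce $\epsilon$-optimality in~\eqref{eq:maximize-over-maps} to a statement about the batch. This batch statement is that, for every $i\in[B]$, the final pool $\mathcal Z$ contains a point $z$ with
\[
\phi_i(z)\coloneqq f(\theta,z)-\lambda\|z-\hat z_i\|_2^2 \;\ge\; \max_{z'}\phi_i(z')-\epsilon .
\]
Because the final reassignment picks the best element of $\mathcal Z$ for each $\hat z_i$, the output satisfies this bound sample-wise. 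Averaging over the batch then gives adversarial error at most $\epsilon$, interpreting the map on the batch as in the rest of the section.

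The key deterministic step is a continuity argument for the value function. Let $V(\hat z)=\max_z\{f(\theta,z)-\lambda\|z-\hat z\|_2^2\}$. For any $\hat z,\hat z'$ and $z\in S(\hat z')$, we have
\[
\phi_{\hat z}(z)\ge V(\hat z')-\lambda\bigl|\|z-\hat z\|_2^2-\|z-\hat z'\|_2^2\bigr|.
\]
Since $f$ is $L_z$-Lipschitz in $z$, maximizers satisfy $\|z-\hat z'\|_2\le L_z/(2\lambda)$. Hence the right-hand side is at least $V(\hat z')-\lambda\|\hat z-\hat z'\|_2\,(L_z/\lambda+\|\hat z-\hat z'\|_2)$. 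Together with $|V(\hat z)-V(\hat z')|\le$ a similar bound, this gives a radius $r(\epsilon)>0$ with the following property: if $\|\hat z_i-\hat z_j\|_2\le r(\epsilon)$ and $z_j$ is a global maximizer (or a near-maximizer) for $\hat z_j$, then $z_j$ is $\epsilon$-optimal for $\hat z_i$.

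Next I need some particles that actually land at global maximizers. This is where I expect the main obstacle: PA from a generic start only reaches a local maximum. My plan is to use the basin of attraction of the global maximizer. Fix a compact set $\mathcal K$ carrying most of the sampling mass, and for each $\hat z\in\mathcal K$ consider the set $A(\hat z)$ of initializations $\hat z'$ near $\hat z$ whose PA trajectory (with $K=\infty$, run on the objective anchored at $\hat z'$) reaches an $\epsilon/2$-optimal point for $\hat z'$.

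I would argue that $A(\hat z)$ has positive Lebesgue measure uniformly over $\mathcal K$, bounded below by some $p(\epsilon)>0$. The natural route is via points where the anchor itself lies in the basin of the global maximizer, e.g.\ $\hat z$ near $T^\star(\hat z)$-type fixed points. Failing that, I would state this uniform positive-probability condition as the precise assumption in the formal version (\autoref{thm:mpa_global_optimality}), since something of this kind is clearly needed.

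Given that, the probabilistic step is a covering argument:
\begin{itemize}
\item Cover $\mathcal K$ by $M(\epsilon)$ balls of radius $r(\epsilon)/2$.
\item Since the density is everywhere positive, each ball contains, with probability at least $q(\epsilon)>0$ per draw, a sample whose own PA run is $\epsilon/2$-optimal.
\item A union bound gives failure probability at most $M(1-q)^B+\Pr(\text{some sample}\notin\mathcal K)$.
\end{itemize}
This defines $\beta(B,\epsilon)$, which tends to $0$ as $B$ grows. For samples outside $\mathcal K$, I would send the mass of $\mathcal K^c$ to zero jointly, splitting $\epsilon$ between the two contributions.

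Finally, I would note that later reassignment rounds only increase each $\phi_i(z_i)$. Together with \autoref{thm:mpa_finite_termination_informal} ensuring termination, the good candidates found in round one persist, or are replaced by better ones, in the final pool.
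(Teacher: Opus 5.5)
There is a genuine gap, and it is the step you flag yourself. Your covering argument needs, in every ball of radius $r(\epsilon)/2$ meeting $\mathcal K$, a set of anchors of uniformly positive mass whose \emph{own} ascent run ends $\epsilon/2$-optimal. Nothing in the hypotheses gives this, and it is false in general. In one dimension, let $f(\theta,\cdot)$ have a low, sharp bump at $0$ and a much taller bump at some $d>0$, with penalized height $H-\lambda d^2$ exceeding the small bump's height. For every anchor $\hat z$ in a neighbourhood of $0$, the penalized objective has a local maximizer near $0$ whose basin contains $\hat z$, while its global maximizer lies near $d$. So $A(\hat z)=\emptyset$ there. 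Since anchors near $d$ are far from $0$, your continuity-in-the-anchor lemma (which is correct) cannot transfer their good points either.

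Promoting the condition to an assumption would prove a strictly weaker result than \autoref{thm:mpa_global_optimality}, which assumes only smoothness, the step-size bound and a positive density. The condition would not even transfer from PA to MPA. Algorithm~\ref{alg:implicit_mpa} reassigns \emph{before} the first ascent, so in round one particle $j$ ascends from $\mathcal{R}(\widehat{\mathbf z})_j$, not from $\hat z_j$, and \autoref{rem:mpa_vs_pa} shows this can end in a worse basin than PA. Likewise, your closing appeal to finite termination requires $K=\infty$ and analyticity, which the formal theorem does not use.

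The missing idea is that no particle ever needs to climb to a global maximizer. The pool in the first reassignment is the batch of empirical samples itself. Monotonicity of $\mathcal{R}$ and $\mathcal{A}^K$ (\autoref{lem:mpa_operators}) then gives $f_i(T_{\mathrm{mpa}}(\hat z_i))\ge\max_{j}f_i(\hat z_j)$, regardless of $K$ and $R$. You should therefore cover the \emph{maximizer locations} $T^\star(\hat z_i)$ rather than the anchors. If some $\hat z_j$, $j\neq i$, lies within $r_\epsilon=\sqrt{\epsilon/(L_{zz}+2\lambda)}$ of $T^\star(\hat z_i)$, the quadratic lower bound at a zero-gradient global maximizer (\autoref{lem:suboptimality_level_sets}) makes $\hat z_j$ itself $\epsilon/2$-optimal for sample $i$. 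In the example above, samples near $d$ directly serve the anchors near $0$. Since $\|T^\star(\hat z_i)-\hat z_i\|_2\le L_z/(2\lambda)$, most maximizers lie in $B_{R_\epsilon}(0)$, where every $r_\epsilon$-ball has $\PP_0$-mass at least $\underline p(\epsilon)>0$.

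Your tail handling also needs repair. For a fixed compact $\mathcal K$ we have $\PP_0(\mathcal K^c)>0$, so the probability that some $\hat Z_i\notin\mathcal K$ equals $1-(1-\PP_0(\mathcal K^c))^B\to1$. A union bound demanding that \emph{every} index be good therefore cannot produce $\beta\to0$. You must instead bound the \emph{fraction} of bad indices. The paper uses Markov plus Chernoff for outliers with $\|T^\star(\hat Z_i)\|_2>R_\epsilon$, and Markov with $\underline p(\epsilon)$ for uncovered inliers. Each bad index is then charged the crude gap $f_i^\star-f_i(\hat Z_i)\le L_z^2/(4\lambda)$, which is legitimate because MPA never does worse than the anchor itself.
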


Intuitively, \autoref{thm:mpa_global_optimality_informal} holds because MPA resembles a multi-start heuristic and because sufficiently many starting points enable such methods to solve nonconvex optimization problems globally. However, adding a sample to the batch not only enlarges the pool of candidate starting points but also introduces a new global optimization problem, which complicates the proof of \autoref{thm:mpa_global_optimality_informal}. Notably, the global optimality guarantee of MPA holds for {\em every} penalty parameter~$\lambda>0$, thereby removing the restriction~$\lambda>L_{zz}/2$ required for PA to be exact~\citep{sinha2020certifying}. See~\autoref{app:MPA_properties} for further details.

\vspace{-0.3cm}
\section{Explicit Adversarial Maps} 
\vspace{-0.3cm}
\label{sec:explicit_maps}
To overcome the transductive nature of MPA, which constructs an implicit adversarial map restricted to the given batch of training samples and prevents generalization to unseen data, we now construct \emph{explicit} adversarial maps of the form $T_\omega\in\mathcal T$ parametrized by~$\omega\in\Omega$, where~$\Omega$ is a Euclidean space. Restricting~$\mathcal T$ in~\eqref{eq:maximize-over-maps} to this parametric family allows us to work with the parametric objective function
\begin{equation*} \label{eq:explicit_objective}
     \mathcal{L}(\theta;\omega)\coloneqq\EE_{\hat \PP} \left[\, \ell(\theta, \omega,\hat Z) \coloneqq f\bigl(\theta, T_\omega(\hat{Z})\bigr) - \lambda\bigl\|T_\omega(\hat{Z}) - \hat{Z}\bigr\|_2^2\,\right].
\end{equation*}
To jointly learn the parameters~$\theta$ and~$\omega$ of the model and the adversarial map, respectively, we employ an alternating optimization scheme consisting of two nested loops. The inner loop applies $K$ gradient-ascent steps in~$\omega$, while the outer loop applies $E$ projected gradient descent steps in~$\theta$. Here, the outer gradient $\nabla_\theta \max_{\omega\in\Omega} \mathcal L(\theta;\omega)$ is computed approximately via Danskin's theorem, treating the current iterate~$\omega$ as an approximate maximizer. We formalize this procedure in Algorithm~\ref{alg:explicit_maps}. 

\begin{wrapfigure}[10]{r}{0.45\textwidth}   
\vspace{-0.9\baselineskip}
\begin{minipage}{0.45\textwidth}
\hrule\vspace{2pt}
\captionsetup{font=footnotesize, justification=raggedright, singlelinecheck=false}
\captionof{algorithm}{Training with explicit adversarial maps}
\label{alg:explicit_maps}
\vspace{-7pt}\hrule\vspace{2pt}
\footnotesize
\begin{algorithmic}
\REQUIRE distribution $\hat\PP$; batch size $B$; initial iterates $\theta,\omega$; 
         step sizes~$\alpha, \eta$; \#~epochs $E$; \#~inner steps $K$
\FOR{$e \in[E]$} \vspace{-0.1cm}
    \STATE Sample $\{\hat z_i\}_{i=1}^B \stackrel{iid}{\sim} \hat \PP$
    \STATE  \textbf{do} $K$ \textbf{steps of } $\omega \!\gets\!\omega + \frac{\eta}{B}\sum_{i=1}^{B} \nabla_\omega \ell(\theta,\omega,\hat{z}_i)$
    \STATE $z_i \!\gets\! T_{\omega}(\hat{z}_i) $ for all $i\in[B]$
    \STATE $\theta \!\gets\! \mathrm{Proj}_\Theta\!\bigl(\theta \!-\! \tfrac{\alpha}{B}\!\sum_{i=1}^B\! \nabla_\theta f(\theta, z_i)\bigr)$
\ENDFOR
\ENSURE $\theta$
\end{algorithmic}
\vspace{2pt}\hrule
\end{minipage}
\end{wrapfigure}

It is tempting to model the explicit adversarial maps as unconstrained neural networks (\textit{e.g.}, generative adversarial networks~\cite{goodfellow2014generative} or neural transport maps for worst-case generation~\cite{cheng2025worst}). However, such standard networks typically violate cyclical monotonicity, which is a known property of the globally optimal map. It is thus expedient to enforce cyclical monotonicity in the neural network architecture. By a classical result in convex analysis, a map~$T\in\mathcal T$ is cyclically monotone if and only if it lies in the subdifferential of a convex function. 

\begin{theorem}[{Convex-potential representation, \citep[Theorem~2.5.3]{figalli2021invitation}}]\label{prop:convex-potential-representation}
A transport map $T \in\mathcal T$ is cyclically monotone if and only if there exists a convex potential function $\psi: \RR^m \to \RR \cup \{+\infty\}$ such that $T(\hat{z}) \in \partial \psi(\hat{z})$ for every $\hat z\in \RR^m$. In particular, if $\psi$ is differentiable at $\hat{z}$, then $T(\hat{z}) = \nabla \psi(\hat{z})$.
\end{theorem}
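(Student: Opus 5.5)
The argument is Rockafellar's construction for the hard direction and the subgradient inequality for the easy one. We use the inner-product form of cyclical monotonicity from \autoref{def:cm_functional_map}: for pairs $(\hat z_i, z_i)$ with $z_i=T(\hat z_i)$, the sum $\sum_i\langle z_{i+1}-z_i,\hat z_i\rangle$ over any cycle is nonpositive. Applying the definition with a cyclic permutation $\sigma(i)=i+1$ (indices mod $k$) and rearranging gives the equivalent form
\[
\sum_{i=1}^k \langle z_i,\hat z_{i+1}-\hat z_i\rangle\le 0 .
\]
This is the form used below.

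\emph{Sufficiency.} Suppose $T(\hat z)\in\partial\psi(\hat z)$ with $\psi$ convex. Then $\psi(\hat z_i)<\infty$, and the subgradient inequality gives
\[
\psi(\hat z_{\sigma(i)})\ge \psi(\hat z_i)+\langle T(\hat z_i),\hat z_{\sigma(i)}-\hat z_i\rangle .
\]
Sum this over $i$. Since $\sigma$ is a permutation, the $\psi$ terms cancel, leaving $\sum_i\langle z_i,\hat z_{\sigma(i)}-\hat z_i\rangle\le 0$. Using $\|\hat z_i - z_{\sigma(i)}\|^2$ expansions, or reindexing by $\sigma^{-1}$, this is exactly the inequality in \autoref{def:cm_functional_map}.

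\emph{Necessity.} Fix a base point $\hat z_0\in\RR^m$ and define
\[
\psi(x)\coloneqq\sup\Bigl\{\langle T(\hat z_k),x-\hat z_k\rangle+\sum_{i=0}^{k-1}\langle T(\hat z_i),\hat z_{i+1}-\hat z_i\rangle\Bigr\},
\]
where the supremum runs over all $k\in\NN$ and all chains $\hat z_1,\dots,\hat z_k\in\RR^m$.

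\begin{enumerate}
\item $\psi$ is convex and lower semicontinuous with values in $\RR\cup\{+\infty\}$, because it is a supremum of affine functions of $x$.
\item $\psi(\hat z_0)\le 0$. Setting $x=\hat z_0$ closes each chain into a cycle, and the cyclic-sum inequality above bounds the resulting expression by $0$. Hence $\psi\not\equiv+\infty$.
\item For any $\hat z$ and any $y$, take a chain ending at $x=\hat z$ with value close to $\psi(\hat z)$, and extend it by appending $\hat z_{k+1}=\hat z$. This gives $\psi(y)\ge \psi(\hat z)+\langle T(\hat z),y-\hat z\rangle-\delta$ for every $\delta>0$, whenever $\psi(\hat z)<\infty$.
\end{enumerate}

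The main obstacle is showing $\psi(\hat z)<\infty$ at every $\hat z$, since the chain-extension step in (3) only yields a subgradient where $\psi$ is finite. My plan is to apply the inequality of (3) with $y=\hat z_0$ and argue as follows.
\begin{itemize}
\item Every chain ending at $\hat z$, once extended by $\hat z_{k+1}=\hat z$ and evaluated at $x=\hat z_0$, is a cycle and so has value $\le 0$.
\item Therefore $\psi(\hat z)+\langle T(\hat z),\hat z_0-\hat z\rangle\le\psi(\hat z_0)\le 0$.
\item This gives $\psi(\hat z)\le\langle T(\hat z),\hat z-\hat z_0\rangle<\infty$.
\end{itemize}
Once $\psi(\hat z)$ is finite, step (3) with $\delta\downarrow 0$ gives $T(\hat z)\in\partial\psi(\hat z)$ for every $\hat z$.

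Finally, if $\psi$ is differentiable at $\hat z$, then $\partial\psi(\hat z)=\{\nabla\psi(\hat z)\}$, which forces $T(\hat z)=\nabla\psi(\hat z)$.
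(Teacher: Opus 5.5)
Your proposal is correct and is the standard Rockafellar argument: summed subgradient inequalities give sufficiency, and the chain-supremum potential anchored at $\hat z_0$ gives necessity. That is the standard proof behind the cited result; the paper gives no proof of its own and simply defers to Figalli--Glaudo. The only slip is cosmetic: the cyclic form $\sum_i \langle z_i,\hat z_{i+1}-\hat z_i\rangle\le 0$ comes from applying the definition with $\sigma(i)=i-1$ rather than $i+1$, which is immaterial since all permutations are allowed. Your bound $\psi(\hat z)\le\langle T(\hat z),\hat z-\hat z_0\rangle$ correctly shows that the potential is real-valued, because $T$ is defined on all of $\RR^m$.
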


\autoref{prop:convex-potential-representation} is the key to enforce cyclical monotonicity in explicit adversarial maps. Specifically, we can define $T_\omega \coloneqq \nabla_z \psi_\omega$, where the potential function $\psi_\omega \colon \mathbb{R}^m \to \mathbb{R}$ is modeled as an Input Convex Neural Network (ICNN)~\citep{amos2017input}. ICNNs are scalar-valued neural networks guaranteed to be convex in their input~$\hat{z}$ (see, {\em e.g.}, \citep[Proposition~1]{amos2017input}). Thus, the map~$T_\omega$ induced by~$\psi_\omega$ is cyclically monotone. We henceforth refer to any such~$T_\omega$ as an \emph{ICNN map}. Note that ICNNs represent universal approximators for convex functions~\citep[Theorem~1]{chen2019optimalcontrolneuralnetworks}. More precisely, any Lipschitz convex potential~$\psi$ on a compact set can be approximated to arbitrary absolute precision by an ICNN. Below we describe the exact architecture and initialization used to train ICNN maps in our experiments.
\looseness=-1

\myparagraph{Architecture.}
Among existing implementations of ICNN-based Brenier maps (see, {\em e.g.}, \citep{makkuva2020optimal,korotin2021neural,vesseron2025neuralimplementationbrenierspolar}), we adopt the convex-potential construction proposed in~\cite{vesseron2025neuralimplementationbrenierspolar} and adapt it to our setting. We model only the scalar potential function $\psi_\omega \colon \mathbb{R}^m \to \mathbb{R}$, and evaluate the ICNN map $T_\omega = \nabla_z \psi_\omega$ on the fly via automatic differentiation. With $L$ hidden layers, $\psi_\omega$ is defined through the recursions \looseness=-1
\begin{equation}
\tag{ICNN}
\label{ICNN Architecture}
\begin{aligned}
    y_1 &= \sigma\!\left(W^z_0\, z + b_0 \right), \\
    y_{\ell+1} &= \sigma\!\left( \exp(W^y_\ell)\, y_\ell + W^z_{\ell}\, z + b_{\ell} \right) \qquad \ell = 1,\dots,L-1, \\
    \psi_\omega(z) &= \exp(w^y_L)^{\!\top} y_L \;+\; \tfrac12\, z^{\top}\!\left( \operatorname{diag}(\delta_L^2) + A_L^{\top} A_L \right) z \;+\; (w^z_L)^{\top} z \;+\; b_L,
\end{aligned}
\end{equation}
where the exponential function $\exp(\cdot)$ is applied component-wise. The full parameter vector is
\[
\omega \;=\; \Big(
\;\{W^z_\ell\}_{\ell=0}^{L-1},\, w^z_L,
\delta_L, A_L,
\;\{W^y_\ell\}_{\ell=1}^{L-1},\, w^y_L,
\;\{b_\ell\}_{\ell=0}^{L}
\Big)
\]
with $W^z_\ell \in \mathbb{R}^{q_\ell \times m}$, $w^z_L, \delta_L \in \mathbb{R}^m$, $A_L \in \mathbb{R}^{r_{\mathrm{out}}\times m}$, $W^y_\ell \in \mathbb{R}^{q_\ell \times q_{\ell-1}}$, $w^y_L \in \mathbb{R}^{q_{L-1}}$, and $b_\ell \in \mathbb{R}^{q_\ell}$, where $q_\ell$ denotes the number of neurons in the $\ell$-th layer ($q_L=1$). By construction, $\psi_\omega$ is convex in $z$ for all~$\omega$. Convexity is preserved across layers through affine transformations $W^z_\ell z + b_\ell$, convex non-decreasing activation functions $\sigma$ (\textit{e.g.}, Softplus), and elementwise positive weights connecting each layer to the next. The quadratic term in the last layer is also convex because $\operatorname{diag}(\delta_L^2) + A_L^\top A_L\succeq 0$. Thus, the induced map $T_\omega = \nabla_z \psi_\omega$ is cyclically monotone for every~$\omega$. The original formulation proposed in~\cite{vesseron2025neuralimplementationbrenierspolar} incorporates positive semidefinite quadratic forms from the input to every layer. However, incorporating such quadratic forms at every layer is computationally expensive. We therefore restrict the quadratic terms to the final readout layer, keeping the hidden-layer passthroughs purely affine. \looseness=-1

Our architecture's capacity is primarily controlled by the last-layer readout rank $r_{\mathrm{out}}$. As we show in \autoref{sec:design_axes}, restricting the quadratic term to the last layer while increasing $r_{\mathrm{out}}$ achieves the best robust performance and substantially reduces runtime compared with incorporating a quadratic term at every layer. Furthermore, ICNNs are notoriously difficult to train because their nonnegative weights can cause standard zero-mean initializations to propagate unstably across layers. We address this challenge with a two-part initialization strategy. First, we use a principled moment-matching scheme to stabilize the variance of the forward and backward signals across depth. Second, our single-quadratic architecture allows us to initialize the overall gradient map $T_\omega$ near the identity, providing an effective warm start. Finally, to maintain training stability, we use an adaptive inner step size that dynamically adjusts to the local curvature of the network. We detail these training mechanisms in \autoref{app:icnn-training}. \looseness=-1

\vspace{-0.4cm}
\section{Experiments}
\vspace{-0.25cm}
\label{sec:experiment}
We benchmark our two new methods, namely the implicit multi-start particle ascent approach (MPA) and the explicit ICNN map approach (ICNN-DRO), against the following six baselines. Empirical Risk Minimization (ERM) solves problem~\eqref{ERM Problem} using SGD. Robust Optimization~(RO) solves~\eqref{prob:robust-opt} as in~\cite{madry2018towards}. Particle Ascent (PA) solves~\eqref{prob:regularized_dro} using Algorithm~\ref{alg:implicit_mpa} with~$R=1$. Neural Network DRO (NN-DRO) solves~\eqref{prob:regularized_dro} using Algorithm~\ref{alg:explicit_maps} with the explicit transport map modeled as a Multi-Layer Perceptron (MLP)~\cite{cheng2025worst} {\em without} enforcing cyclical monotonicity.\footnote{NN-DRO is related to \cite{cheng2025worst}, which models the transport map as an MLP trained {\em indirectly} via least-squares regression on adversarial samples generated by PA. In contrast, NN-DRO trains the MLP {\em directly} using Algorithm~\ref{alg:explicit_maps}.}  Finally, Sinkhorn DRO (SDRO) \cite{wang2021sinkhorn} and Wasserstein-Fisher-Rao DRO (WFR) \cite{xu2025gradientflowsamplerbaseddistributionally} solve an entropy-regularized optimal transport variant of problem~\eqref{prob:regularized_dro} (recovering~\eqref{prob:regularized_dro} as the regularization weight vanishes) using sampling methods. The two methods differ in how they sample from the worst-case distribution, which is known to be a Gibbs distribution \cite[Remark~4]{wang2021sinkhorn}. 
\looseness=-1

Our experiments span three domains of increasing complexity: feature-space logistic regression in~\S~\ref{sec:Adversarial_Logistic Regression}; image-space logistic regression under adversarial attacks, domain shifts, and image corruptions in \S~\ref{sec:Robustness_AA_Full}; and robust control in \S~\ref{sec:exp_rl_main}. Additional results for uncertain least squares and cross-model universality are provided in \S~\ref{sec:exp_robust_ls} and \S~\ref{sec:cross-model-universality}, respectively.\looseness=-1

\vspace{-0.3cm}
\subsection{Adversarial Multi-class Logistic Regression}\label{sec:Adversarial_Logistic Regression}
\vspace{-0.2cm}

\begin{wrapfigure}[17]{r}{0.4\textwidth}
    \vspace{-0.3cm}
    \includegraphics[width=\linewidth]{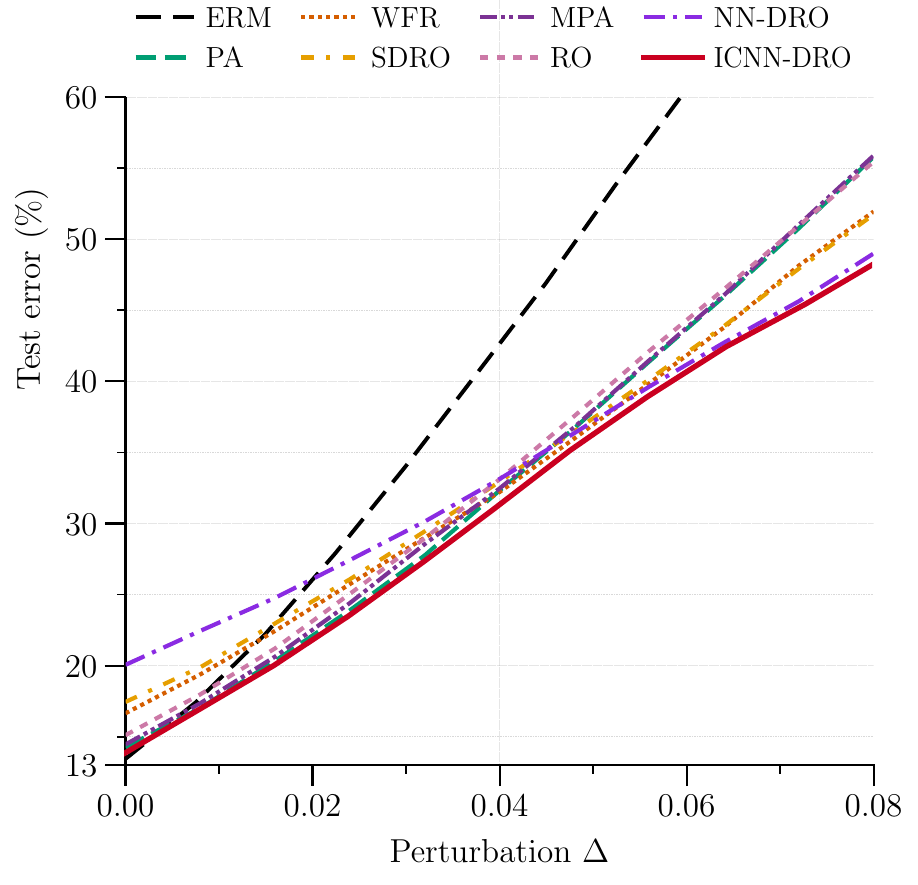}
    \caption{Test error vs.\ perturbation level $\Delta$ on multi-class logistic regression. \looseness=-1}
    \label{fig:Combined_Logistic_regression}
\end{wrapfigure}%

In the first experiment, we follow the setup of~\cite[\S~6.3]{xu2025gradientflowsamplerbaseddistributionally} to assess the adversarial robustness of a multi-class logistic regression model trained on features extracted from the CIFAR-10 dataset~\cite{CIFAR-10}. Specifically, each image is mapped to a feature vector~$x \in \mathbb{R}^{m-1}$ with $m-1=512$ using a frozen ResNet-50 architecture~\cite{he2015deepresiduallearningimage} pre-trained on ImageNet~\cite{deng2009imagenet}. We then train a linear classifier on these features using the negative log-likelihood loss. For a data point $z=(x,y)$, the loss is given by $f(\theta,(x,y)) \coloneqq - y^\top \theta^\top x + \log(\mathbf{1}^\top \exp(\theta^\top x) )$, where $\theta = [\theta_1,\dots,\theta_C] \in \RR^{(m-1)\times C}$ comprises all model parameters, $y \in \{0,1\}^C$ is the one-hot label vector, and~$C$ is the number of classes. The adversary perturbs only the features~$x$, while the labels~$y$ remain unchanged. \looseness=-1

To evaluate robustness under adversarial distribution shift, we subject the test features to an $\ell_2$-constrained Projected Gradient Descent (PGD) attack. We sweep the relative attack budget $\Delta \in [0, 0.08]$, where the maximum perturbation radius is scaled by the average $\ell_2$-norm of the test features. Under this configuration (full training hyperparameters are provided in~\autoref{app:training_details}), \autoref{fig:Combined_Logistic_regression} reports the test error as a function of $\Delta$. Both explicit map approaches perform well, with \methodname slightly outperforming NN-DRO across most perturbation budgets. At extreme perturbation levels, their test errors coincide. Under such large attack budgets, the adversary shifts probability mass so extensively that the optimal transport geometry no longer acts as a limiting factor, making the ICNN's structural advantages less critical. The key difference emerges on clean samples ($\Delta=0$), where NN-DRO degrades significantly. In contrast, \methodname preserves clean accuracy. \autoref{app:monge_audit} additionally reports the Monge gaps of the trained adversarial maps.
\looseness=-1

\vspace{-0.3cm}
\subsection{Robustness to Adversarial and Natural Distribution Shifts}
\vspace{-0.17cm}
\label{sec:Robustness_AA_Full}
Building on the feature-space analysis of~\autoref{sec:Adversarial_Logistic Regression}, we now transition to full image-space adversarial training on the CIFAR-10 dataset. We train a deep neural network classifier $h_\theta: \RR^{m-1} \to \mathbb{R}^C$ against perturbations injected directly into the raw pixel space ($m-1=32\times32\times3$). For a given data point $z = (x, y)$, where $y \in \{0,1\}^C$ is the one-hot encoded ground-truth label, we optimize the network parameters $\theta$ utilizing the multi-class cross-entropy loss. We formulate this objective as
$f(\theta, (x, y)) \coloneqq- y^\top h_\theta(x) + \log(\mathbf{1}^\top \exp(h_\theta(x)))$.
As in the previous experiment, the adversary exclusively perturbs the covariates~$x$ while the labels~$y$ remain fixed. 
We subject the trained models to a comprehensive evaluation spanning worst-case adversarial attacks, image corruptions, and natural domain shifts to assess their generalizability, and additionally report each map's empirical Monge gap and training runtime (implementation details in \autoref{app:Robustness_AA_Full}).
\looseness=-1

\begin{table*}[!t]
  \centering
  \small
  \renewcommand{\arraystretch}{1.15}
  \setlength{\tabcolsep}{3.5pt}
    \caption{%
    Accuracy (\%) under adversarial attacks, natural domain shifts and image corruptions. The left section reports results for the original CIFAR-10 test set (Clean, PGD, and AA), along with the Monge gap. The center columns evaluate accuracy on the unseen domains of CIFAR-10.1 and CIFAR-10.2. The right section (CIFAR-10-C) details performance across five increasing levels of corruption severity, followed by the mean accuracy across all five levels and the runtime. Bold values indicate the best results, and underlined values denote the second best.}
  \vspace{-0.15cm}
  \resizebox{\linewidth}{!}{%
    \begin{tabular}{@{}l@{\hskip 10pt} cccc @{\hskip 4pt}@{\hskip 4pt} cc @{\hskip 6pt} cccccc @{\hskip 6pt} c@{}}
      \toprule
      \multirow{2.5}{*}{Method}
        & \multicolumn{4}{c}{CIFAR-10}
        & \multicolumn{1}{c}{CIFAR-10.1}
        & \multicolumn{1}{c}{CIFAR-10.2}
        & \multicolumn{6}{c}{CIFAR-10-C (Common Corruption)}
        & \multicolumn{1}{c}{Runtime} \\
      \cmidrule(lr){2-5} \cmidrule(lr){6-6} \cmidrule(lr){7-7}
      \cmidrule(lr){8-13} \cmidrule(lr){14-14}
        & Clean
        & PGD
        & AA
        & Monge Gap
        & Clean
        & Clean
        & 1 & 2 & 3 & 4 & 5
        & Avg
        & (s) \\
      \midrule

      ERM
        & \textbf{93.12} & 14.76 & 00.20 & --- & \textbf{87.12} & \textbf{83.14}
        & \textbf{91.50} & \textbf{87.95} & 83.84 & 77.89 & 68.41 & 81.92 & --- \\
      PA
        & 90.08 & 52.41 & 48.86 & 8.01e-4 & 80.27 & 76.16
        & 87.11 & 84.35 & 80.21 & 74.62 & 69.74 & 79.20 & 1320 \\
      RO
        & 90.05 & 53.37 & 50.26 & 8.06e-4 & 80.28 & 76.81
        & 87.15 & 85.06 & 81.75 & 75.35 & 70.12 & 79.89 & \textbf{659} \\
      WFR
        & 91.03 & 53.28 & 50.37 & 5.63e-3 & 81.24 & 77.09
        & 88.60 & 85.34 & 82.93 & 77.56 & 71.09 & 81.10 & 6396 \\
      SDRO
        & 90.00 & 53.12 & 49.61 & 1.37e-3 & 80.20 & 76.48
        & 87.12 & 84.82 & 80.33 & 74.70 & 70.09 & 79.41 & 6241 \\
      NN-DRO
        & 89.46 & 50.34 & 44.27 & 3.25e-3 & 80.01 & 75.71
        & 85.90 & 82.23 & 78.55 & 72.20 & 67.42 & 77.26 & 11051 \\

      \midrule

      MPA
        & 91.39 & \underline{55.31} & \underline{54.47} & \underline{4.71e-4} & 82.23 & 78.45
        & \underline{89.34} & \underline{87.18} & \textbf{84.47} & \textbf{80.64} & \textbf{74.51}
        & \textbf{83.23} & 3612 \\
      ICNN-DRO
        & \underline{91.41} & \textbf{58.11} & \textbf{57.71} & \textbf{1.01e-7} & \underline{82.56}
        & \underline{80.12}
        & 89.20 & 86.79 & \underline{84.01} & \underline{79.92} & \underline{73.95}
        & \underline{82.77} & 4037 \\

      \bottomrule
    \end{tabular}%
  }
  \label{tab:CIFAR-10-CIFAR101-CIFAR102}
\end{table*}

\myparagraph{Robustness to adversarial attacks.} We evaluate on CIFAR-10 under $\ell_2$ attacks with $\varepsilon{=}0.5$ (standard budget~\citep{croce2020reliableevaluationadversarialrobustness, croce2021robustbench}) using PGD and AutoAttack~(AA)~\cite{croce2020reliableevaluationadversarialrobustness}. AA is a suite of diverse attacks designed to expose the gradient obfuscation failure mode~\cite{athalye2018obfuscatedgradientsfalsesense}, in which a model appears robust to PGD only because its gradients are uninformative. The left block of \autoref{tab:CIFAR-10-CIFAR101-CIFAR102} reports the results. \methodname preserves clean accuracy at $91.41\%$ (within $1.71\%$ of ERM) and attains $58.11\%$ against PGD and $57.71\%$ against AA, substantially outperforming PA and WFR. Crucially, the PGD--AA gap is only $0.4\%$ for \methodname compared to $3.55\%$ for PA. This tight agreement certifies that the robustness gain reflects genuine geometric regularity induced by the Brenier-structured map, rather than an artifact of gradient masking~\cite{athalye2018obfuscatedgradientsfalsesense}~(see~\autoref{app:Robustness_AA_Full}). \looseness=-1

\myparagraph{Robustness to unseen domains.} CIFAR-10.1~\cite{CIFAR101} and CIFAR-10.2~\cite{CIFAR102} provide independent, natural covariate-shift benchmarks free of algorithmic corruption. \autoref{tab:CIFAR-10-CIFAR101-CIFAR102} shows that \methodname achieves $82.56\%$ on CIFAR-10.1 and $80.12\%$ on CIFAR-10.2. Our MPA follows very closely behind at $82.23\%$ and $78.45\%$, respectively. Both of our proposed methods consistently outperform the full suite of established robust baselines. Because these datasets were unseen during training, these gains demonstrate genuine generalization rather than overfitting to specific shift patterns.
\looseness=-1

\myparagraph{Robustness to image corruptions.} We evaluate on CIFAR-10-C~\cite{hendrycks2019benchmarkingneuralnetworkrobustness}, which features $15$ algorithmic corruptions~(\eg, noise, blur, weather effects) at five severity levels. As shown in~\autoref{tab:CIFAR-10-CIFAR101-CIFAR102}, \methodname attains $82.77\%$ average accuracy, outperforming PA~($79.20\%$) and WFR~($81.10\%$). Notably, it exceeds ERM~($81.92\%$), showing that robustification via a global transport map improves resilience to corruptions without sacrificing in-distribution accuracy. The gap widens at the highest severity (level~$5$), where \methodname reaches $73.95\%$ against $69.74\%$ for PA. \looseness=-1

\myparagraph{Monge gap and cost.} \methodname attains a Monge gap of $1.01\mathrm{e}\text{-}7$, nearly four orders of magnitude below every baseline, since it is cyclically monotone by construction, while reassignment alone lowers PA's gap by $38\%$. The two smallest gaps belong to the two most robust methods, as predicted by \autoref{prop:wasted_transport}. Enforcing cyclical monotonicity is also inexpensive: \methodname trains in $4{,}037$s, faster than every other DRO baseline while surpassing all of them on every robustness metric, and only the first-order RO is faster, at a cost of $7.45$ AA points.

\vspace{-0.3cm}
\subsection{Robust Control}
\vspace{-0.28cm}
\label{sec:exp_rl_main}
\begin{wrapfigure}[18]{r}{0.4\textwidth}
    \vspace{-0.6em}
    \centering
    \includegraphics[width=\linewidth]{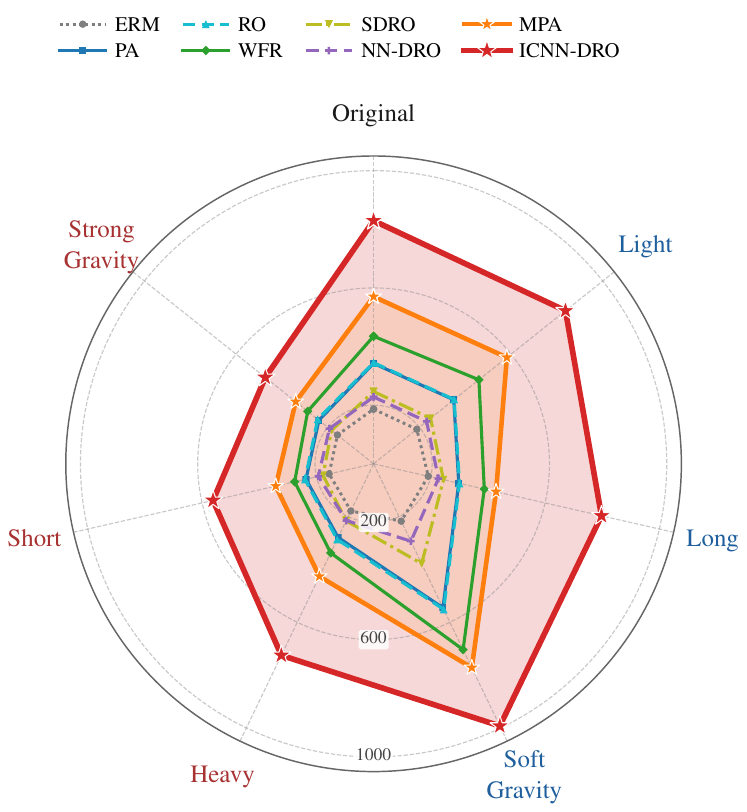}
    \caption{Mean episode length  for $\lambda{=}1$.} 
    \label{fig:radar_RL_main}
\end{wrapfigure}
We evaluate our methods on the cart-pole control task, where the goal is to balance a pole on a cart by applying horizontal actuation. We have parametric physical uncertainty over the pole mass and length, denoted by $z = (m_p, \ell) \in \cZ\subseteq \mathbb{R}^2$, where $\cZ=[0.05, 0.2] \times [0.25, 0.75]$, and draw the training samples uniformly from~$\cZ$. At any time step~$t$, the agent observes a state $s_t\in \mathcal{S} \subseteq \mathbb{R}^4$, capturing the system's position, linear velocity, angle, and angular velocity, and samples an action $a_t \in \mathcal{A}=\{0,1\}$ (corresponding to a left or right impulse) from a parameterized policy $\pi_\theta(a_t \mid s_t)$. The environment then evolves according to known physical dynamics $s_{t+1} = F(s_t, a_t, z)$~\cite{freeman2021brax}. We define the loss function as the negative expected reward $f(\theta, z) \coloneqq  - \mathbb{E}_{\pi_\theta} [ \sum_{t=0}^{H-1} r(s_t, a_t) ]$ over trajectories generated by the dynamics parameterized by $z$.  The adversary perturbs the nominal uncertainty distribution under the regularization parameter $\lambda =1$. We optimize $\theta$ using proximal policy optimization (PPO)~\cite{schulman2017ppo}. \looseness=-1

We evaluate each policy over 1{,}000 deterministic rollouts, each with maximum horizon $H=1{,}000$, and report the mean episode length across these trials in~\autoref{fig:radar_RL_main}. The figure reports mean episode lengths only; a more detailed table including standard errors, together with a sensitivity analysis over
$\lambda \in \{0.1, 0.5\}$ and additional training details, is provided in~\autoref{app:exp_neural_transport_rl}. The evaluation grid spans three categories to test both interpolation and extrapolation. The original configuration captures the nominal physics at the center of the training box. The evaluation also includes Easier Environments (Light, Long, Soft Gravity) and Harder Environments (Heavy, Short, Strong Gravity). Crucially, the harder variants lie outside the training uncertainty box, so they probe extrapolation rather than interpolation of the learned robustness. As shown in \autoref{fig:radar_RL_main},  \methodname achieves the best episode length on every variant, with especially large gains on the extrapolative Heavy, Short, and Strong gravity shifts. MPA is the consistent runner-up and substantially improves over PA, while WFR is the strongest of the remaining baselines but degrades more noticeably on the harder variants. These results suggest that improving the transport geometry in the inner maximization already yields a substantial benefit, and that learning a single global Brenier-structured adversarial map provides an additional robustness gain beyond batch-wise reassignment alone. \looseness=-1

\vspace{-0.3cm}
\section{Conclusions}
\vspace{-0.2cm}
\label{sec:conclusion}
We studied regularized Wasserstein DRO through the geometry of optimal transport and showed that cyclical monotonicity is the key structural property missing from standard per-sample adversarial training. This yields two remedies: MPA, which restores the transport structure implicitly through ascent and reassignment, and ICNN-based adversaries, which enforce it explicitly by parameterizing a Brenier map. Across regression, adversarial training, and robust control, both approaches improve robustness and out-of-distribution generalization over standard baselines.

\myparagraph{Limitations.} The main limitation is computational cost. MPA requires an $\cO(B^2)$ reassignment step per round, while the ICNN adversary introduces an auxiliary network, making each training iteration more expensive than standard single-step adversarial updates~\citep{goodfellow2014explaining}. However, this increased cost is offset by significant robustness benefits.

\newpage
\paragraph{Acknowledgments}
This work was supported as a part of NCCR Automation, a National Centre of Competence in Research, funded by the Swiss National Science Foundation (grant number 51NF40\_225155). We thank Yves Rychener for insightful discussions in the early stages of this work.

\bibliographystyle{plainnat}
\bibliography{ref}

\newpage
\clearpage
\appendix
\onecolumn
\addtocontents{toc}{\protect\setcounter{tocdepth}{2}}
{\centering\LARGE\bfseries Appendix\par}
\vspace{1.5em}
\tableofcontents

\section{Related Work}
The literature most closely related to our work falls naturally into three groups.

\myparagraph{Optimal transport and ICNNs.} 
For any~$\PP,\hat\PP\in\cP$, one can use Kantorovich duality to show that
\begin{equation}
    \label{eq:Kantorovich-dual}
    \frac{1}{2} \wasserstein^2(\mathbb{P}, \hat{\mathbb{P}}) = \frac{1}{2}\EE_{\hat{\mathbb P}}\bigl[\|\hat Z\|^2_2\bigr] + \frac{1}{2}\EE_{\mathbb P}\bigl[\|Z\|^2_2\bigr] - \min_{\psi\ \mathrm{convex}} \left\{ \EE_{\hat{\mathbb P}}\bigl[\psi(\hat Z)\bigr] + \EE_{\mathbb P}\bigl[\psi^*(Z)\bigr] \right\},
\end{equation}
where $\psi^*$ denotes the convex conjugate of~$\psi$. The minimum in~\eqref{eq:Kantorovich-dual} is always attained. Moreover, if $\hat\PP$ is absolutely continuous with respect to Lebesgue measure, then for every minimizer~$\psi$, $\nabla\psi$ coincides $\hat\PP$-almost everywhere with the unique Brenier map from~$\hat\PP$ to~$\PP$ \cite{Brenier1991}. These observations underlie a rich line of work that parameterizes the convex potential~$\psi$ using an ICNN~\citep{amos2017input} and optimizes its parameters, with the conjugate~$\psi^*$ evaluated through an inner maximization~\citep{makkuva2020optimal, korotin2019wasserstein, korotin2021neural, bunne2023supervisedtrainingconditionalmonge, vesseron2025neuralimplementationbrenierspolar}. Our methods in \autoref{sec:explicit_maps} differ from the ICNN-based solution methods for optimal transport problems in several important respects. (i) The target distribution~$\PP$ is not fixed as it solves the inner problem in~\eqref{prob:regularized_dro} and hence varies with~$\theta$. (ii) Our reference distribution~$\hat\PP$ is not required to be absolutely continuous (such as a discrete empirical distribution), and thus Brenier's theorem does not apply. (iii)~Our problem~\eqref{eq:maximize-over-maps} is nonconvex in~$T$ and thus also in the potential~$\psi$, in contrast to the minimization problem in~\eqref{eq:Kantorovich-dual}. We show that restricting the adversary to gradients of convex potentials acts as a structural safeguard against selecting poor local optima---a role absent in the context of solving classical optimal transport problems. (iv) We use ICNNs directly to solve a primal optimization problem over transport maps (see~\eqref{eq:maximize-over-maps}) instead of a dual problem over potential functions. (v) For the optimal transport problem underlying~\eqref{eq:Kantorovich-dual}, cyclical monotonicity of a map is both necessary and sufficient for optimality~\cite[Theorem~2.12]{villani2021topics}, whereas in our setting it is only necessary.
\looseness=-1

\myparagraph{Adversarial training with DRO.} For adversarial training based on problem~\eqref{prob:regularized_dro}, the closest approach to ours is~\citep{sinha2020certifying}, which evaluates the adversarial risk by running independent gradient ascent from each training sample. Other approaches quantify the mismatch between the nominal and target distributions using an entropic optimal transport discrepancy. In this setting, the worst-case distribution is a Gibbs measure~\citep{wang2021sinkhorn}, allowing the adversarial risk to be evaluated by sampling from this measure. An alternative particle-based flow for sampling from the Gibbs measure is proposed in~\citep{xu2025gradientflowsamplerbaseddistributionally}. In all of these approaches, however, adversarial samples are generated independently for each training sample, without enforcing cyclical monotonicity. MPA takes a fundamentally different approach in that it reassigns particles across the batch to enforce this structural property.

\myparagraph{Adversarial transport maps.} The idea of recasting the adversary's problem as an optimization problem over transport maps was first explored in~\citep{sinha2020certifying, xu2024flow, cheng2025worst}. When $\hat{\mathbb P}$ admits a density, \citep{xu2024flow, cheng2025worst} justify this formulation via Brenier's theorem for Monge's optimal transport problem. In~\citep{xu2024flow}, the optimal adversarial map is constructed by training and composing a sequence of neural networks that push the nominal distribution forward to the worst-case distribution. In~\citep{cheng2025worst}, by contrast, the transport map is represented by a single neural network trained via least-squares regression on adversarial samples generated by particle ascent~\citep{sinha2020certifying}. Both approaches model the adversarial map using an \emph{unconstrained} neural network and require $\hat\PP$ to admit a density. We instead constrain the map to be the gradient of an ICNN, making it cyclically monotone by construction, while imposing no density assumption that would exclude an empirical nominal distribution. Moreover, none of these prior works recognize that enforcing cyclical monotonicity can steer the adversarial map away from local maxima, thereby improving the quality of the adversarial samples and, ultimately, the effectiveness of adversarial training.
\looseness=-1

\section{Technical Background Results}
\label{app:Technical_Background}
\paragraph{Blanket assumptions.}
Throughout~\autoref{app:Technical_Background}, we fix $\theta\in\Theta$ and $\lambda>0$, and we assume that $f(\theta,z)$ is $L_z$-Lipschitz continuous in~$z$. We use $\mathcal{P}_0(\RR^m)$ to denote the family of all Borel probability measures on $\RR^m$, which contains the family $\mathcal{P}(\RR^m)$ of all Borel probability measures with finite second moments as a subset. Furthermore, we fix an arbitrary reference measure $\hat{\PP}\in\mathcal{P}(\RR^m)$.

In the remainder of this section, we show that the adversarial map $T^\star$ is well-defined and that the maximum in the definition of $F(\theta)$ is attained by $T^\star_{\#}\hat{\PP}$, which is guaranteed to have finite second moments. The main text defines $F(\theta)$ as a maximum over $\mathcal{P}(\RR^m)$, which presupposes the existence of a maximizer therein. As establishing this is the goal of the present appendix, we initially replace the maximum with a supremum and reintroduce the maximum only once attainment is shown.

\begin{proposition}[Measurability and well-definedness of \texorpdfstring{$T^\star$}{T*}]\label{prop:measurability_Tstar}
The set-valued mapping $S(\hat{z})$ in~\eqref{eq:T_star_argmax} is non-empty and closed-valued for all $\hat{z} \in \mathbb{R}^m$. Furthermore, there exists a Borel-measurable map $T^\star : \mathbb{R}^m \to \mathbb{R}^m$ such that $T^\star(\hat{z}) \in S(\hat{z})$ for all $\hat{z} \in \mathbb{R}^m$.
\end{proposition}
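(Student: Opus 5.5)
The argument has two parts. First I show that $S(\hat z)$ is non-empty and closed. Then I obtain a measurable selector from a measurable-selection theorem for normal integrands, namely \citep[Theorem~14.37]{rockafellar1998variational}.

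\emph{Non-emptiness and closedness.} Fix $\hat z$ and set $\phi_{\hat z}(z)\coloneqq f(\theta,z)-\lambda\|z-\hat z\|_2^2$. The loss $f(\theta,\cdot)$ is differentiable, hence continuous, so $\phi_{\hat z}$ is continuous. The Lipschitz bound $f(\theta,z)\le f(\theta,\hat z)+L_z\|z-\hat z\|_2$ gives
\[
\phi_{\hat z}(z)\le f(\theta,\hat z)+L_z\|z-\hat z\|_2-\lambda\|z-\hat z\|_2^2\to-\infty \quad\text{as } \|z\|_2\to\infty.
\]
Hence $\phi_{\hat z}$ is coercive, and its superlevel set $\{z:\phi_{\hat z}(z)\ge\phi_{\hat z}(\hat z)\}$ is compact and non-empty. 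The sublevel set is contained in the ball of radius $L_z/\lambda$ around $\hat z$, which also gives a uniform localization. Weierstrass' theorem then shows the maximum is attained, so $S(\hat z)\neq\emptyset$. Closedness follows because $S(\hat z)=\{z:\phi_{\hat z}(z)\ge\max\phi_{\hat z}\}$ is a superlevel set of a continuous function.

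\emph{Measurable selection.} Define $g(\hat z,z)\coloneqq -f(\theta,z)+\lambda\|z-\hat z\|_2^2$. This function is jointly continuous on $\RR^m\times\RR^m$, so it is a Carathéodory integrand and therefore a normal integrand \citep[Example~14.29]{rockafellar1998variational}. Theorem~14.37 of \citep{rockafellar1998variational} then says that the argmin mapping $\hat z\mapsto\arg\min_z g(\hat z,z)=S(\hat z)$ is a closed-valued measurable mapping. Since $S$ is also non-empty valued, the Kuratowski--Ryll-Nardzewski selection theorem \citep[Corollary~14.6]{rockafellar1998variational} yields a Borel-measurable $T^\star$ with $T^\star(\hat z)\in S(\hat z)$ for every $\hat z$.

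\emph{Alternative route.} The only delicate step is checking the measurability hypothesis. If one prefers to avoid Theorem~14.37, I would verify it directly. The graph of $S$ equals $\{(\hat z,z): g(\hat z,z)\le v(\hat z)\}$, where $v(\hat z)=\min_z g(\hat z,z)$ is continuous because of the uniform localization above. So the graph is closed, and $S$ is outer semicontinuous. An outer semicontinuous mapping is Borel measurable, and the selection theorem then applies as before.
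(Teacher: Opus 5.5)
Your proof is correct and follows the paper's argument essentially step for step. Coercivity of the penalized objective gives attainment and closedness, joint continuity makes it a normal integrand, Theorem~14.37 of Rockafellar--Wets gives measurability of the argmax mapping, and Corollary~14.6 supplies the Borel selector. Two small remarks: the set contained in the ball of radius $L_z/\lambda$ around $\hat z$ is the \emph{superlevel} set, not the sublevel set, and your alternative closed-graph (outer semicontinuity) route is a valid, more self-contained way to check the measurability hypothesis.
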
 
\begin{proof}
    Define $g(\hat{z}, z) \coloneqq f(\theta, z) - \lambda \|z - \hat{z}\|_2^2$ to simplify notation. Since $f(\theta, z)$ is Lipschitz continuous in $z$, the quadratic penalty dominates as $\|z\|_2 \to \infty$, and thus the continuous function~$-g(\hat{z}, z)$ is coercive in~$z$. Therefore, $g(\hat{z}, z)$ attains its maximum over $z$, which in turn implies that the set of maximizers $S(\hat{z})$ is non-empty and closed for every $\hat{z}\in\RR^m$. To establish the measurability of the set-valued mapping~$S(\hat z)$, note first that $g(\hat{z}, z)$ is continuous throughout $\mathbb{R}^m \times \mathbb{R}^m$. This implies via \cite[Examples~14.15~\&~14.29]{rockafellar1998variational} that $g(\hat{z}, z)$ is a normal integrand. By~\cite[Theorem~14.37]{rockafellar1998variational}, the argmax mapping $S(\hat{z})=\arg\max_{z \in \RR^m} g(\hat{z}, z)$ thus constitutes a Borel-measurable set-valued mapping. We can therefore invoke~\cite[Corollary~14.6]{rockafellar1998variational} to conclude that $S$ admits a measurable selection. That is, there exists a Borel-measurable function $T^\star : \RR^m \to \RR^m$ with $T^\star(\hat{z}) \in S(\hat{z})$ for all $\hat{z} \in \RR^m$.
\end{proof}

We now show that the adversarial risk $F(\theta)$ can be expressed as in~\eqref{eq:regularized_dro-dual}. This reformulation is reminiscent of duality results in Wasserstein DRO (see, {\em e.g.}, \citep{blanchet2017quantifyingdistributionalmodelrisk, mohajerin2018data, zhang2022simple}) and appeared first in \cite[Proposition~1]{sinha2020certifying}. We provide a short alternative proof to keep this paper self-contained.

\begin{proposition}[Adversarial risk]
\label{prop:inner_deterministic_optimizer}
The adversarial risk satisfies~\eqref{eq:regularized_dro-dual}, the measure $\PP^{\star}{\coloneqq} T^{\star}_\#\hat{\PP}$ solves the inner maximization problem in~\eqref{prob:regularized_dro}, and~$\gamma^\star {\coloneq} (T^\star, \mathrm{id})_{\#} \hat{\mathbb{P}}$ is an optimal coupling of~$\hat{\PP}$ and~$\PP^{\star}$.
\end{proposition}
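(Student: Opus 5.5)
We prove the statement by a sandwich argument, with two inequalities between the primal value $\sup_{\PP\in\cP}\{\EE_\PP[f(\theta,Z)]-\lambda\wasserstein^2(\PP,\hat\PP)\}$ and the dual value
$$D \coloneqq \EE_{\hat\PP}\Big[\max_{z}\{f(\theta,z)-\lambda\|z-\hat Z\|_2^2\}\Big].$$
Throughout, $T^\star$ is the measurable selector from \autoref{prop:measurability_Tstar}.

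First we check that $\PP^\star=T^\star_\#\hat\PP$ is admissible, i.e.\ has finite second moments. Fix $\hat z$ and compare the value at $T^\star(\hat z)$ with the value at $z=\hat z$:
$$f(\theta,T^\star(\hat z))-\lambda\|T^\star(\hat z)-\hat z\|_2^2\ \ge\ f(\theta,\hat z).$$
Lipschitz continuity gives $f(\theta,T^\star(\hat z))-f(\theta,\hat z)\le L_z\|T^\star(\hat z)-\hat z\|_2$. Combining the two yields $\|T^\star(\hat z)-\hat z\|_2\le L_z/\lambda$. Hence $\PP^\star$ has finite second moments because $\hat\PP$ does. The same bound makes $f(\theta,T^\star(\hat Z))$ integrable, since $f(\theta,\cdot)$ grows at most linearly. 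This also shows that $D$ is finite and well defined; its integrand is measurable as $g(\hat z,T^\star(\hat z))$ with $g$ continuous.

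Next, the upper bound: the primal supremum is at most $D$. Take any $\PP\in\cP$ and any coupling $\gamma\in\Gamma(\PP,\hat\PP)$. Pointwise,
$$f(\theta,z)-\lambda\|z-\hat z\|_2^2\le \max_{z'}\{f(\theta,z')-\lambda\|z'-\hat z\|_2^2\}.$$
Integrating this against $\gamma$ gives $\EE_\PP[f]-\lambda\EE_\gamma\|Z-\hat Z\|_2^2\le D$. Taking the infimum over $\gamma$, which is attained by an optimal coupling, gives $\EE_\PP[f]-\lambda\wasserstein^2(\PP,\hat\PP)\le D$. The only delicate point is integrability: $\EE_\PP[f(\theta,Z)]$ is finite because $f$ has linear growth and $\PP$ has finite second moments, so no $\infty-\infty$ issue arises.

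Then the lower bound, which also gives attainment. The coupling $\gamma^\star=(T^\star,\mathrm{id})_\#\hat\PP$ lies in $\Gamma(\PP^\star,\hat\PP)$, so
$$\wasserstein^2(\PP^\star,\hat\PP)\le \EE_{\hat\PP}\|T^\star(\hat Z)-\hat Z\|_2^2.$$
Therefore the primal objective at $\PP^\star$ is at least $\EE_{\hat\PP}[f(\theta,T^\star(\hat Z))-\lambda\|T^\star(\hat Z)-\hat Z\|_2^2]=D$. Combined with the upper bound, every inequality holds with equality. Consequently the supremum equals $D$, which is~\eqref{eq:regularized_dro-dual}, and it is attained by $\PP^\star$, so the supremum can be written as a maximum. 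The equality in the Wasserstein bound shows that $\gamma^\star$ attains $\wasserstein^2(\PP^\star,\hat\PP)$, i.e.\ it is an optimal coupling.

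I expect the main obstacle to be the integrability and measurability bookkeeping rather than the core inequalities, and in particular the finite-second-moment bound on $\PP^\star$. The displacement bound $\|T^\star(\hat z)-\hat z\|_2\le L_z/\lambda$ handles this cleanly.
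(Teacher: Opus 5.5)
Your proof is correct, but it takes a different and more elementary route than the paper.

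The paper first merges the supremum over $\PP$ with the infimum inside $\wasserstein^2$ into a single supremum over couplings $\gamma$ whose second marginal is $\hat\PP$. It then proves this value is finite and argues that couplings with infinite second moments give $-\infty$, so the feasible set can be relaxed to $\mathcal{P}_0$. Next it disintegrates $\gamma=\nu_{\hat z}\otimes\hat\PP$ and uses Fubini--Tonelli to bound the inner integral by the pointwise maximum, which is attained by $\nu_{\hat z}=\delta_{T^\star(\hat z)}$. In the paper, finite second moments of $\PP^\star$ are obtained only indirectly (otherwise the objective would be $-\infty$). Optimality of $\gamma^\star$ is then proved by a separate contradiction argument.

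You instead use a weak-duality sandwich. Because the pointwise bound depends only on $\hat z$, you integrate it against an arbitrary coupling through its second marginal, so you need neither disintegration nor the relaxation to $\mathcal{P}_0$. Your explicit displacement bound $\|T^\star(\hat z)-\hat z\|_2\le L_z/\lambda$ settles admissibility of $\PP^\star$ and finiteness of $D$ up front. It uses only Lipschitz continuity and comparison with $z=\hat z$, unlike the stationarity-based bound $L_z/(2\lambda)$ the paper uses later in \autoref{lem:optimal_adversary_bounded_second_moment}. Optimality of $\gamma^\star$ then falls out of the collapse of the sandwich rather than needing a contradiction.

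What the paper's heavier machinery buys is a description of the maximizers in coupling form. Any kernel $\nu_{\hat z}$ concentrated on $S(\hat z)$, i.e., a randomized adversary, is optimal. It also shows that admitting measures without finite second moments cannot increase the value. Your argument proves exactly what the statement asks, with less machinery.
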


\begin{proof}
By rewriting the squared Wasserstein distance appearing in the definition of~$F(\theta)$ as an optimal transport problem over couplings $\gamma$, and by eliminating the candidate measure $\PP$ through its representation as the first marginal of~$\gamma$, we obtain
\begin{equation}
\label{eq:V_coupling}
F(\theta)
\;=\;
\sup_{\substack{\gamma\in\mathcal{P}(\RR^m\times\RR^m)\\ (\pi_2)_\#\gamma=\hat{\PP}}}
\int_{\RR^m\times\RR^m}\!\!
\Bigl(f(\theta,z)-\lambda\|z-\hat z\|_2^2\Bigr)\,\diff\gamma(z,\hat z).
\end{equation}
Note that the constraint $\PP \in \mathcal{P}(\RR^m)$ in the definition of~$F(\theta)$, which requires~$\mathbb P$ to have finite second moments, translates to the requirement $\gamma \in \mathcal{P}(\RR^m \times \RR^m)$ in~\eqref{eq:V_coupling}, which requires the coupling~$\gamma$ to have finite second moments.  Indeed, whenever $\hat{\PP}$ has finite second moments, $\PP$ has finite second moments if and only if every coupling of $\PP$ and $\hat{\PP}$ does. This follows from the relation
\[
    \EE_{\gamma} \left[\|(Z,\hat{Z})\|_2^2 \right] = \EE_{(\pi_1)_\#\gamma}\left[\|Z\|_2^2 \right] + \EE_{\hat{\PP}}\left[\|\hat{Z}\|_2^2 \right],
\]
where $\pi_1:\RR^m\times\RR^m\to\RR^m$  is the coordinate projection defined by $\pi_1(z,\hat z)=z$. Next, we show that couplings with infinite second moments are strictly suboptimal in~\eqref{eq:V_coupling} and that $\mathcal{P}(\mathbb{R}^m \times \mathbb{R}^m)$ can be relaxed to $\mathcal{P}_0(\mathbb{R}^m \times \mathbb{R}^m)$. To this end, we first prove
that the optimal value of~\eqref{eq:V_coupling} is finite. 

We can start by bounding
the integrand in~\eqref{eq:V_coupling}. As $f(\theta,z)$ is $L_z$-Lipschitz continuous in~$z$, we have
\begin{align*}
    f(\theta,z) - \lambda\|z - \hat z\|_2^2 
    &\leq f(\theta,\hat z) + L_z\|z - \hat z\|_2  - \lambda\|z - \hat z\|_2^2 \\
    &= f(\theta,\hat z) + \frac{L_z^2}{4\lambda} -\lambda\left(\|z - \hat z\|_2 - \frac{L_z}{2\lambda}\right)^2
\end{align*}
for all $z, \hat z \in \RR^m$. Using the elementary inequality $(a-b)^2 \geq \frac{1}{2}a^2 - b^2$ for any $a,b\in\mathbb R$, we can further loosen this bound to isolate the quadratic transport cost. We thus obtain
\begin{align*}
    f(\theta,z) - \lambda\|z - \hat z\|_2^2 
    &\leq f(\theta,\hat z) + \frac{L_z^2}{2\lambda} -\frac{\lambda}{2}\|z - \hat z\|_2^2 \\
    &\leq f(\theta,0) + L_z\|\hat z\|_2 + \frac{L_z^2}{2\lambda} -\frac{\lambda}{2}\|z - \hat z\|_2^2.
\end{align*}
Integrating both sides with respect to~$\gamma$ and recalling that the second marginal of~$\gamma$ equals~$\hat\PP$ yields
\begin{subequations}
\begin{align}
    \int\!\bigl(f(\theta,z) - \lambda\|z - \hat z\|_2^2\bigr)\,\diff\gamma (z,\hat z) 
    &\leq 
    f(\theta,0) + \frac{L_z^2}{2\lambda} + L_z\EE_{\hat\PP}\bigl[\|\hat Z\|_2\bigr] - \frac{\lambda}{2}\!\int\!\|z - \hat z\|_2^2\,\diff\gamma(z,\hat z) \label{eq:upper_bound_1} \\
    &\leq f(\theta,0) + \frac{L_z^2}{2\lambda} + L_z\EE_{\hat\PP}\bigl[\|\hat Z\|_2\bigr]. \label{eq:upper_bound_2}
\end{align}
\end{subequations}

As~$\hat{\PP}$ has finite second moments, the inequality~\eqref{eq:upper_bound_2} shows that  the integral is bounded above by a finite constant uniformly across all couplings~$\gamma$. Thus, we find~$F(\theta) < \infty$. Conversely, the trivial coupling $\gamma = (\mathrm{id},\mathrm{id})_\#\hat\PP$ is feasible in~\eqref{eq:V_coupling} and attains the finite objective value $\EE_{\hat{\PP}}[f(\theta,\hat{Z})]$. Consequently, we have~$F(\theta)>-\infty$. In summary, we may conclude that~$F(\theta)$ is indeed finite.

By~\eqref{eq:upper_bound_1}, the objective function in~\eqref{eq:V_coupling} evaluates to~$-\infty$ for any coupling~$\gamma$ with infinite transport cost. Moreover, $\gamma$ has infinite transport cost if and only if it has infinite second moments because
\[
	\int \frac{1}{2}\|z-\hat{z}\|_2^2 \,\diff\gamma (z,\hat z) \leq  \int\left( \|z\|_2^2 + \|\hat{z}\|_2^2 \right)\diff\gamma (z,\hat z)  \leq \int\left( 2\|z-\hat{z}\|_2^2 + 4\|\hat{z}\|_2^2\right) \diff\gamma (z,\hat z) .
\]
We may therefore relax $\mathcal{P}(\mathbb{R}^m \times \mathbb{R}^m)$ to $\mathcal{P}_0(\mathbb{R}^m \times \mathbb{R}^m)$ without increasing the supremum in~\eqref{eq:V_coupling}.

Next, we can decompose any feasible coupling $\gamma\in \mathcal{P}_0(\mathbb{R}^m \times \mathbb{R}^m)$ with respect to its second marginal. Indeed, by the disintegration theorem on Polish spaces \citep[Theorem~5.3.1] {ambrosio2005gradient}, $\gamma$ decomposes as
\begin{equation}
\label{eq:disintegration}
\gamma(\diff z,\diff\hat z)
\;=\;
\nu_{\hat z}(\diff z)\otimes \hat{\PP}(\diff\hat z)
\end{equation}
for a $\hat{\PP}$-a.e.\ uniquely defined Borel family $(\nu_{\hat z})_{\hat z\in\RR^m}\subseteq\mathcal{P}_0(\RR^m)$. Conversely, any such family defines a feasible $\gamma$ via~\eqref{eq:disintegration}. Substituting \eqref{eq:disintegration} into the exact relaxation of~\eqref{eq:V_coupling} derived above and applying the Fubini-Tonelli theorem, which is justified in view of~\eqref{eq:upper_bound_2}, we obtain
\begin{equation*}
F(\theta)
\;=\;
\sup_{(\nu_{\hat z})}
\int_{\RR^m}\!\!\left[\int_{\RR^m}\!\bigl(f(\theta,z)-\lambda\|z-\hat z\|_2^2\bigr)\,\diff\nu_{\hat z}(z)\right]\diff\hat{\PP}(\hat z).
\end{equation*}
For each fixed $\hat z$, the inner integral admits the upper bound
\[
\int_{\RR^m}\bigl(f(\theta,z)-\lambda\|z-\hat z\|_2^2\bigr)\,\diff\nu_{\hat z}(z) \;\le\; \max_{z\in\RR^m}\;\bigl\{f(\theta,z)-\lambda\|z-\hat z\|_2^2\bigr\},
\]
where equality holds whenever $\nu_{\hat z}$ is supported on $S(\hat z)$ defined in~\eqref{eq:T_star_argmax}. Note that the right-hand side of the above inequality is a pointwise maximum of continuous functions and thus Borel-measurable. By \autoref{prop:measurability_Tstar}, there exists a Borel selector $T^\star$ such that $T^\star(\hat z)$ is optimal on the right-hand side for every~$\hat z\in\mathbb R^m$. Clearly, the choice $\nu_{\hat z}^\star\coloneqq\delta_{T^\star(\hat z)}$ defines a feasible disintegration. Thus, we find
\[
F(\theta)
\;=\;
\mathbb{E}_{\hat{\mathbb{P}}} \Big[\max_{z \in \RR^m} \bigl\{ f(\theta, z) - \lambda \|z - \hat{Z}\|_2^2 \bigr\}\Big],
\]
which establishes~\eqref{eq:regularized_dro-dual}. This shows that $\gamma^\star \coloneqq (\mathrm{id},T^\star)_\#\hat{\PP}$ solves~\eqref{eq:V_coupling} and that~$\PP^\star = (\pi_1)_\#\gamma^\star = T^\star_\#\hat{\PP}$ solves the inner maximization problem in~\eqref{prob:regularized_dro}. We emphasize that $\PP^\star \in \mathcal{P}(\RR^m)$ has finite second moments for otherwise~$\gamma^\star$ would have infinite second moments and an objective value of~$-\infty$, thus contradicting its optimality. It remains to be shown that $\gamma^\star$ is an optimal coupling between~$\PP^\star$ and~$\hat{\PP}$. Suppose for contradiction that there is $\tilde\gamma\in\Gamma(\PP^\star,\hat{\PP})$ with
\[
\int_{\RR^m\times\RR^m}\|z- \hat z\|_2^2\,\diff\tilde \gamma(z,\hat z)
\;<\;
\int_{\RR^m\times\RR^m}\|z-\hat z\|_2^2\,\diff\gamma^\star(z, \hat z).
\]
Since $\tilde\gamma$ and $\gamma^\star$ share the first marginal $\PP^\star$, they yield identical loss integrals, that is,
\[
\int f(\theta,z)\,\diff\tilde\gamma(z, \hat z)
\;=\;
\int f(\theta,z)\,\diff\gamma^\star(z, \hat z)
\;=\;
\EE_{\PP^\star}[f(\theta,Z)].
\]
Consequently, we find
\[
\int\bigl(f(\theta,z)-\lambda\|z-\hat z\|_2^2\bigr)\,\diff\tilde\gamma(z,\hat z)
\;>\;
\int\bigl(f(\theta,z)-\lambda\|z-\hat z\|_2^2\bigr)\,\diff\gamma^\star(z,\hat z)
\;=\;
F(\theta),
\]
which contradicts~\eqref{eq:V_coupling}. We may thus conclude that
\[
\wasserstein^2(\PP^\star,\hat{\PP})
\;=\;
\int_{\RR^m\times\RR^m}\|z-\hat z\|_2^2\,\diff\gamma^\star(z, \hat z)
\;=\;
\EE_{\hat{\PP}}\!\bigl[\|T^\star(\hat Z)-\hat Z\|_2^2\bigr],
\]
which confirms that $\gamma^\star$ is indeed an optimal coupling between~$\PP^\star$ and~$\hat{\PP}$.
\end{proof}

The proof of \autoref{prop:inner_deterministic_optimizer} shows that $F(\theta)$ is guaranteed to be finite.

\begin{remark}[Empirical measure]
\label{rem:empirical_case}
When $\hat{\PP}=\frac{1}{N}\sum_{i=1}^N\delta_{\hat z_i}$ is the empirical measure corresponding to the training samples $\hat z_1,\dots,\hat z_N\in\RR^m$, the disintegration~\eqref{eq:disintegration} reduces to the elementary decomposition
\[
\gamma(\diff z,\diff\hat z)
\;=\;
\frac{1}{N}\sum_{i=1}^N \nu_i(\diff z) \otimes \delta_{\hat z_i}(\diff\hat z)
\quad \text{with} \quad \nu_i\in\mathcal{P}_0(\RR^m) \quad \forall i\in[N],
\]
in which case no measurable selection is required. Specifically, it suffices to pick a maximizer
\[
z_i^\star
\;\in\;
\arg\max_{z\in\RR^m}\bigl\{f(\theta,z)-\lambda\|z-\hat z_i\|_2^2\bigr\}
\]
for each $i\in[N]$. \autoref{prop:inner_deterministic_optimizer} then reduces to the simpler statement
\[
\sup_{\PP\in\mathcal{P}(\RR^m)}
\Bigl\{\EE_{\PP}[f(\theta,Z)] - \lambda\,\wasserstein^2(\PP,\hat{\PP})\Bigr\}
\;=\;
\frac{1}{N}\sum_{i=1}^N\sup_{z\in\RR^m}\bigl\{f(\theta,z)-\lambda\|z-\hat z_i\|_2^2\bigr\},
\]
and an optimizer of the inner maximization problem in~\eqref{prob:regularized_dro} is given by $\PP^\star=\frac{1}{N}\sum_{i=1}^N\delta_{z_i^\star}$.
\end{remark}

\section{Proofs}
\label{app:ot_structure}

\begin{proof}[Proof of~\autoref{prop:convex_bound_inexact}]
We first show that $g_t$ is an $\varepsilon(\theta_t, T_t)$-subgradient of $F$ at $\theta_t$. As $f(\theta,z)$ is convex in $\theta$, we may use the first-order condition of convexity to obtain
\[
f(\theta,T_t(\hat z))-\lambda\|T_t(\hat z)-\hat z\|_2^2
\;\ge\;
f(\theta_t,T_t(\hat z))-\lambda\|T_t(\hat z)-\hat z\|_2^2
+\langle \nabla_{\theta}f(\theta_t,T_t(\hat z)),\,\theta-\theta_t\rangle
\]
for any fixed $\theta\in\Theta$ and all $\hat z \in \RR^m$. Taking expectations with respect to $\hat{\mathbb{P}}$ on both sides implies that
\[
\EE_{\hat{\PP}}\!\bigl[f(\theta,T_t(\hat Z))-\lambda\|T_t(\hat Z)-\hat Z\|_2^2\bigr]
\;\ge\;
\EE_{\hat{\PP}}\!\bigl[f(\theta_t,T_t(\hat Z))-\lambda\|T_t(\hat Z)-\hat Z\|_2^2\bigr]
+\langle g_t,\,\theta-\theta_t\rangle.
\]
By the definition of the adversarial risk, the left-hand side is no larger than $F(\theta)$, 
and by the definition of the adversarial error, the first term on the right-hand side equals $F(\theta_t)-\varepsilon(\theta_t,T_t)$. Hence, we find
\begin{equation}
\label{eq:eps_subgrad_proof}
F(\theta)\;\ge\;F(\theta_t)+\langle g_t,\theta-\theta_t\rangle-\varepsilon(\theta_t,T_t).
\end{equation}
As $\theta\in\Theta$ was chosen freely, this shows that $g_t$ is indeed an $\varepsilon(\theta_t, T_t)$-subgradient of $F$ at $\theta_t$.

Next, we use the nonexpansiveness of the Euclidean projection operator `$\operatorname{Proj}_\Theta(\cdot)$' and the update rule $\theta_{t+1}=\operatorname{Proj}_\Theta (\theta_t - \alpha\, g_t)$ of the projected gradient descent algorithm to conclude that
\begin{equation}
\label{eq:improvement}
\|\theta_{t+1}-\theta^\star\|_2^2
\;\le\;\|\theta_{t}-\alpha\, g_t-\theta^\star\|_2^2
\;\le\;
\|\theta_t-\theta^\star\|_2^2
-2\alpha\langle g_t,\theta_t-\theta^\star\rangle
+\alpha^2\|g_t\|_2^2.
\end{equation}
As $g_t$ is defined at the expected value of a random variable whose norm is uniformly bounded by $L_\theta$, it is clear that
$\|g_t\|_2\le L_\theta$. Combining~\eqref{eq:eps_subgrad_proof} 
for $\theta =\theta^\star$ with~\eqref{eq:improvement} and rearranging terms we then find
\[
F(\theta_t)-F(\theta^\star)
\;\le\;
\frac{1}{2\alpha} \left( \|\theta_t-\theta^\star\|_2^2-\|\theta_{t+1}-\theta^\star\|_2^2\right)
+\frac{\alpha L_\theta^2}{2}
+ \varepsilon(\theta_t,T_t).
\]
Summing over $t=0,\dots,H-1$ and discarding the last term $-\|\theta_H-\theta^\star\|_2^2/(2\alpha)\leq 0$ in the telescoping sum then yields the estimate
\[
\frac{1}{H}\sum_{t=0}^{H-1}\bigl(F(\theta_t)-F(\theta^\star)\bigr)
\;\le\;
\frac{\|\theta_0-\theta^\star\|_2^2}{2\alpha H}
+\frac{\alpha L_\theta^2}{2}
+\frac{1}{H}\sum_{t=0}^{H-1}\varepsilon(\theta_t,T_t).
\]
The step size $\alpha=\|\theta_0-\theta^\star\|_2/(L_\theta\sqrt{H})$ 
balances the first two terms, ensuring that their sum reduces to 
$L_\theta\|\theta_0-\theta^\star\|_2/\sqrt{H}$. Since $F(\theta)$ is convex
(as the pointwise supremum of convex functions), Jensen's inequality implies that $F(\bar\theta_H)\le \frac{1}{H}\sum_{t=0}^{H-1}F(\theta_t)$, 
which yields the claimed bound.
\end{proof}

\begin{proof}[Proof of~\autoref{prop:wasted_transport}]
    Since $F(\theta)$ is finite by the proof of~\autoref{prop:inner_deterministic_optimizer}, we may assume without loss of generality that $T_{\#}\hat{\PP}$ has finite second moments. Otherwise, the same proof yields $\varepsilon(\theta,T)=\infty$, in which case the claim holds trivially. Hence, $T_{\#}\hat{\PP}$ is feasible in the inner maximization problem in~\eqref{prob:regularized_dro}, and we have
    \[
        F(\theta) \;\geq\; \EE_{T_{\#}\hat{\PP}}[f(\theta,Z)] - \lambda\, \wasserstein^2(T_{\#}\hat{\PP},\, \hat{\PP}).
    \]
    Subtracting $\EE_{\hat{\PP}}\bigl[f(\theta, T(\hat Z)) - \lambda\|T(\hat Z) - \hat Z\|_2^2\bigr]$ from both sides and using the measure-theoretic change of variables formula $\EE_{T_{\#}\hat{\PP}}[f(\theta, Z)] = \EE_{\hat{\PP}}[f(\theta, T(\hat Z))]$, we obtain
    \begin{align*}
        \varepsilon(\theta, T) 
        &\geq \lambda\,\EE_{\hat{\PP}}\bigl[\|T(\hat Z) - \hat Z\|_2^2\bigr] - \lambda\, \wasserstein^2(T_{\#}\hat{\PP},\, \hat{\PP}) = \lambda\, \cM_{\hat{\PP}}(T),
    \end{align*}
    where the equality follows from the definition of the Monge gap. This concludes the proof.
\end{proof}

\begin{remark}[Monge gap of the empirical distribution]
    When $\hat{\PP} = \tfrac{1}{N}\sum_{i=1}^{N} \delta_{\hat z_i}$ is the empirical distribution over $N$ training samples, the Monge gap admits the more explicit representation
    \begin{equation*}
        \cM_{\hat\PP}(T)
        \;=\;
        \max_{\sigma \in \mathcal{S}_N}\,
        \frac{2}{N}\sum_{i=1}^{N}
        \bigl\langle
            T(\hat z_{\sigma(i)}) - T(\hat z_i),\, \hat z_i
        \bigr\rangle,
    \end{equation*}
    where $\mathcal{S}_N$ denotes the family of all permutations of $N$ elements~\citep{uscidda2023monge}. By \autoref{prop:wasted_transport}, any lack of cyclical monotonicity of $T$ on the support of $\hat{\PP}$ thus leads to a strictly positive lower bound on~$\varepsilon(\theta, T)$.
\end{remark}

\begin{proof}[Proof of~\autoref{prop:optimal_map_cm}]
    Select any $k \in \NN$ and any collection of points $\{\hat{z}_i\}_{i=1}^k$ in~$\RR^m$. By the definition of the optimal adversarial map~$T^\star$, we have
    \[
        T^\star(\hat z_i) \in \argmax_{z \in \RR^m} \left\{ f(\theta, z) - \lambda \|z - \hat{z}_i\|_2^2 \right\}\qquad \forall i\in[k].
    \]
    Therefore, the following inequality holds for any permutation $\sigma \in \cS_k$:
    \[
        f(\theta, T^\star(\hat z_i)) - \lambda \|T^\star(\hat z_i) - \hat z_i\|_2^2
        \;\geq\;
        f(\theta, T^\star(\hat z_{\sigma(i)})) - \lambda \|T^\star(\hat z_{\sigma(i)}) - \hat z_i\|_2^2
        \qquad \forall i \in [k].
    \]
    Summing over $i\in[k]$, the terms involving $f$ on both sides of the inequality coincide and thus cancel. As the points $\{\hat z_i\}_{i=1}^{k}$ were chosen arbitrary, this implies that $T^\star$ is cyclically monotone on~$\RR^m$. In particular, $T^\star$ is cyclically monotone on the support of~$\hat \PP$, which implies that $\cM_{\hat \PP}(T^\star) = 0$.
\end{proof}

\begin{proof}[Proof of~\autoref{prop:pa_monotonicity}]
    Since $m = 1$, the map $T_{\mathrm{pa}}(t,\cdot)$ is cyclical monotone if and only if it is monotonically non-decreasing~\citep[Chapter~2]{santambrogio2015optimal}. We therefore prove that the PA map $T_{\mathrm{pa}}(t,\cdot)$ is monotonically non-decreasing both in the continuous-time and discrete-time settings. 

    \textbf{Continuous-time PA.} By the Picard-Lindel\"of theorem, the defining ODE of the continuous-time PA map has a unique solution that depends continuously on the initial conditions. This implies that $T_{\mathrm{pa}}(t,\hat z)$ is continuously differentiable in~$t$ and continuous in~$\hat z$. In the remainder, we fix $t>0$.
    
    Suppose now for the sake of contradiction that $T_{\mathrm{pa}}(t,\cdot)$ fails to be monotonically non-decreasing. Thus, there exist $\hat z_1 < \hat z_2$ such that $T_{\mathrm{pa}}(t,\hat z_1)>T_{\mathrm{pa}}(t,\hat z_2)$. For ease of notation, we define the gap function $\delta(t') \coloneqq T_{\mathrm{pa}}(t', \hat z_2) - T_{\mathrm{pa}}(t', \hat z_1)$, which is continuously differentiable. By assumption, we have $\delta(0) = \hat z_2 - \hat z_1 > 0$ and $\delta(t)<0$. The intermediate value theorem then implies that there exists $t^\star < t$ with $\delta(t^\star) = 0$ and $\delta'(t^\star) \leq 0$. Next, an elementary calculation reveals that
    \begin{align*}
        \delta'(t^\star)
        &= \tfrac{\mathrm{d}}{\mathrm{d}t} T_{\mathrm{pa}}(t^\star, \hat z_2) - \tfrac{\mathrm{d}}{\mathrm{d}t} T_{\mathrm{pa}}(t^\star, \hat z_1) \\
        &= \bigl[\nabla_z f(\theta, T_{\mathrm{pa}}(t^\star, \hat z_2)) - 2\lambda\bigl(T_{\mathrm{pa}}(t^\star, \hat z_2) - \hat z_2\bigr)\bigr] \\
        &\quad - \bigl[\nabla_z f(\theta, T_{\mathrm{pa}}(t^\star, \hat z_1)) - 2\lambda\bigl(T_{\mathrm{pa}}(t^\star, \hat z_1) - \hat z_1\bigr)\bigr].
    \end{align*}
    Since $T_{\mathrm{pa}}(t^\star, \hat z_1) = T_{\mathrm{pa}}(t^\star, \hat z_2)$, the terms involving $\nabla_z f$ and $2\lambda\, T_{\mathrm{pa}}(t^\star, \cdot)$ cancel. Thus, we obtain
    \[
        \delta'(t^\star) = 2\lambda(\hat z_2 - \hat z_1)>0,
    \]
    which contradicts the defining property $\delta'(t^\star) \leq 0$ of the point~$t^\star$. Consequently, we may conclude that the continuous-time PA map $T_{\mathrm{pa}}(t,\cdot)$ must be monotonically non-decreasing.

    \textbf{Discrete-time PA.} Fix $\hat z_1 < \hat z_2$ and define the gap $\delta_t \coloneqq T_{\mathrm{pa}}(t, \hat z_2) - T_{\mathrm{pa}}(t, \hat z_1)$. We use induction on~$t$ to show that~$\delta_t\geq 0$ for every~$t\in\mathbb N$. As~$\hat z_1$ and~$\hat z_2$ with~$\hat z_1 < \hat z_2$ were chosen arbitrarily, this readily implies that $T_{\mathrm{pa}}(t,\hat z)$ is monotonically non-decreasing in~$\hat z$. The base case corresponding to~$t = 0$ holds trivially because $T_{\mathrm{pa}}(0, \cdot)$ is the identity map, which implies that $\delta_0 = \hat z_2 - \hat z_1 > 0$. As for the induction step, assume now that $\delta_t \geq 0$, and note that the gap obeys the recursion
    \begin{align*}
        \delta_{t+1}
        &= \delta_t + \eta\bigl[\nabla_z f(\theta, T_{\mathrm{pa}}(t, \hat z_2)) - \nabla_z f(\theta, T_{\mathrm{pa}}(t, \hat z_1))\bigr] - 2\lambda \eta\, \delta_t + 2\lambda \eta (\hat z_2 - \hat z_1) \\
        &= (1 - 2\lambda \eta)\, \delta_t + \eta\bigl[\nabla_z f(\theta, T_{\mathrm{pa}}(t, \hat z_2)) - \nabla_z f(\theta, T_{\mathrm{pa}}(t, \hat z_1))\bigr] + 2\lambda \eta (\hat z_2 - \hat z_1).
    \end{align*}
    We have $\nabla_z f(\theta, T_{\mathrm{pa}}(t, \hat z_2)) - \nabla_z f(\theta, T_{\mathrm{pa}}(t, \hat z_1)) \geq -L_{zz}(T_{\mathrm{pa}}(t, \hat z_2) - T_{\mathrm{pa}}(t, \hat z_1)) = -L_{zz}\, \delta_t$ because $\nabla_z f(\theta, z)$ is $L_{zz}$-Lipschitz continuous in~$z$. Substituting this bound into the recursion yields
    \[
        \delta_{t+1} \;\geq\; \bigl(1 - \eta(L_{zz} + 2\lambda)\bigr)\, \delta_t + 2\lambda \eta (\hat z_2 - \hat z_1).
    \]
    The step size condition $\eta \leq (L_{zz} + 2\lambda)^{-1}$ and the induction hypothesis ensure that the first term is nonnegative, while the second term is positive by construction. We may thus conclude that $\delta_{t+1} \geq 0$, which completes the induction step. Thus, the claim follows.
\end{proof}

\begin{proof}[Proof of~\autoref{lem:assign_stat_implies_cm}]
    The proof widely parallels the second part of the proof of~\autoref{prop:optimal_map_cm}, with the optimality of $T^\star$ replaced by the assignment-stationarity inequality~\eqref{eq:assign-stat}.
\end{proof}

\section{Properties of MPA}
\label{app:MPA_properties}
In this appendix, we establish two key properties of MPA. This discussion focuses on the inner loop of Algorithm~\ref{alg:implicit_mpa}, which evaluates the adversarial risk of a fixed model parameter~$\theta$. First, we prove that an ideal variant of MPA with $K=\infty$ is guaranteed to terminate in finitely many rounds (\autoref{sec:mpa_analysis}). Second, we prove that if the batch size~$B$ is sufficiently large, then MPA is guaranteed to find a near-optimal adversarial map with high probability (\autoref{sec:mpa_global_optimality}). To showcase the benefits of MPA {\em vis-\`a-vis} PA, we also demonstrate that the outer loop of Algorithm~\ref{alg:implicit_mpa} may get trapped at the initial iterate unless the current adversarial map is cyclically monotone (\autoref{sec:toy_separation}).

\subsection{Finite-time Termination of MPA}
\label{sec:mpa_analysis}
Recall our standing assumption that $f(\theta,z)$ is differentiable and $L_z$-Lipschitz continuous with respect to $z$. Some of the results in this section require one or both of the following additional assumptions.

\begin{assumption}
\label{ass:loss_function_smoothness}
The gradient $\nabla_{z}f(\theta,z)$ is $L_{zz}$-Lipschitz continuous in $z$ uniformly over all~$\theta \in \Theta$.
\end{assumption}

\begin{assumption}
    \label{ass:analytic_loss_function}
    The loss function $f(\theta,z)$ is real analytic in~$z$ uniformly across all~$\theta \in \Theta$. 
\end{assumption}
For example, \autoref{ass:analytic_loss_function} is satisfied when $f$ is the cross-entropy loss of a multilayer perceptron, which consists of compositions of linear transformations and analytic activation functions, such as sigmoid, GELU, or Softplus. Throughout this section we focus on one epoch of Algorithm~\ref{alg:implicit_mpa} and thus consider a fixed batch of $B$ training samples $\{\hat z_i\}_{i=1}^B$ generated from~$\hat\PP$. 

Focusing on one epoch also means that~$\lambda > 0$ and~$\theta \in \Theta$ are kept fixed. For ease of notation, we then define 
\begin{equation}
    f_i(z) \coloneqq f(\theta, z) - \lambda \|z - \hat z_i\|_2^2
    \label{eq:per_sample_objective_function}
\end{equation}
as the penalized objective function for each sample~$i \in [B]$.

In the following we use boldface symbols to denote stacked vectors of per-sample quantities, such as $\mathbf{z} = (z_1, \dots, z_B)$. Accordingly, we assemble the per-sample penalized objectives into a vector field $\mathbf{f} \coloneqq (f_1, \dots, f_B)$, such that $\mathbf{f}(\mathbf{z}) = (f_1(z_1), \dots, f_B(z_B))$. 

Since the objective function in~\eqref{eq:maximize-over-maps} depends on the transport map $T \in \mathcal{T}$ only through its evaluations at the empirical samples, we introduce the shorthand $z_i \coloneqq T(\hat{z}_i)$. This allows us to identify the map $T$ restricted to the empirical samples with the tuple $\mathbf{z} \in \RR^{Bm}$. Finally, for each $i \in [B]$, let $\cM_i \subseteq \RR^m$ denote the set of stationary points of $f_i$, and define the joint set $\cM \coloneqq \cM_1 \times \cdots \times \cM_B$.

\begin{lemma}       
    \label{lem:compact_analytic_subset}
    If~\autoref{ass:analytic_loss_function} holds, then $\cM_i$ is a compact analytic subset of $\RR^m$ for each $i \in [B]$.
\end{lemma}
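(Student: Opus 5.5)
We need to show that $\cM_i=\{z:\nabla f_i(z)=0\}$ is analytic, closed and bounded, and we treat these three properties in turn.

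\textbf{Analyticity.} Under \autoref{ass:analytic_loss_function}, $f_i(z)=f(\theta,z)-\lambda\|z-\hat z_i\|_2^2$ is real analytic on $\RR^m$, since it is the sum of an analytic function and a polynomial. Hence each partial derivative $\partial_k f_i$ is real analytic. By definition,
\[
\cM_i=\bigcap_{k=1}^m\{z\in\RR^m:\partial_k f_i(z)=0\}
\]
is the common zero set of finitely many analytic functions on $\RR^m$. This is exactly an analytic subset of $\RR^m$ in the standard sense, namely a set locally, and here even globally, cut out by analytic equations.

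\textbf{Closedness.} $\nabla f_i$ is continuous and $\cM_i$ is the preimage of $\{0\}$ under it, so $\cM_i$ is closed.

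\textbf{Boundedness.} This step carries the only quantitative content, and it uses the standing Lipschitz assumption on $f(\theta,\cdot)$. Since $f(\theta,\cdot)$ is differentiable and $L_z$-Lipschitz, we have $\|\nabla_z f(\theta,z)\|_2\le L_z$ for all $z$. At a stationary point,
\[
0=\nabla_z f(\theta,z)-2\lambda(z-\hat z_i).
\]
This gives $\|z-\hat z_i\|_2=\|\nabla_z f(\theta,z)\|_2/(2\lambda)\le L_z/(2\lambda)$. Therefore $\cM_i$ lies in the closed ball of radius $L_z/(2\lambda)$ around $\hat z_i$. By Heine--Borel, a closed and bounded subset of $\RR^m$ is compact.

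\textbf{Potential issue.} The only subtle point is the definition of ``analytic subset'' (some authors require local zero sets in an open set). That requirement is satisfied trivially here because the defining functions are global. Nonemptiness is not claimed, although it follows from \autoref{prop:measurability_Tstar}: global maximizers exist and are stationary.
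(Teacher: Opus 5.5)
Your proposal is correct and follows essentially the same route as the paper's proof. Boundedness comes from $\|z-\hat z_i\|_2\le L_z/(2\lambda)$ at stationary points, closedness from continuity of $\nabla f_i$, and analyticity from writing $\cM_i$ as an analytic zero set. The only cosmetic difference is that the paper uses the single analytic function $\|\nabla f_i\|_2^2$, whereas you take the common zero set of the $m$ partial derivatives; the two descriptions are equivalent.
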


\begin{proof}
    We first show that $\cM_i$ is compact. By definition, any $z \in \cM_i$ satisfies 
    \begin{equation*}
        0=\nabla_z f_i(z) = \nabla_z f(\theta, z) - 2\lambda(z - \hat{z}_i),
    \end{equation*}
    which implies $2\lambda \|z - \hat{z}_i\|_2 = \|\nabla_z f(\theta, z)\|_2$. Lipschitz continuity of $f(\theta,\cdot)$ ensures that $\|\nabla_zf(\theta, \cdot)\|_2\leq L_z$. As~$\lambda>0$, we thus find $\|z - \hat{z}_i\|_2 \le \frac{L_z}{2\lambda}$. Consequently, $\cM_i$ is bounded. Because $\nabla_z f_i(z)$ is continuous, its zero-set $\cM_i$ is also closed, making $\cM_i$ a compact subset of $\RR^m$.

    Next, note that the penalized objective~$f_i(z)$ inherits real analyticity from~$f(\theta, \cdot)$. Therefore, the squared norm of its gradient, $\|\nabla_z f_i(z)\|_2^2$, is also a real analytic function on $\RR^m$. The set~$\cM_i$ of stationary points can be uniquely identified as the zero locus of this single analytic function, that is,
    \begin{equation*}
        \cM_i = \left\{z \in \RR^m : \|\nabla_z f_i(z)\|_2^2 = 0 \right\}.
    \end{equation*}
    By definition, this makes $\cM_i$ an analytic subset of $\RR^m$~\cite[Chapter~6]{krantz2002primer}.
\end{proof}

\begin{corollary}
    \label{cor:finite_critical_values}
    If~\autoref{ass:analytic_loss_function} holds, then~$\cM_i$ has finitely many connected components, and the set of critical values $f_i(\cM_i)$ is finite for each~$i \in [B]$.
\end{corollary}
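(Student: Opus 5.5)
We combine the compactness from \autoref{lem:compact_analytic_subset} with standard structure results for real analytic sets. By \autoref{lem:compact_analytic_subset}, $\cM_i$ is the zero locus of the real analytic function $g_i(z)\coloneqq\|\nabla_z f_i(z)\|_2^2$ and is compact. Real analytic sets are locally finite unions of connected analytic strata; this follows from Łojasiewicz's stratification theorem, or from the semianalytic structure theory in \cite{krantz2002primer}. In particular, every point $z\in\cM_i$ has a neighborhood $U_z$ such that $\cM_i\cap U_z$ has finitely many connected components. Cover the compact set $\cM_i$ by finitely many such neighborhoods. Each connected component of $\cM_i$ is then a union of pieces from these finitely many local components. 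Hence there are finitely many connected components, say $C_1,\dots,C_p$.

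For the finiteness of the critical values, it suffices to show that $f_i$ is constant on each component $C_k$. The expected obstacle is that a connected analytic set need not be path-connected by smooth paths. However, analytic sets are locally arc-connected by piecewise analytic arcs, a consequence of the curve selection lemma and the local conic structure. So any two points of $C_k$ can be joined by a continuous path $\gamma:[0,1]\to C_k$ that is piecewise $C^1$ (indeed piecewise analytic). Along each smooth piece we have
\[
\tfrac{\diff}{\diff s} f_i(\gamma(s))=\langle \nabla f_i(\gamma(s)),\gamma'(s)\rangle=0,
\]
because $\gamma(s)\in\cM_i$. Since $f_i\circ\gamma$ is continuous, it is therefore constant. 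Thus $f_i(C_k)$ is a single value $c_k$, and $f_i(\cM_i)=\{c_1,\dots,c_p\}$ is finite.

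As a cleaner alternative for the second step, one could invoke the Łojasiewicz gradient inequality. Near each point $z_0\in\cM_i$ it gives $|f_i(z)-f_i(z_0)|^{\vartheta}\le C\|\nabla f_i(z)\|_2$ for some $\vartheta\in(0,1)$, which forces $f_i\equiv f_i(z_0)$ on $\cM_i$ near $z_0$. So $f_i$ is locally constant on $\cM_i$, hence constant on each connected component, and finiteness again follows from the first step. This route avoids arc-connectedness, and I would use it if citing the arc-selection results proves cumbersome.
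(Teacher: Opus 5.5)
Your proposal is correct and follows essentially the same route as the paper: finitely many components from local finiteness of the components plus compactness of $\cM_i$, then constancy of $f_i$ on each component by integrating the vanishing gradient along a piecewise analytic arc. The paper cites Gabrielov's semianalytic arc result where you use local arc-connectedness plus the usual open--closed chaining argument. Your fallback via the \L{}ojasiewicz gradient inequality is also valid, since at a critical point the exponent lies in $[1/2,1)$ and $f_i$ is therefore locally constant on $\cM_i$; it is slightly cleaner because it replaces the arc-connectedness citation with a purely local statement.
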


\begin{proof}
    By \autoref{lem:compact_analytic_subset}, $\cM_i$ is a compact analytic subset of $\RR^m$, which inherently belongs to the broader class of semianalytic sets. A fundamental topological property of semianalytic sets is that they are locally connected and that the family of their connected components is locally finite \cite[Theorem~6.5.12]{krantz2002primer}. Local finiteness means that for every point~$z \in \cM_i$ there exists an open neighborhood $U_z$ that intersects only a finite number of connected components. The collection of all such neighborhoods, $\{U_z\}_{z \in \cM_i}$, naturally forms an open cover of $\cM_i$. Because $\cM_i$ is compact, this open cover admits a finite subcover, say $\{U_{z_1}, \dots, U_{z_K}\}$. Since the entirety of $\cM_i$ lies within the union of these $K$ finite neighborhoods, and each neighborhood intersects only finitely many components, the total number of connected components in $\cM_i$ is finite.

    Let $C \subseteq \cM_i$ be one such connected component. By \cite[Theorem~6.5.12]{krantz2002primer}, the connected component $C$ is itself a semianalytic set. A standard result in semianalytic geometry is that any two points of a connected semianalytic set can be joined by a semianalytic arc, which is real analytic except at finitely many points~\citep{gabrielov1968projections}. Therefore, for any two points $z, z' \in C$, there exists a piecewise smooth path $\gamma: [0,1] \to C$ connecting them. Because every point on $\gamma$ belongs to $\cM_i$, we have $\nabla_z f_i(\gamma(t)) = 0$ for all $t \in [0,1]$. By the fundamental theorem of calculus along curves, we thus have 
    \begin{equation*}
        f_i(z') - f_i(z) = \int_0^1 \langle \nabla_z f_i(\gamma(t)), \tfrac{\diff}{\diff t}\gamma(t) \rangle \, \diff t = 0.
    \end{equation*}
    Therefore, $f_i$ is constant on $C$. Because $\cM_i$ consists of a finite number of connected components, it follows that the set of critical values $f_i(\cM_i)$ is finite.
\end{proof}

We now show that an ideal instance of MPA (Algorithm~\ref{alg:implicit_mpa} with~$K=\infty$) terminates after finitely many rounds. The argument relies on the properties of certain ascent and reassignment operators.

\begin{definition}[Ascent Operators]
    \label{def:ascent-operator}
    For a fixed step size $0 < \eta < 2/(L_{zz}+ 2\lambda)$, we define the
    \begin{itemize}[leftmargin=6mm]
        \item[(i)] \textbf{$i$-th one-step ascent operator:} $\cA_i:\RR^m\to\RR^m$,  $z\mapsto z + \eta \nabla f_i(z)$;
        \item[(ii)] \textbf{$i$-th $t$-step ascent operator:} $\cA^t_i \coloneqq \cA_i \circ \cA_i^{t-1}$ for all $t\in \NN$, where $\cA^0_i \coloneqq \text{id}$;
        \item[(iii)] \textbf{$i$-th asymptotic ascent operator:} $\cA_i^\infty:\RR^m\to\RR^m$,  $z\mapsto \lim_{t \to \infty} \cA_i^t(z)$;
        \item[(iv)] \textbf{$t$-step ascent operator:} $\cA^t:\RR^{Bm}\to\RR^{Bm}$,  ${\bf z}\mapsto (\mathcal{A}_1^t(z_1), \dots, \mathcal{A}_B^t(z_B))$ for all $t \in \NN \cup \{\infty\}$.
    \end{itemize}
\end{definition}

The $t$-step ascent operator implements one full round of $t$ ascent steps in parallel across all $B$ samples. Note that the discrete-time PA map satisfies $T_{\mathrm{pa}}(t, \hat{z}_i)= \mathcal{A}_i^t(\hat z_i)$ and is thus fully determined by the $t$-step ascent operator~$\mathcal{A}_i^t$. However, $\mathcal{A}_i^t$ itself is more general because its input~$z_i$ may differ from the empirical sample~$\hat z_i$, which is hard-coded in the definition of the penalized objective~$f_i$. The next lemma shows that all asymptotic ascent operators are well-defined.

\begin{lemma}
    \label{lem:ascent_convergence}
    If~\Autorefs{ass:loss_function_smoothness}{ass:analytic_loss_function} hold and $0<\eta<2/(L_{zz}+2\lambda)$,
    then the sequence $z^t=\cA_i^t(z^0)$ converges to a stationary point
    $z^\star\in\cM_i$ for every~$i\in[N]$ and initialization $z^0\in\RR^m$. 
\end{lemma}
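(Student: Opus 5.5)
The plan is the standard Łojasiewicz route for gradient ascent on an analytic function with Lipschitz gradient.

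\textbf{Step 1 (smoothness and sufficient increase).} By \autoref{ass:loss_function_smoothness}, $\nabla f_i$ is $(L_{zz}+2\lambda)$-Lipschitz. The descent lemma, applied to $-f_i$, gives
\[
f_i(z^{t+1})\ge f_i(z^t)+\eta\Bigl(1-\tfrac{\eta(L_{zz}+2\lambda)}{2}\Bigr)\|\nabla f_i(z^t)\|_2^2 .
\]
Since $\eta<2/(L_{zz}+2\lambda)$, the constant $c\coloneqq \eta(1-\eta(L_{zz}+2\lambda)/2)$ is positive. So $f_i(z^t)$ is nondecreasing, and $f_i(z^{t+1})-f_i(z^t)\ge (c/\eta^2)\|z^{t+1}-z^t\|_2^2$.

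\textbf{Step 2 (boundedness).} The bound from the proof of \autoref{prop:inner_deterministic_optimizer} gives
\[
f_i(z)\le f(\theta,\hat z_i)+L_z^2/(2\lambda)-\tfrac{\lambda}{2}\|z-\hat z_i\|_2^2 .
\]
Hence the superlevel set $\{z: f_i(z)\ge f_i(z^0)\}$ is compact, and it contains the whole trajectory. It follows that $f_i(z^t)$ converges to a finite limit $f^\infty$. Summing the inequality of Step 1 shows $\sum_t\|\nabla f_i(z^t)\|_2^2<\infty$, so $\nabla f_i(z^t)\to0$. By continuity, every accumulation point $\bar z$ of $(z^t)$ lies in $\cM_i$, and $f_i(\bar z)=f^\infty$.

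\textbf{Step 3 (finite length via Łojasiewicz).} This is the key step and the expected obstacle. Because $f_i$ is real analytic by \autoref{ass:analytic_loss_function}, the Łojasiewicz gradient inequality holds at each point of the compact, nonempty set $\Omega$ of accumulation points. Since $f_i\equiv f^\infty$ on $\Omega$, a finite covering argument upgrades this to a uniform version: there exist $\epsilon,\delta>0$, $C>0$ and $\rho\in[1/2,1)$ such that
\[
|f_i(z)-f^\infty|^{\rho}\le C\|\nabla f_i(z)\|_2
\]
for all $z$ with $\mathrm{dist}(z,\Omega)<\epsilon$ and $f^\infty-\delta<f_i(z)\le f^\infty$.

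The trajectory eventually enters this region. For such $t$, use concavity of $s\mapsto s^{1-\rho}$ on the gaps $r_t\coloneqq f^\infty-f_i(z^t)$. Combining Steps 1 and 3 with $\|\nabla f_i(z^t)\|_2=\|z^{t+1}-z^t\|_2/\eta$ gives
\[
r_t^{1-\rho}-r_{t+1}^{1-\rho}\ge (1-\rho)\,r_t^{-\rho}(r_t-r_{t+1})\ge \kappa\,\|z^{t+1}-z^t\|_2
\]
for some constant $\kappa>0$. This is the Attouch–Bolte argument. The left side telescopes, so $\sum_t\|z^{t+1}-z^t\|_2<\infty$.

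One technicality must be handled here: the iterates need to stay in the $\epsilon$-neighbourhood once they enter it. The standard remedy is to note that the tail length is bounded by a multiple of $r_{t_0}^{1-\rho}$, which tends to $0$.

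\textbf{Step 4 (conclusion).} Finite length makes $(z^t)$ a Cauchy sequence, so it converges to some $z^\star$. By Step 2, $z^\star\in\cM_i$. The argument holds for every index $i$ and every initialization $z^0$, so $\cA_i^\infty$ is well defined.
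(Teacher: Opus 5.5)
Your proof is correct and follows the paper's route: the same sufficient-increase inequality from the ascent lemma and the same coercivity-based boundedness, after which the paper cites Theorem~3.2 of \cite{absil2005convergence} (whose proof is the \L{}ojasiewicz finite-length argument you write out) and then gets stationarity from $\nabla f_i(z^t)=(z^{t+1}-z^t)/\eta\to 0$. One small simplification: with your uniformized inequality the ``stay in the neighbourhood'' technicality disappears, because $\mathrm{dist}(z^t,\Omega)\to 0$ holds automatically for a bounded sequence and $f_i(z^t)\uparrow f^\infty$, so every late iterate already lies in the region where the inequality holds.
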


\begin{proof}
    Fix $i\in[N]$, and set $L \coloneqq L_{zz}+2\lambda$. Since $\nabla f_i$ is $L$-Lipschitz by \autoref{ass:loss_function_smoothness}, the ascent lemma \cite[Proposition~6.1.2]{bertsekas2015convex} and the update rule $z^{t+1}=\cA_i(z^t) = z^t+\eta\nabla f_i(z^t)$ yield
    \begin{equation}
        \label{eq:strong-descent-contidion}
        \begin{aligned}
        f_i(z^{t+1})-f_i(z^t)
        &\geq
        \left(\frac{1}{\eta}-\frac{L}{2}\right)
        \|z^{t+1}-z^t\|_2^2 \\
        &= \left(1-\frac{\eta L}{2}\right)
        \|\nabla f_i(z^t)\|_2
        \|z^{t+1}-z^t\|_2. 
    \end{aligned}
    \end{equation}
    As $\eta < 2/L$, a standard telescoping sum argument implies that $f_i(z^t)\ge f_i(z^0)$ for all~$t \ge 0$. Moreover, the $L_z$-Lipschitz continuity of $f(\theta,\cdot)$ implies that $f_i(z)$ is coercive, \ie, $f_i(z)\to-\infty$ as $\|z\|_2\to\infty$. Thus, the superlevel set $\{z\in\RR^m \mid f_i(z)\ge f_i(z^0)\}$ is compact and contains the entire sequence $\{z^t\}_{t\in\NN}$. To establish convergence, we note that the sequence satisfies the strong ascent conditions of \cite[Definition~3.1]{absil2005convergence}. This is an immediate consequence of~\eqref{eq:strong-descent-contidion}. In particular, the inequality in the first line of~\eqref{eq:strong-descent-contidion} ensures that $f_i(z^{t+1})=f_i(z^t)$  implies $z^{t+1}=z^t$. Because the sequence $\{z^t\}_{t\in\NN}$ is bounded and $f_i$ is real analytic by \autoref{ass:analytic_loss_function}, \cite[Theorem~3.2]{absil2005convergence} guarantees that the sequence converges and thus has a limit~$z^\star$. 
    Finally, the update rule ensures that
    \begin{align*}
        \nabla f_i(z^t) = \frac{z^{t+1}-z^t}{\eta} \longrightarrow 0 \quad \text{as} \quad t \to \infty.
    \end{align*}
    By the continuity of $\nabla f_i$, we thus have $\nabla f_i(z^\star)=0$. This implies that $z^\star\in\cM_i$. 
\end{proof}

\autoref{lem:ascent_convergence} shows that the asymptotic ascent operator $\cA_i^\infty$ maps any input~$z^0$ to a unique stationary point~$z^\star$ and is therefore indeed well-defined for every~$i\in[B]$. Hence, $\cA^\infty$ is well-defined, too.

To mathematically analyze the MPA algorithm, we also define a reassignment operator.

\begin{definition}[Reassignment Operator]
    \label{def:reassignment_operator}
    The reassignment operator $\mathcal{R} : \RR^{Bm} \to \RR^{Bm}$ is defined through $\cR(\mathbf{z})=\mathbf{z}'$, where $z'_i=z_{\sigma(i)}$ and
    $\sigma(i) \in  \argmax_{j \in [B]} f_i(z_j)$ for all $i\in[B]$. Ties among the maximizers are resolved using a self-favoring rule, \textit{i.e.} if $i \in \argmax_{j} f_i(z_j)$, we set $\sigma(i) = i$. Otherwise, ties are broken by choosing the maximizer with the smallest index.
\end{definition}

Given a list $\mathbf{z}=(z_1,\ldots,z_B)$ of $B$ samples, the reassignment operator $\mathcal{R}$ produces a new list $\mathbf{z}'=(z_1',\ldots,z_B')$, where, for each $i\in[B]$, the sample $z_i'$ is a maximizer of $f_i$ over the set $\{z_1,\ldots,z_B\}$. Equivalently, for every $i\in[B]$, $z_i'$ solves the restriction of problem~\eqref{eq:T_star_argmax} with $\hat z=z_i$, in which the uncountable original feasible set $\mathbb{R}^m$ is replaced by the finite set $\{z_1,\ldots,z_B\}$.

By the construction of the penalized objective functions $f_i$, $i\in[B]$, a map~$T$ is assignment-stationary on~$\{\hat{z}_i\}_{i=1}^{B}$ in the sense of \autoref{def:assignment_stationary} if and only if its outputs $z_i\coloneqq T(\hat z_i)$, $i\in[B]$, satisfy $f_i(z_i) \geq f_i(z_j)$ for all $i,j\in[B]$. Thus, each output $z_i$ maximizes the corresponding objective function $f_i$ over the set of outputs $\{z_1,\ldots,z_B\}$. By a slight abuse of terminology, we call the list $\mathbf{z}=(z_1,\ldots,z_B)$ of output samples \emph{assignment-stationary}. We call $\mathbf{z}$ simply {\em stationary} if~$\mathbf{z}\in\cM$.

\begin{lemma}[Properties of the Ascent and Reassignment Operators]
    \label{lem:mpa_operators}
    If~\Autorefs{ass:loss_function_smoothness}{ass:analytic_loss_function} hold and $0<\eta<2/(L_{zz}+2\lambda)$,  the ascent and reassignment operators have the following properties.
    \begin{itemize}[leftmargin=6mm]
        \item[(i)]\textbf{Stationarity:} $\cA^\infty(\mathbf{z})$ is stationary, and $\cR(\mathbf{z})$ is assignment-stationary for all $\mathbf{z} \in \RR^{Bm}$.
        \item[(ii)] \textbf{Idempotence:} $\cA^\infty \circ \cA^\infty = \cA^\infty$ and $\cR \circ \cR = \cR$. 
        \item[(iii)] \textbf{Strict monotonicity:} $\mathbf{f}(\cA^t(\mathbf{z})) \geq \mathbf{f}(\mathbf{z})$ and $\mathbf{f}(\cR(\mathbf{z})) \geq \mathbf{f}(\mathbf{z})$ for all $\mathbf{z}\in\RR^{Bm}$ and $t \in \NN \cup \{\infty\}$, and these inequalities are strict in at least one coordinate if $\cA^t(\mathbf{z}) \neq \mathbf{z}$ or $\cR(\mathbf{z}) \neq \mathbf{z}$, respectively.
    \end{itemize}
\end{lemma}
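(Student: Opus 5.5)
Each property is proved separately for the ascent operator and the reassignment operator, using \autoref{lem:ascent_convergence} for the former and the definition of $\cR$ (with its self-favoring tie-break) for the latter.

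\textbf{Stationarity.} By \autoref{lem:ascent_convergence}, each $\cA_i^\infty(z_i)$ lies in $\cM_i$, so $\cA^\infty(\mathbf z)\in\cM$. For $\cR$, let $\mathbf z'=\cR(\mathbf z)$. Every entry $z'_j$ belongs to the original pool $\{z_1,\dots,z_B\}$. Since $z'_i$ maximizes $f_i$ over that pool, $f_i(z'_i)\ge f_i(z'_j)$ for all $i,j$. This is exactly assignment-stationarity.

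\textbf{Idempotence.} For $\cA^\infty$: a point $z^\star\in\cM_i$ satisfies $\nabla f_i(z^\star)=0$, so $\cA_i(z^\star)=z^\star$ and $\cA_i^\infty(z^\star)=z^\star$. Combined with stationarity of $\cA^\infty(\mathbf z)$, this gives $\cA^\infty\circ\cA^\infty=\cA^\infty$.

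For $\cR$ we need $\cR(\mathbf z')=\mathbf z'$. The pool of $\mathbf z'$ is a subset of the pool of $\mathbf z$, and $z'_i$ still lies in it. So $z'_i$, being a maximizer of $f_i$ over the larger pool, also maximizes $f_i$ over the smaller one. Hence $i\in\argmax_j f_i(z'_j)$, and the self-favoring rule sets $\sigma(i)=i$. This tie-break is precisely what makes idempotence hold rather than just equality of objective values.

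\textbf{Monotonicity for $\cR$.} Since $z_i$ is in the pool, $f_i(\cR(\mathbf z)_i)=\max_j f_i(z_j)\ge f_i(z_i)$. If $\cR(\mathbf z)\ne\mathbf z$, some coordinate has $\sigma(i)\ne i$. By the self-favoring rule this happens only when $i\notin\argmax_j f_i(z_j)$, so $f_i(z'_i)>f_i(z_i)$ strictly.

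\textbf{Monotonicity for $\cA^t$, finite $t$.} The ascent inequality \eqref{eq:strong-descent-contidion} from the proof of \autoref{lem:ascent_convergence} gives $f_i(\cA_i(z))-f_i(z)\ge(1/\eta-L/2)\|\cA_i(z)-z\|^2\ge0$, and the constant is positive because $\eta<2/L$. Iterating yields monotone nondecrease along the trajectory. If $\cA_i^t(z_i)\ne z_i$, then some single step $s<t$ moves the point. That step produces a strict increase, and the other steps are nondecreasing, so the total increase is strict.

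\textbf{Monotonicity for $\cA^\infty$.} This is the main obstacle. Nonstrict monotonicity passes to the limit by continuity of $f_i$. For strictness, suppose $\cA_i^\infty(z_i)\ne z_i$. Then $z_i$ is not a fixed point of $\cA_i$, since a fixed point would give a constant sequence. Hence $\cA_i(z_i)\ne z_i$, and the first step gives $f_i(\cA_i(z_i))>f_i(z_i)$. All subsequent values are at least $f_i(\cA_i(z_i))$, so the limit satisfies $f_i(\cA_i^\infty(z_i))\ge f_i(\cA_i(z_i))>f_i(z_i)$. This step works only because \autoref{lem:ascent_convergence} guarantees the limit exists (via the analytic assumption); that existence is what lets us pass monotonicity to the limit and conclude strictness.
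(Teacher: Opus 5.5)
Your proof is correct and follows essentially the same route as the paper: \autoref{lem:ascent_convergence} and the fixed-point property of stationary points handle $\mathcal{A}^\infty$, the pool-inclusion argument with the self-favoring tie-break handles $\mathcal{R}$, and the ascent inequality \eqref{eq:strong-descent-contidion} gives monotonicity. Your first-step argument for strictness at $t=\infty$ supplies a detail that the paper's proof leaves implicit.
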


\begin{proof}
    \textbf{Stationarity:} 
    Fix an arbitrary $\mathbf{z}\in\RR^{Bm}$. \autoref{lem:ascent_convergence} readily implies that $\cA^\infty(\mathbf{z})\in\cM$ is stationary. Similarly, the definition of the reassignment operator readily implies that $\mathbf{z}' \coloneqq \cR(\mathbf{z})$ is such that $z'_i$ maximizes $f_i$ over the candidate set $\{z_1, \dots, z_B\}$. As $\{z'_1, \dots, z'_B\} \subseteq \{z_1, \dots, z_B\}$, it immediately follows that $f_i(z'_i) \ge f_i(z'_j)$ for all $i,j \in [B]$, meaning that $\mathbf{z}'$ is assignment-stationary.

    \textbf{Idempotence:} 
    Set $z^\star_i \coloneqq \cA_i^\infty(z_i)$. As $\nabla f_i(z^\star_i) = 0$ by virtue of \autoref{lem:ascent_convergence}, the point $z^\star_i$ is a fixed point of the one-step operator $\cA_i$, meaning that $\cA_i^\infty(z^\star_i) = z^\star_i$. As this is true for every $i\in[B]$, we have $\cA^\infty \circ \cA^\infty = \cA^\infty$. 
    as for the reassignment operator, set $\mathbf{z}' \coloneqq \cR(\mathbf{z})$. As $\mathbf{z}'$ is assignment-stationary, we have $i \in \argmax_{j\in[B]} f_i(z'_j)$ for all $i\in[B]$. The self-favoring tie-breaking rule in \autoref{def:reassignment_operator} thus dictates that $\cR(\mathbf{z}')_i = z'_i$ for every $i\in[B]$. Consequently, we have $\cR \circ \cR = \cR$.

    \textbf{Strict monotonicity:} 
    Equation~\eqref{eq:strong-descent-contidion} in the proof of \autoref{lem:ascent_convergence} and the assumptions on~$\eta$ imply that $f_i(\cA_i^t(z_i)) \geq f_i(z_i)$ and that this inequality is strict if $\cA_i^t(z_i) \neq z_i$, for every~$i\in[B]$. Hence, the $t$-step ascent operator~$\cA^t$ is strictly monotone for every~$t\in\NN\cup\{\infty\}$. The reassignment operator~$\cR$ is strictly monotone by construction. In particular, if $\mathbf{z}' \coloneqq \cR(\mathbf{z})$ and there exists $i\in[B]$ with $z'_i\neq z_i$, then the self-favoring tie-breaking rule ensures that $f_i(z'_i) > f_i(z_i)$.
\end{proof}

The ascent and reassignment operators enable us to provide an abstract mathematical description of the MPA algorithm. Each epoch is initialized at the empirical samples $\mathbf{z}^{(0)} \coloneqq (\hat{z}_1, \dots, \hat{z}_B)$, and the iterates $\mathbf{z}^{(r)} \coloneqq \mathcal{A}^K (\mathcal{R}(\mathbf{z}^{(r-1)}))$ are constructed recursively for all rounds~$r\in[R]$. Recall that this approach simultaneously solves $B$ coupled nonconvex optimization problems---one for each empirical sample~$\hat z_i$, $i\in[B]$. The strict monotonicity of the ascent and reassignment operators implies that, unless the MPA algorithm has converged to a fixed point of $\cA^K\circ\cR$, at least one of the $B$ objective functions is improved and {\em no} objective function deteriorates in each round.

A fundamental question is whether the inner iterates produced by Algorithm~\ref{alg:implicit_mpa} eventually stabilize when the algorithm is run over~$R=\infty$ rounds. Specifically, it is of interest to investigate whether there exists a finite round~$R$ after which the iterates remain unchanged. If so, we say that the algorithm \emph{terminates in finite time}. For finite~$K$, finite-time termination can occur only in exceptional cases, since performing additional gradient ascent steps generically leads to a strict improvement. For $K=\infty$, by contrast, the algorithm is guaranteed to terminate after finitely many rounds. This follows from the idempotence of both the asymptotic ascent operator~$\mathcal{A}^\infty$ and the reassignment operator~$\mathcal{R}$ (see \autoref{thm:mpa_finite} below). Although performing infinitely many gradient ascent steps is not computationally feasible, the resulting \emph{exact MPA algorithm} is nevertheless of theoretical interest. Besides being more amenable to mathematical analysis, it provides an accurate approximation for the practical algorithm with a finite (sufficiently large) number of gradient ascent steps per round.

In any epoch, the exact MPA algorithm terminates at the first round $r\in\NN$ with $\mathcal{A}^\infty (\mathcal{R}(\mathbf{z}^{(r)})) = \mathbf{z}^{(r)}$. Since every iterate $\mathbf{z}^{(r)}$ constitutes a fixed point of~$\cA^\infty$, this termination condition can be expressed more concisely as $\mathcal{R}(\mathbf{z}^{(r)}) = \mathbf{z}^{(r)}$. The output of the exact MPA algorithm is thus guaranteed to be stationary as well as assignment-stationary. These insights are formalized in the following theorem.

\begin{theorem}[Finite termination of exact MPA]
    \label{thm:mpa_finite}
    If~\Autorefs{ass:loss_function_smoothness}{ass:analytic_loss_function} hold, $0<\eta<2/(L_{zz}+2\lambda)$ and~$K=\infty$, then the full-batch MPA algorithm terminates after $R\leq 1 + \sum_{i=1}^N (|f_i(\cM_i)| - 1)$ rounds in each epoch. The returned output $\mathbf{z}^{(R)}$ is both stationary and assignment-stationary.
\end{theorem}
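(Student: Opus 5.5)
The proof combines the iterate description $\mathbf{z}^{(r)} = \cA^\infty(\cR(\mathbf{z}^{(r-1)}))$ with the operator properties in \autoref{lem:mpa_operators}. The finiteness of critical values from \autoref{cor:finite_critical_values} supplies the counting argument.

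\textbf{Where the iterates live.} By \autoref{lem:mpa_operators}(i), every iterate $\mathbf{z}^{(r)}$ with $r\ge 1$ is stationary. Hence $z^{(r)}_i\in\cM_i$ and $f_i(z^{(r)}_i)\in f_i(\cM_i)$ for each $i$. By \autoref{cor:finite_critical_values}, the set $f_i(\cM_i)$ is finite.

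\textbf{Monotone values.} By strict monotonicity (\autoref{lem:mpa_operators}(iii)), the value vector $\mathbf{f}(\mathbf{z}^{(r)})$ is coordinatewise nondecreasing in $r$. Suppose round $r\ge 1$ does not terminate, that is, $\mathcal{R}(\mathbf{z}^{(r)})\neq\mathbf{z}^{(r)}$. Then some coordinate $i$ satisfies $f_i(\cR(\mathbf{z}^{(r)})_i) > f_i(z^{(r)}_i)$. Applying $\cA^\infty$ afterwards can only increase each value further. So $f_i(z^{(r+1)}_i) > f_i(z^{(r)}_i)$, and this coordinate moves to a strictly larger element of $f_i(\cM_i)$.

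\textbf{Counting rounds.} Each coordinate can strictly increase at most $|f_i(\cM_i)|-1$ times. Summing over coordinates, at most $\sum_i(|f_i(\cM_i)|-1)$ non-terminating rounds can follow the first round. Adding one for the initial round, which maps the empirical samples into $\cM$, gives $R\le 1+\sum_i(|f_i(\cM_i)|-1)$.

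\textbf{Properties of the output.} At termination, $\cR(\mathbf{z}^{(R)})=\mathbf{z}^{(R)}$. Since $\cR(\mathbf{z}^{(R)})$ is assignment-stationary by \autoref{lem:mpa_operators}(i), so is $\mathbf{z}^{(R)}$. It is also stationary because it lies in the image of $\cA^\infty$. The algorithm's final reassignment step leaves it unchanged.

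\textbf{Main obstacle.} The delicate step is the bookkeeping in the strict-increase argument. We must make sure that a strict gain from $\cR$ is not lost to an equality case in $\cA^\infty$; this is excluded because $\cA^\infty$ is monotone. We must also check the termination criterion: if $\cR(\mathbf{z}^{(r)})=\mathbf{z}^{(r)}$, then idempotence of $\cA^\infty$ gives $\mathbf{z}^{(r+1)}=\mathbf{z}^{(r)}$, so the iterates stop changing. One must also fix the precise round count, namely whether round $1$ is counted as the ascent from $\hat{\mathbf z}$, in order to match the stated constant $1+\sum_i(\cdot)$.
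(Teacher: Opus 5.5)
Your proof is correct and follows the paper's argument essentially step for step. After round~1 every iterate is stationary, so each $f_i(z_i^{(r)})$ lies in the finite set $f_i(\cM_i)$; every non-terminal round forces a strict jump in some coordinate, because $\cR$ is strictly monotone (thanks to self-favoring ties) and $\cA^\infty$ is monotone; and the fixed-point condition $\cR(\mathbf z^{(R)})=\mathbf z^{(R)}$ gives assignment-stationarity. The only cosmetic difference is that you phrase non-termination as $\cR(\mathbf z^{(r)})\neq\mathbf z^{(r)}$ rather than the paper's $\mathbf z^{(r)}\neq\mathbf z^{(r-1)}$; as you note, these are equivalent by idempotence of $\cA^\infty$.
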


\begin{proof}
    The first iterate $\mathbf{z}^{(1)} = \cA^\infty(\cR(\mathbf{z}^{(0)}))$ is stationary by \autoref{lem:mpa_operators}\,{\em (i)}, and thus \autoref{cor:finite_critical_values} implies that~$f_i(z^{(1)}_i)$ belongs to the finite set~$f_i(\cM_i)$ for every~$i\in[B]$. For any subsequent non-terminal round $r \geq 2$, we have $\mathbf{z}^{(r)} \neq \mathbf{z}^{(r-1)}$. The definition of the exact MPA algorithm then~yields
    \begin{equation*}
        \mathbf{f}(\mathbf{z}^{(r)}) =  \mathbf{f}(\cA^\infty(\cR(\mathbf{z}^{(r-1)}))) \ge \mathbf{f}(\mathbf{z}^{(r-1)}),
    \end{equation*}
    where the inequality is strict for at least one coordinate~$i\in[B]$ by virtue of \autoref{lem:mpa_operators}\,{\em (iii)}. Because both~$\mathbf{z}^{(r)}$ and~$\mathbf{z}^{(r-1)}$ belong to~$\cM$, this represents a strict jump between two values inside $f_i(\cM_i)$. 
    For each coordinate, $f_i$ can jump to a strictly higher critical value at most $|f_i(\cM_i)| - 1$ times. Summing this bound over all~$N$ coordinates, and adding~$1$ to account for the first round, yields the desired upper bound on the total number of rounds~$R$ until termination: $1 + \sum_{i=1}^N (|f_i(\cM_i)| - 1)$. 
    
    The terminal iterate~$\mathbf{z}^{(R)}$ is generated by the exact ascent operator~$\cA^\infty$, which guarantees stationarity by \autoref{lem:mpa_operators}\,{\em (i)}. The termination condition thus simplifies to $\mathcal{R}(\mathbf{z}^{(R)}) = \mathbf{z}^{(R)}$, which ensures via \autoref{lem:mpa_operators}\,{\em (i)} that $\mathbf{z}^{(R)}$ is also assignment-stationary. This observation completes the proof.
\end{proof}

\begin{remark}[Cyclical monotonicity]
    \label{rem:mpa_caveats}
    In practice, an epoch of Algorithm~\ref{alg:implicit_mpa} may be interrupted before a fixed point of~$\cA^K\circ\cR$ is reached. In this case, the last iterate may fail to be assignment-stationary. Indeed, while~$\cR$ induces assignment-stationarity, $\cA^K$ can fail to preserve assignment-stationarity even if~$K=\infty$. In practice, it is therefore expedient to conclude the MPA algorithm with an extra reassignment step to enforce assignment-stationarity and cyclical monotonicity. We also emphasize that assignment-stationarity is enforced only locally on the given mini-batch. 
\end{remark}

\subsection{Global Optimality Guarantees for MPA}
\label{sec:mpa_global_optimality}

The goal of this section is to show that, for any fixed model parameter~$\theta\in\Theta$ and tolerance~$\epsilon>0$, the inner loop of Algorithm~\ref{alg:implicit_mpa} computes the adversarial risk~$F(\theta)$ with absolute error at most~$\epsilon$ with high probability, provided that the batch size~$B$ is sufficiently large. Thus, MPA essentially solves the nonconvex functional optimization problem~\eqref{eq:maximize-over-maps} to global optimality irrespective of the penalty parameter~$\lambda>0$. In contrast, recall that the state-of-the-art PA algorithm by \citet{sinha2020certifying} provides such a global optimality guarantee only for unrealistically large penalty parameters~$\lambda>L_{zz}/2$.

Since our analysis of MPA is probabilistic, we model the empirical samples explicitly as independent and identically distributed (i.i.d.) random vectors denoted by $\hat Z_1,\ldots,\hat Z_B$.
We study a generic instance of MPA with an arbitrary number of local ascent steps~$K\in\NN\cup\{\infty\}$ and rounds~$R \in \NN$ and with any step size $\eta\in (0,2/(L_{zz}+2\lambda))$; see Algorithm~\ref{alg:implicit_mpa}. Recall from \autoref{sec:implicit} that MPA generates an {\em implicit} adversarial map~$T_{\mathrm{mpa}}$, which is only defined on the given batch of empirical samples. Using the notation from \autoref{sec:mpa_analysis}, this map can be constructed as follows. Initialized at $\mathbf{Z}^{(0)} \coloneqq (\hat{Z}_1, \dots, \hat{Z}_B)$, MPA computes iterates $\mathbf{Z}^{(r)} \coloneqq \mathcal{A}^K \bigl(\mathcal{R}(\mathbf{Z}^{(r-1)})\bigr)$ for all~$r\in[R]$ and outputs~$\mathcal R(\mathbf{Z}^{(R)})$, where~$\mathcal{A}^K$ and~$\mathcal{R}$ represent the ascent and reassignment operators of Definitions~\ref{def:ascent-operator} and~\ref{def:reassignment_operator}. Thus, we can define~$T_{\mathrm{mpa}}(\hat{Z}_i)$ as the $i$-th sample in the last batch of adversarial samples~$\mathcal R(\mathbf{Z}^{(R)})$. 

Throughout this section, we continue to denote the penalized objective associated with the $i$-th empirical sample by~$f_i$ as in \eqref{eq:per_sample_objective_function}, bearing in mind that this sample now constitutes a random vector. We further assume that the empirical samples are governed by a sufficiently regular distribution.

\begin{assumption}
    \label{ass:data_distribution}
    The empirical samples $\hat{Z}_1, \dots, \hat{Z}_B$ are drawn independently from a distribution $\PP_0 \in \mathcal{P}(\RR^m)$ that admits an everywhere positive density with respect to Lebesgue measure on~$\RR^m$.
\end{assumption}

In the following, we set $\rho^2 \coloneqq \EE_{\PP_0}[\|Z\|_2^2]$ and note that $\rho^2<\infty$ because $\PP_0 \in \mathcal P(\RR^m)$. \autoref{ass:data_distribution} implies that any optimal adversarial map is square-integrable with respect to~$\PP_0$.

\begin{lemma}[Square integrability of the optimal adversarial map]
    \label{lem:optimal_adversary_bounded_second_moment}
    If~\autoref{ass:data_distribution} holds and $M^2 \coloneqq 2\rho^2 + L_z^2/(2\lambda^2)$, any optimal adversarial map $T^\star$ in the sense of \autoref{prop:measurability_Tstar} satisfies
    \begin{equation*}
        \EE_{\PP_0}\bigl[\|T^\star(\hat{Z}_i)\|_2^2\bigr] \leq M^2\quad \forall i\in[B].
    \end{equation*}
\end{lemma}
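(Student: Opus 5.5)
The plan is to derive a pointwise displacement bound for $T^\star$ from the first-order optimality condition, and then integrate it against $\PP_0$. Fix any $\hat z\in\RR^m$. By \autoref{prop:measurability_Tstar}, $T^\star(\hat z)$ maximizes the differentiable function $z\mapsto f(\theta,z)-\lambda\|z-\hat z\|_2^2$ over all of $\RR^m$. Its gradient must therefore vanish there:
\[
\nabla_z f(\theta,T^\star(\hat z)) = 2\lambda\bigl(T^\star(\hat z)-\hat z\bigr).
\]
Since $f(\theta,\cdot)$ is $L_z$-Lipschitz, its gradient norm is at most $L_z$. This yields the deterministic bound
\[
\|T^\star(\hat z)-\hat z\|_2\le \frac{L_z}{2\lambda}\qquad\forall\,\hat z\in\RR^m.
\]
This is the same argument used in the proof of \autoref{lem:compact_analytic_subset}. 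If one prefers to avoid differentiability, there is an alternative: compare the objective value at $T^\star(\hat z)$ with that at $\hat z$. Optimality gives $\lambda\|T^\star(\hat z)-\hat z\|_2^2\le f(\theta,T^\star(\hat z))-f(\theta,\hat z)\le L_z\|T^\star(\hat z)-\hat z\|_2$, hence $\|T^\star(\hat z)-\hat z\|_2\le L_z/\lambda$. This cruder constant would not match $M$, so I will use the stationarity version.

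Next, I will use the elementary inequality $\|a+b\|_2^2\le 2\|a\|_2^2+2\|b\|_2^2$ with $a=\hat z$ and $b=T^\star(\hat z)-\hat z$. This gives
\[
\|T^\star(\hat z)\|_2^2\le 2\|\hat z\|_2^2+2\cdot\frac{L_z^2}{4\lambda^2}=2\|\hat z\|_2^2+\frac{L_z^2}{2\lambda^2}.
\]
Measurability of $T^\star$ from \autoref{prop:measurability_Tstar} makes $\|T^\star(\hat Z_i)\|_2^2$ a nonnegative random variable. Since $\hat Z_i\sim\PP_0$ under \autoref{ass:data_distribution}, taking expectations and using $\EE_{\PP_0}[\|\hat Z_i\|_2^2]=\rho^2$ yields
\[
\EE_{\PP_0}\bigl[\|T^\star(\hat Z_i)\|_2^2\bigr]\le 2\rho^2+\frac{L_z^2}{2\lambda^2}=M^2
\]
for every $i\in[B]$.

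The step needing most care is the first one. It relies on the standing assumption that $f(\theta,\cdot)$ is differentiable on all of $\RR^m$ (with gradient norm at most $L_z$), so that an unconstrained global maximizer is a stationary point. Everything after that is routine. The positive-density part of \autoref{ass:data_distribution} is not needed here; only the finite second moment of $\PP_0$ is used.
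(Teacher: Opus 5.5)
Your proposal is correct and follows essentially the same route as the paper. Both arguments use the stationarity condition at the global maximizer together with $\|\nabla_z f(\theta,\cdot)\|_2\le L_z$ to get $\|T^\star(\hat z)-\hat z\|_2\le L_z/(2\lambda)$. Both then bound $\|T^\star(\hat z)\|_2^2\le 2\|\hat z\|_2^2+L_z^2/(2\lambda^2)$ and take expectations under $\PP_0$.
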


\begin{proof}
    By construction, the random vector $Z^\star\coloneq T^{\star}(\hat{Z}_i)$ is a global maximizer of the random function~$f_i$ pointwise for every uncertainty realization. The stationarity condition $\nabla_z f(\theta,Z^{\star}) =2\lambda(Z^{\star} - \hat{Z}_i)$ thus implies via the proof of~\autoref{lem:compact_analytic_subset} that $\|Z^{\star} - \hat{Z}_i\|_2 \leq \frac{L_z}{2\lambda}$. Hence, we obtain
    \begin{equation*}
        \|Z^{\star}\|_2^2 
        \leq \left(\|\hat{Z}_i\|_2 + \|Z^{\star} - \hat{Z}_i\|_2\right)^2 
        \leq \left(\|\hat{Z}_i\|_2 + \frac{L_z}{2\lambda}\right)^2 
        \leq 2\|\hat{Z}_i\|_2^2 + \frac{L_z^2}{2\lambda^2}.
    \end{equation*}
    The claim then follows by replacing~$Z^\star$ with its definition~$T^\star(\hat Z_i)$, taking expectations with respect to~$\PP_0$ on both sides of the resulting inequality, and invoking the definition of~$M^2$.
\end{proof}

To establish the claimed optimality guarantee of MPA, we fix any tolerance~$\epsilon>0$ and optimal adversarial map~$T^\star$ in the sense of \autoref{prop:measurability_Tstar}, and we introduce two positive radii
\begin{equation}
    \label{def:bulk_radius}
    r_\epsilon \coloneqq \sqrt{\frac{\epsilon}{L_{zz}+2\lambda}} \quad \text{and} \quad R_\epsilon \coloneqq \frac{2L_z M}{\sqrt{2\lambda\epsilon}},
\end{equation}
where $M^2$ stands for the second-order moment bound from \autoref{lem:optimal_adversary_bounded_second_moment}. Using this notation, we partition the index set~$[B]$ of the empirical samples into three disjoint random sets
\begin{subequations}
\label{eq:batch-partition}
\begin{align}
    \label{def:outliers_set}
    O_\epsilon & \coloneqq \big\{i\in[B] : \|T^\star(\hat{Z}_i)\|_2 > R_\epsilon \big\}, \\
    \label{def:uncovered_inliers_set}
    U_\epsilon & \coloneqq \big\{i\in[B] : i \not\in O_\epsilon,~ \|\hat Z_j - T^\star(\hat Z_i)\|_2 > r_{\epsilon} ~ \forall j \neq i\big\} ,\\
    \label{def:covered_inliers_set}
    C_\epsilon & \coloneqq \big\{i\in[B] : i \not\in O_\epsilon,~ i\not\in U_\epsilon \big\}.
\end{align}
\end{subequations}
Samples with indices in~$O_\epsilon$ are referred to as `outliers,' whereas samples with indices in~$U_\epsilon$ and~$C_\epsilon$ are termed `uncovered inliers' and `covered inliers', respectively. We will see below that the numbers of outliers and uncovered inliers are small with high probability and that if~$\hat z$ is any covered inlier, then there exists an empirical sample that is near-optimal in~\eqref{eq:T_star_argmax}. Together, these insights imply that~$T_{\mathrm{mpa}}$ is near-optimal in problem~\eqref{eq:maximize-over-maps} if $\hat\PP=\tfrac{1}{B} \sum_{i=1}^B \delta_{\hat Z_i}$ is the empirical distribution on the given batch.

We first prove that if $B$ is large, then the fraction of outliers is unlikely to exceed $\cO(\epsilon)$.

\begin{lemma}[Outliers are rare]
    \label{lem:outlier_fraction}
    If~\autoref{ass:data_distribution} holds, then we have
    \begin{equation*}
        \PP_0^B\left(\frac{|O_\epsilon|}{B} > \frac{\lambda\epsilon}{L_z^2}\right) \leq \exp\left(-B\lambda\epsilon/6L_z^2\right).
    \end{equation*}
\end{lemma}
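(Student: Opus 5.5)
The plan is to reduce the claim to a multiplicative Chernoff bound for a sum of i.i.d.\ Bernoulli indicators, with the success probability controlled by Markov's inequality via \autoref{lem:optimal_adversary_bounded_second_moment}. Define $X_i \coloneqq \mathbbm{1}\{\|T^\star(\hat Z_i)\|_2 > R_\epsilon\}$ for $i\in[B]$, so that $|O_\epsilon| = \sum_{i=1}^B X_i$. The optimal adversarial map $T^\star$ is a fixed Borel map by \autoref{prop:measurability_Tstar}, since $\theta$ and $\lambda$ are fixed. Moreover, $X_i$ depends only on $\hat Z_i$. Under \autoref{ass:data_distribution}, the $X_i$ are therefore i.i.d.\ Bernoulli with common parameter $p \coloneqq \PP_0(\|T^\star(\hat Z_1)\|_2 > R_\epsilon)$.

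First step: bound $p$. Markov's inequality applied to $\|T^\star(\hat Z_1)\|_2^2$, together with \autoref{lem:optimal_adversary_bounded_second_moment}, gives $p \le M^2/R_\epsilon^2$. Inserting $R_\epsilon^2 = 4L_z^2M^2/(2\lambda\epsilon)$ from~\eqref{def:bulk_radius}, this becomes
\[
p \le \frac{M^2}{R_\epsilon^2} = \frac{\lambda\epsilon}{2L_z^2} \eqqcolon p'.
\]
The point is that the threshold $\lambda\epsilon/L_z^2$ in the statement equals exactly $2p'$, which is why the radius $R_\epsilon$ was chosen this way.

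Second step: the concentration bound. If $2p' \ge 1$, the event $\{|O_\epsilon|/B > 2p'\}$ is empty and there is nothing to prove, so assume $p'<1/2$. A Binomial$(B,p)$ variable is stochastically dominated by a Binomial$(B,p')$ variable when $p\le p'$, for instance via a standard monotone coupling of uniform random variables. Hence
\[
\PP_0^B\bigl(|O_\epsilon| > 2Bp'\bigr) \le \PP\bigl(S' \ge 2Bp'\bigr), \qquad S'\sim\mathrm{Bin}(B,p').
\]
The multiplicative Chernoff bound $\PP(S'\ge(1+\delta)\mu)\le\exp(-\delta^2\mu/(2+\delta))$ with $\mu = Bp'$ and $\delta=1$ then yields $\exp(-Bp'/3) = \exp(-B\lambda\epsilon/(6L_z^2))$, which is the claimed bound. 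As an alternative to the domination argument, one can apply the Chernoff bound directly to $S=\sum_i X_i$ in its form valid for any upper bound $\mu'\ge \EE[S]$.

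I expect no real obstacle here. The only points needing care are the measurability and independence of the $X_i$, which follow because $T^\star$ is a deterministic Borel map applied to independent samples, and the bookkeeping that matches the Markov bound to exactly half of the threshold so that $\delta=1$ produces the constant $6$.
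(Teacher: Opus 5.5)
Your proof is correct and follows essentially the same route as the paper: Bernoulli indicators for the outlier events, Markov's inequality combined with \autoref{lem:optimal_adversary_bounded_second_moment} to get $p \le \bar p \coloneqq \lambda\epsilon/(2L_z^2)$, and a multiplicative Chernoff bound. The only difference is in bridging $p$ and $\bar p$. You dominate by $\mathrm{Bin}(B,\bar p)$ and take $\delta=1$. The paper keeps the true $p$, sets $\delta = 2\bar p/p - 1 \ge 1$, and uses $(1+\delta)\ln(1+\delta)\ge 4\delta/3$ together with $p\delta \ge \bar p$, treating $p=0$ separately.
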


\begin{proof}
The indicator variables $Y_i \coloneqq \mathbf{1}_{\{\|T^\star(\hat{Z}_i)\|_2 > R_\epsilon\}}$, $i \in [B]$, form a sequence of i.i.d.\ Bernoulli trials. We denote their common success probability by~$p$ and assume temporarily that $p>0$. We also set $\bar{p}\coloneq \lambda\epsilon /(2L_z^2)$. Markov's inequality together with \autoref{lem:optimal_adversary_bounded_second_moment} then implies that 
    \begin{equation*}
        p = \EE_{\PP_0}[Y_i] = \PP_0\left(\|T^\star(\hat{Z}_i)\|^2_2 > R_\epsilon^2\right) \leq \frac{M^2}{R_\epsilon^2} = \bar p,
    \end{equation*}
    where the last equality exploits the definition of~$R_\epsilon$ in~\eqref{def:bulk_radius}. Re-expressing the number of outliers~$|O_\epsilon|$ as the sum of all Bernoulli trials and setting $\delta \coloneqq 2\bar{p}/p-1\geq 1$ therefore yields 
    \begin{align*}
    \PP_0^B\left(\frac{|O_\epsilon|}{B} > \frac{\lambda\epsilon}{L_z^2}\right)
    &= \PP_0^B\left(\sum_{i=1}^{B}Y_i > 2B\bar{p}\right)
    = \PP_0^B\left(\sum_{i=1}^{B}Y_i > (1+\delta)Bp\right) \\
    &\leq \left(\frac{e^\delta}{(1+\delta)^{1+\delta}}\right)^{Bp} \leq e^{-Bp\delta/3} \leq e^{-B\bar{p}/3}.
    \end{align*}
    Here, the first two equalities follow from the definitions of~$\bar{p}$ and~$\delta$, respectively. The first inequality exploits Chernoff’s inequality for sums of independent Bernoulli random variables \cite[Theorem~2.3.1]{vershynin2018high}, and the second inequality holds because $(1+\delta)\ln(1+\delta)\geq 4\delta/3$ for all~$\delta \geq 1$. The last inequality holds because $p\delta = 2\bar{p} - p \ge \bar{p}$. Replacing $\bar{p}$ with its definition then yields the claim for $p>0$. If $p=0$, on the other hand, $\PP_0^B(\sum_{i=1}^B Y_i>0)=0$, and the claim trivially holds.
\end{proof}

In order to show that the fraction of uncovered inliers is unlikely to exceed $\cO(\epsilon)$, it is expedient to define the minimum probability mass of a ball of radius $r_{\epsilon}$ centered within the closed ball $B_{R_\epsilon}(0)$ as
\begin{equation*}
    \underline{p}(\epsilon) \coloneqq \inf_{z \in B_{R_\epsilon}(0)} \PP_0\left(B_{r_{\epsilon}}(z)\right).
\end{equation*}

\begin{lemma}[Strict positivity of $\underline{p}(\epsilon)$]
    \label{lem:positive_covering_mass}
    If~\autoref{ass:data_distribution} holds, then we have~$\underline{p}(\epsilon) > 0$.  
\end{lemma}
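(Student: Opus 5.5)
The plan is to show that the function $g(z)\coloneqq \PP_0(B_{r_\epsilon}(z))$ is continuous on $\RR^m$ and strictly positive everywhere. Its infimum over the compact set $B_{R_\epsilon}(0)$ is then attained at some point $z_0$, which gives $\underline{p}(\epsilon)=g(z_0)>0$. Here $r_\epsilon$ and $R_\epsilon$ are the fixed positive radii from~\eqref{def:bulk_radius}, so the ball family is fixed once $\epsilon$ is fixed.

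\textbf{Positivity.} Let $p_0$ denote the everywhere positive density of $\PP_0$ guaranteed by \autoref{ass:data_distribution}. For any $z\in\RR^m$, the ball $B_{r_\epsilon}(z)$ has positive Lebesgue measure because $r_\epsilon>0$. Since $p_0>0$ on this set, we get $g(z)=\int_{B_{r_\epsilon}(z)}p_0(x)\,\diff x>0$. This uses the elementary fact that the integral of a strictly positive measurable function over a set of positive Lebesgue measure is strictly positive, which follows by writing the set as the increasing union of the sets $\{p_0>1/n\}$.

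\textbf{Continuity.} For $z_n\to z$, I will write $g(z_n)=\int \mathbf{1}_{B_{r_\epsilon}(z_n)}(x)\,p_0(x)\,\diff x$. The indicators converge to $\mathbf{1}_{B_{r_\epsilon}(z)}(x)$ for every $x$ off the sphere $\partial B_{r_\epsilon}(z)$. That sphere is Lebesgue-null, so it is also $\PP_0$-null by absolute continuity. The integrands are dominated by $p_0\in L^1$, so dominated convergence gives $g(z_n)\to g(z)$.

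\textbf{Conclusion.} A continuous positive function on the compact set $B_{R_\epsilon}(0)$ attains a positive minimum, so $\underline{p}(\epsilon)>0$.

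The only potential subtlety is the continuity step. Without continuity, pointwise positivity alone would not prevent the infimum from being zero. The absolute continuity of $\PP_0$ resolves this, since it makes ball boundaries negligible. As a backup, one can bypass continuity by covering $B_{R_\epsilon}(0)$ with finitely many balls $B_{r_\epsilon/2}(y_k)$. Each ball $B_{r_\epsilon}(z)$ with $z\in B_{R_\epsilon}(0)$ then contains some $B_{r_\epsilon/2}(y_k)$, so $\underline{p}(\epsilon)\ge\min_k\PP_0(B_{r_\epsilon/2}(y_k))>0$.
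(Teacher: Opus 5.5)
Your proposal is correct and follows the paper's proof essentially step by step: pointwise positivity of $z\mapsto\PP_0(B_{r_\epsilon}(z))$ from the everywhere positive density, continuity via dominated convergence because ball boundaries are $\PP_0$-null, and a positive minimum attained on the compact ball $B_{R_\epsilon}(0)$ by Weierstrass' theorem. Your finite-covering backup argument is also valid and slightly more elementary, since it avoids the continuity step altogether.
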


\begin{proof}
    Introduce an auxiliary function~$\kappa:\RR^m\to [0,1]$ defined via $\kappa(z)=\PP_0(B_{r_\epsilon}(z))$. As $\epsilon>0$ and~$\PP_0$ admits an everywhere positive density with respect to Lebesgue measure, we have~$\kappa(z)>0$ for every~$z\in\RR^m$. To prove the lemma, we must show that~$\kappa(z)$ is bounded away from~$0$ on~$B_{R_\epsilon}(0)$.
    
    By \autoref{ass:data_distribution}, the data-generating distribution $\PP_0$ is absolutely continuous with respect to Lebesgue measure on~$\RR^m$. Thus, for any fixed~$z\in\RR^m$, the boundary of~$B_{r_\epsilon}(z)$ has $\PP_0$-measure zero, and the function $h_{\hat z}(z)\coloneq \mathbf{1}_{\{\hat z\in B_{r_\epsilon}(z)\}}$ is continuous (in fact, constant) at~$z$ for $\PP_0$-almost every~$\hat z\in\RR^m$. Consequently, $\kappa(z)$ is continuous thanks to the dominated convergence theorem \cite[Theorem~3.31]{axler2020measure}. As the closed ball~$B_{R_\epsilon}(0)$ is compact, there exists a minimizer~$z^\star\in B_{R_\epsilon}(0)$ with $\underline{p}(\epsilon)=\kappa(z^\star)>0$ by Weierstrass' extreme value theorem. Thus, the claim follows.
\end{proof}

Armed with \autoref{lem:positive_covering_mass}, we can prove that the fraction of uncovered inliers is unlikely to exceed~$\cO(\epsilon)$.

\begin{lemma}[Uncovered inliers are rare]
    \label{lem:uncovered_inlier_fraction}
    If~\autoref{ass:data_distribution} holds, then we have
    \begin{equation}
    \label{eq:uncovered_inlier_bound}
        \PP_0^B\left(\frac{|U_\epsilon|}{B} > \frac{\lambda\epsilon}{L_z^2}\right) \leq \frac{L_z^2}{\lambda\epsilon} \exp\left(-(B-1)\underline p(\epsilon)\right).
    \end{equation}
\end{lemma}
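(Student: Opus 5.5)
The plan is to bound the expected number of uncovered inliers and then apply Markov's inequality, which explains the prefactor $L_z^2/(\lambda\epsilon)$ in~\eqref{eq:uncovered_inlier_bound}. By Markov's inequality,
\[
\PP_0^B\Bigl(\tfrac{|U_\epsilon|}{B} > \tfrac{\lambda\epsilon}{L_z^2}\Bigr) \le \frac{L_z^2}{\lambda\epsilon}\,\frac{\EE[|U_\epsilon|]}{B},
\]
so it suffices to show that $\PP_0^B(i\in U_\epsilon)\le \exp(-(B-1)\underline p(\epsilon))$ for every $i\in[B]$. By linearity of expectation, this gives $\EE[|U_\epsilon|]/B \le \exp(-(B-1)\underline p(\epsilon))$.

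To bound $\PP_0^B(i\in U_\epsilon)$, we condition on $\hat Z_i$. The point $T^\star(\hat Z_i)$ is a deterministic, Borel-measurable function of $\hat Z_i$ by \autoref{prop:measurability_Tstar}. The event $i\in U_\epsilon$ requires that $i\notin O_\epsilon$, \ie, $\|T^\star(\hat Z_i)\|_2\le R_\epsilon$, so that $T^\star(\hat Z_i)\in B_{R_\epsilon}(0)$. It also requires that each of the other $B-1$ samples $\hat Z_j$ falls outside $B_{r_\epsilon}(T^\star(\hat Z_i))$.

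By independence (\autoref{ass:data_distribution}), conditional on $\hat Z_i$ the samples $\hat Z_j$, $j\neq i$, are i.i.d.\ $\PP_0$. The conditional probability of the event is therefore at most
\[
\mathbf{1}_{\{\|T^\star(\hat Z_i)\|_2\le R_\epsilon\}}\bigl(1-\PP_0(B_{r_\epsilon}(T^\star(\hat Z_i)))\bigr)^{B-1}\le (1-\underline p(\epsilon))^{B-1}.
\]
The inequality holds because the center lies in $B_{R_\epsilon}(0)$, and the definition of $\underline p(\epsilon)$ then gives $\PP_0(B_{r_\epsilon}(T^\star(\hat Z_i)))\ge\underline p(\epsilon)$. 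Applying $1-x\le e^{-x}$ and then taking expectations over $\hat Z_i$ (tower property) yields $\PP_0^B(i\in U_\epsilon)\le e^{-(B-1)\underline p(\epsilon)}$.

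The only delicate step is the conditioning. The ball center depends on $\hat Z_i$, so the argument needs measurability of $T^\star$ and a Fubini-type justification for the product measure $\PP_0^B$. Both are standard given \autoref{prop:measurability_Tstar}.

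Positivity of $\underline p(\epsilon)$, from \autoref{lem:positive_covering_mass}, is not needed for the inequality itself. It is what makes the bound nontrivial, since it decays exponentially in $B$.
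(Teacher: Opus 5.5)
Your proposal is correct and follows essentially the same route as the paper. It first bounds $\PP_0^B(i\in U_\epsilon)\le (1-\underline p(\epsilon))^{B-1}\le e^{-(B-1)\underline p(\epsilon)}$ for each index, by conditioning on $\hat Z_i$ and using independence of the other samples with the definition of $\underline p(\epsilon)$, and then applies linearity of expectation and Markov's inequality. Your indicator $\mathbf{1}_{\{\|T^\star(\hat Z_i)\|_2\le R_\epsilon\}}$ replaces the paper's conditioning on the event $i\notin O_\epsilon$, which is a slightly cleaner way to write the same step because it never conditions on a possibly null event.
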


\begin{proof}
    Fix any index $i \in [B]$, and note that
    \begin{align*}
        \PP_0^B(i \in U_\epsilon) & = \PP_0^B(i\in O_\epsilon) \,\PP_0^B(i \in U_\epsilon | i\in O_\epsilon) + \PP_0^B(i \not\in O_\epsilon) \,\PP_0^B(i \in U_\epsilon | i\not\in O_\epsilon) \\
        & \leq \PP_0^B(i \in U_\epsilon | i\not\in O_\epsilon) = \EE_{\PP_0^B} \big[\PP_0^B(i \in U_\epsilon | \hat Z_i,\, i\not\in O_\epsilon) \big| i\not\in O_\epsilon\big] ,
    \end{align*}
    where the two equalities follow from the law of total probability, and the inequality holds because~$O_\epsilon\cap U_\epsilon=\emptyset$, which implies that $\PP_0^B(i \in U_\epsilon | i\in O_\epsilon)=0$. The event $i\not\in O_\epsilon$ is completely determined by~$\hat Z_i$ (that is, it belongs to the $\sigma$-algebra generated by~$\hat Z_i$). As the empirical samples $\hat Z_j$, $j\neq i$, are mutually independent as well as independent of~$\hat Z_i$, the definition of~$U_\epsilon$ in~\eqref{def:uncovered_inliers_set} then implies that
    \begin{align*}
        \PP_0^B \big(i \in U_\epsilon \big| \hat Z_i,\, i\not\in O_\epsilon \big) &= \prod_{j\in[B]:\, j\neq i}\PP_0^B \Big(\|\hat Z_j - T^\star(\hat Z_i)\|_2 > r_{\epsilon} \Big| \hat Z_i,\, i\not\in O_\epsilon \Big) \\
        & \leq (1-\underline p(\epsilon))^{B-1} \leq \exp\bigl(-(B-1)\underline p(\epsilon)\bigr).
    \end{align*}
    Here, the first inequality follows from the definition of~$\underline p(\epsilon)$, which ensures that
    \[
        \PP_0\Big( \|\hat Z_j- T^\star(\hat Z_i)\|_2\leq r_{\epsilon} \Big| \hat Z_i,\, i\not\in O_\epsilon \Big) \ge \underline{p}(\epsilon).
    \]
    Combining the above estimates finally yields
    \begin{align*}
        \PP_0^B(i \in U_\epsilon) & \leq \exp\bigl(-(B-1)\underline p(\epsilon)\bigr) \quad \forall i\in[B].
    \end{align*}
    As $|U_\epsilon|=\sum_{i=1}^B \mathbf 1_{\{i\in U_\epsilon\}}$, the expected number of uncovered inliers satisfies
    \begin{equation*}
        \EE_{\PP_0^B}[|U_\epsilon|] = \sum_{i=1}^B \PP_0^B(i \in U_\epsilon) \le B \exp\bigl(-(B-1)\underline p(\epsilon)\bigr).
    \end{equation*}
    Note that $|U_\epsilon|$ is a non-negative random variable. By Markov's inequality, we thus obtain
    \begin{equation*}
        \PP_0^B\left( \frac{|U_\epsilon|}{B} > \frac{\lambda\epsilon}{L_z^2} \right) \leq \frac{\EE_{\PP_0^B}[|U_\epsilon| / B]}{\lambda\epsilon / L_z^2} \leq \frac{L_z^2}{\lambda\epsilon} \exp\left(-(B-1)\underline p(\epsilon)\right).
    \end{equation*}
    This observation completes the proof.
\end{proof}

Next, we prove that every point in the $r_\epsilon$-neighborhood of a global maximizer of~$f_i$ constitutes an $\epsilon/2$-optimal point. For ease of notation, we define $f_i^\star \coloneqq \max_{z\in\RR^m}f_i(z)$ for every~$i\in[N]$.

\begin{lemma}[Superlevel sets]
    \label{lem:suboptimality_level_sets}
    Suppose that~\autoref{ass:loss_function_smoothness} holds, and define $r_\epsilon$ as in~\eqref{def:bulk_radius}. If~$z^\star,z\in\RR^m$ with~$f_i(z^\star)=f_i^\star$ and~$\|z-z^\star\|_2\leq r_\epsilon$, then $f_i(z)\geq f_i^\star-\epsilon/2$ for all~$i \in [B]$.
\end{lemma}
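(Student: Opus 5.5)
The plan is to use the descent (ascent) lemma for $L$-smooth functions, with $L \coloneqq L_{zz}+2\lambda$, together with the first-order optimality of the global maximizer $z^\star$. By \autoref{ass:loss_function_smoothness}, $\nabla f(\theta,\cdot)$ is $L_{zz}$-Lipschitz. The gradient of the quadratic penalty, $-2\lambda(z-\hat z_i)$, is $2\lambda$-Lipschitz. Hence $\nabla f_i$ is $L$-Lipschitz for every $i\in[B]$, uniformly in the (random) anchor $\hat z_i$, since the anchor only shifts the penalty and does not change its curvature.

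The key inequality is the standard quadratic lower bound for functions with $L$-Lipschitz gradient:
\[
f_i(z) \;\geq\; f_i(z^\star) + \langle \nabla f_i(z^\star), z - z^\star\rangle - \frac{L}{2}\|z-z^\star\|_2^2 .
\]
I would cite this directly, as it is the same ascent lemma already used in the proof of \autoref{lem:ascent_convergence}. Alternatively it follows by writing $f_i(z)-f_i(z^\star)=\int_0^1\langle\nabla f_i(z^\star+s(z-z^\star)),z-z^\star\rangle\,\diff s$ and bounding the gradient difference via Cauchy--Schwarz and the Lipschitz property.

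Since $z^\star$ is a global maximizer of the differentiable function $f_i$ on all of $\RR^m$ (unconstrained), we have $\nabla f_i(z^\star)=0$, and the linear term vanishes. Substituting $\|z-z^\star\|_2\leq r_\epsilon$ and $r_\epsilon^2=\epsilon/(L_{zz}+2\lambda)$ then gives
\[
f_i(z)\geq f_i^\star-\frac{L}{2}\,r_\epsilon^2 = f_i^\star - \frac{\epsilon}{2},
\]
which is the claim.

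There is no real obstacle here. The only points needing care are that the Lipschitz constant of $\nabla f_i$ is exactly $L_{zz}+2\lambda$ independent of $i$, and that stationarity at $z^\star$ holds because the maximization is unconstrained. Both are immediate from the definitions.
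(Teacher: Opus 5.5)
Your proposal is correct and matches the paper's proof essentially step for step. The paper also uses the $(L_{zz}+2\lambda)$-smoothness of $-f_i$ to get the quadratic lower bound, removes the linear term via $\nabla f_i(z^\star)=0$ at the unconstrained global maximizer, and substitutes $r_\epsilon^2=\epsilon/(L_{zz}+2\lambda)$.
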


\begin{proof}
    Note that both~$f_i$ as well as~$-f_i$ are  $(L_{zz}+2\lambda)$-smooth by virtue of \autoref{ass:loss_function_smoothness}. Hence, the standard quadratic upper bound for the smooth function $-f_i$ \cite[Proposition~6.1.2]{bertsekas2015convex} yields
    \begin{align*}
        f_i(z) &\geq f_i(z^\star) + \langle\nabla f_i(z^\star),z-z^\star \rangle - \frac{L_{zz}+2\lambda}{2}\|z-z^\star\|_2^2 \\
        &\geq f_i^{\star} - \frac{L_{zz}+2\lambda}{2} r_\epsilon^2=f_i^{\star} -\frac{\epsilon}{2},
    \end{align*}
    where the second inequality holds because~$z^\star$ is a global maximizer of~$f_i$ satisfying~$\nabla f_i(z^\star) = 0$ and because~$\|z-z^\star\|_2\leq r_\epsilon$, and the equality follows from the definition of~$r_\epsilon$. 
\end{proof}

If $\hat z=\hat Z_i$ for $i\in C_\epsilon$ is a covered inlier and $z^\star=T^\star(\hat z)$, then there exists~$j\neq i$ such that~$z=\hat Z_j$ satisfies $\|z - z^\star\|_2 \leq r_{\epsilon}$; see~\eqref{def:covered_inliers_set}. \autoref{lem:suboptimality_level_sets} thus guarantees that~$z$ constitutes an $\epsilon/2$-optimal solution for problem~\eqref{eq:T_star_argmax}. This reasoning implies that, for every covered inlier, the adversary can find a near-optimal adversarial sample among the finite set of empirical samples. This insight and the earlier results of this section culminate in the following main theorem, which proves that the MPA map $T_{\mathrm{mpa}}$ is near-optimal in problem~\eqref{eq:maximize-over-maps} with high probability. To formalize this result, we recall that the adversarial error $\varepsilon(\theta, T)$ of a map~$T$ is defined as the suboptimality of~$T$ in~\eqref{eq:maximize-over-maps}; see~\eqref{eq:oracle_error}.

\begin{theorem}[Global optimality guarantee of MPA]
    \label{thm:mpa_global_optimality}
    If \Autorefs{ass:loss_function_smoothness}{ass:data_distribution} hold and $0<\eta<2/(L_{zz}+2\lambda)$, then for any $K \in \NN\cup\{\infty\}$, $R \in\NN$, and tolerance $\epsilon > 0$, we have
    \begin{equation*}
        \PP_0^B\bigl( \varepsilon(\theta, T_{\mathrm{mpa}}) \leq \epsilon \bigr) \geq 1 - \beta(B, \epsilon),
    \end{equation*}
    where $\varepsilon(\theta, T)$ is the adversarial error of a map $T$ with respect to $\hat\PP=\tfrac{1}{B} \sum_{i=1}^B \delta_{\hat Z_i}$, and where
    \begin{equation*}
        \beta(B, \epsilon) \coloneqq \exp\left(-B\lambda\epsilon/6L_z^2\right) + \frac{L_z^2}{\lambda\epsilon} \exp\left(-(B-1)\underline p(\epsilon)\right).
    \end{equation*}
\end{theorem}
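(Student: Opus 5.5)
On the event $\mathcal{G} \coloneqq \{|O_\epsilon|/B \le \lambda\epsilon/L_z^2\} \cap \{|U_\epsilon|/B \le \lambda\epsilon/L_z^2\}$, I will show deterministically that $\varepsilon(\theta,T_{\mathrm{mpa}}) \le \epsilon$. A union bound with \autoref{lem:outlier_fraction} and \autoref{lem:uncovered_inlier_fraction} then gives $\PP_0^B(\mathcal{G}^c) \le \beta(B,\epsilon)$.

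Since $\hat\PP$ is the batch empirical measure, Remark~\ref{rem:empirical_case} gives
\[
\varepsilon(\theta,T_{\mathrm{mpa}}) = \frac1B\sum_{i=1}^B \bigl(f_i^\star - f_i(T_{\mathrm{mpa}}(\hat Z_i))\bigr),
\]
with every summand nonnegative. The key monotonicity observation comes from \autoref{lem:mpa_operators}\,(iii): every operator $\cA^K$ and $\cR$ applied in MPA is coordinatewise nondecreasing in $\mathbf f$. It also matters where the reassignment happens. The first reassignment is applied to $\mathbf{Z}^{(0)} = (\hat Z_1,\dots,\hat Z_B)$, so after it each $i$ holds a point maximizing $f_i$ over the full pool of empirical samples. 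Hence
\[
f_i(T_{\mathrm{mpa}}(\hat Z_i)) \ge \max_{j\in[B]} f_i(\hat Z_j).
\]
This holds for any $K\in\NN\cup\{\infty\}$ and $R\ge1$, and it needs no convergence of the ascent. I will bound each summand through this inequality, splitting by index class.

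\emph{Covered inliers.} For $i\in C_\epsilon$ there is some $j\ne i$ with $\|\hat Z_j - T^\star(\hat Z_i)\|_2 \le r_\epsilon$. By \autoref{lem:suboptimality_level_sets}, $f_i(\hat Z_j) \ge f_i^\star - \epsilon/2$, so the summand is at most $\epsilon/2$.

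\emph{Outliers and uncovered inliers.} Here I need a uniform worst-case bound. Since $f_i(T_{\mathrm{mpa}}(\hat Z_i)) \ge f_i(\hat Z_i) = f(\theta,\hat Z_i)$, the summand is at most $f_i^\star - f(\theta,\hat Z_i)$. The $L_z$-Lipschitz bound used in the proof of \autoref{prop:inner_deterministic_optimizer} gives
\[
f_i^\star - f(\theta,\hat Z_i) \le \max_{s\ge0}\bigl(L_z s - \lambda s^2\bigr) = \frac{L_z^2}{4\lambda}.
\]

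Combining the cases on $\mathcal G$:
\[
\varepsilon \le \frac{\epsilon}{2} + \frac{|O_\epsilon|+|U_\epsilon|}{B}\cdot\frac{L_z^2}{4\lambda} \le \frac{\epsilon}{2} + \frac{2\lambda\epsilon}{L_z^2}\cdot\frac{L_z^2}{4\lambda} = \epsilon.
\]

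The main obstacle I expect is the careful justification that the first reassignment, together with monotonicity of all subsequent operators including the final $\cR$, preserves $f_i(\cdot) \ge \max_j f_i(\hat Z_j)$ coordinatewise. This relies on the self-favoring rule and on the descent inequality \eqref{eq:strong-descent-contidion}, which holds for $\eta<2/(L_{zz}+2\lambda)$.

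A secondary point: the covered-inlier argument needs $T^\star(\hat Z_i)$ to be a global maximizer with $\nabla f_i = 0$. That is exactly \autoref{lem:suboptimality_level_sets}, and the condition $i\notin O_\epsilon$ is only used inside the covering probability estimate of \autoref{lem:uncovered_inlier_fraction}.
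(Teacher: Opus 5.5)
Your proposal is correct and follows the paper's proof essentially step by step. It uses the same good event with a union bound over the outlier and uncovered-inlier lemmas, the same coarse bound $L_z^2/(4\lambda)$ on each per-sample gap obtained from $f_i(T_{\mathrm{mpa}}(\hat Z_i)) \ge f_i(\hat Z_i)$, and the same $\epsilon/2$ bound for covered inliers via the superlevel-set lemma. Your explicit argument, that the initial reassignment of $(\hat Z_1,\dots,\hat Z_B)$ together with coordinatewise monotonicity gives $f_i(T_{\mathrm{mpa}}(\hat Z_i)) \ge \max_j f_i(\hat Z_j)$, spells out a step the paper leaves implicit in the covered-inlier case. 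One small correction: the self-favoring tie-breaking rule is not needed for this non-strict inequality.
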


\begin{proof}
    We first rewrite the adversarial error of $T_{\mathrm{mpa}}$ by expanding the empirical expectation. Using the definition~\eqref{eq:oracle_error} of the adversarial error and the per-sample formulation~\eqref{eq:regularized_dro-dual} of $F(\theta)$, we can write
    \begin{align*}
        \varepsilon(\theta, T_{\mathrm{mpa}}) 
        &= \frac{1}{B}\sum_{i=1}^{B} \left( \max_{z \in \RR^m} \bigl\{ f(\theta, z)\! -\! \lambda \|z - \hat{Z}_i\|_2^2 \bigr\} \! - \! \left( f(\theta, T_{\mathrm{mpa}}(\hat{Z}_i))\! -\! \lambda \|T_{\mathrm{mpa}}(\hat{Z}_i) - \hat{Z}_i\|_2^2 \right) \right) \\
        &= \frac{1}{B}\sum_{i=1}^{B} \left(f_i^\star - f_i(T_{\mathrm{mpa}}(\hat{Z}_i)) \! \right).
    \end{align*}
    Next, we introduce $G_i \coloneqq f_i^\star - f_i(T_{\mathrm{mpa}}(\hat{Z}_i))$ as shorthand for the per-sample suboptimality gap of~$T_{\mathrm{mpa}}$ for every~$i \in [B]$. Using the batch partition~\eqref{eq:batch-partition}, the adversarial error can thus be recast as
    \begin{align}
        \label{eq:error-decomposition}
        \varepsilon(\theta, T_{\mathrm{mpa}}) = \frac{1}{B}\sum_{i \in O_\epsilon} G_i + \frac{1}{B}\sum_{i \in U_\epsilon} G_i + \frac{1}{B}\sum_{i \in C_\epsilon} G_i.
    \end{align}
    By~\autoref{lem:outlier_fraction}, the probability of the event $|O_\epsilon|/B>\lambda\epsilon/L_z^2$ is bounded by $\exp(-B\lambda\epsilon/6L_z^2)$. Similarly, \autoref{lem:uncovered_inlier_fraction} ensures that the probability of the event $|U_\epsilon|/B>\lambda\epsilon/L_z^2$ is bounded by $L_z^2/(\lambda\epsilon)\exp(-(B-1)\underline p(\epsilon))$. Using the union bound, one thus verifies that the desirable event in which both $|O_\epsilon|/B\leq \lambda\epsilon/L_z^2$ and $|U_\epsilon|/B\leq \lambda\epsilon/L_z^2$ occurs with probability at least $1-\beta(B,\epsilon)$.

    It remains to be shown that the adversarial error is bounded by~$\epsilon$ in this desirable event. To this end, recall first that the ascent and reassignment operators are strictly monotone thanks to \autoref{lem:mpa_operators}. Hence, the objective value of $T_{\mathrm{mpa}}(\hat{Z}_i)$ dominates that of $\hat Z_i$, that is, $f_i(T_{\mathrm{mpa}}(\hat{Z}_i)) \geq f_i(\hat{Z}_i)$ for all $i\in[B]$. As $f(\theta, z)$ is $L_z$-Lipschitz continuous in~$z$, we may thus conclude that
    \begin{align*}
        G_i &\leq f_i^\star - f_i(\hat{Z}_i) = \max_{z \in \RR^m} \bigl\{ f(\theta,z) - f(\theta,\hat{Z}_i) - \lambda\|z-\hat{Z}_i\|_2^2 \bigr\} \\
            &\leq \max_{z \in \RR^m} \bigl\{ L_z\|z-\hat{Z}_i\|_2  - \lambda\|z-\hat{Z}_i\|_2^2 \bigr\} = \frac{L_z^2}{4\lambda}.
    \end{align*}
    Using this coarse estimate, we obtain
    \begin{subequations}
    \begin{equation}
        \label{eq:O_epsilon_U_epsilon-bound}
        \frac{1}{B}\sum_{i \in O_\epsilon} G_i \le \frac{|O_\epsilon|}{B} \frac{L_z^2}{4\lambda} \le \frac{\epsilon}{4} \quad \text{and} \quad \frac{1}{B}\sum_{i \in U_\epsilon} G_i \le \frac{|U_\epsilon|}{B} \frac{L_z^2}{4\lambda} \le \frac{\epsilon}{4}.
    \end{equation}
    If~$i\in C_\epsilon$, then we have $\|T^\star(\hat{Z}_i)\|_2\leq R_\epsilon$, and there is $j\neq i$ with $\|\hat Z_j - T^\star(\hat Z_i)\|_2 \leq r_{\epsilon}$; see~\eqref{def:covered_inliers_set}. By \autoref{lem:suboptimality_level_sets}, we thus find $f_i(\hat{Z}_j) \ge f^\star_i - \epsilon/2$ and~$G_i \le \epsilon/2$ for all $i\in C_\epsilon$, which implies that
    \begin{equation}
        \label{eq:C_epsilon-bound}
        \frac{1}{B}\sum_{i \in C_\epsilon} G_i \le \frac{|C_\epsilon|}{B} \frac{\epsilon}{2} \le \frac{\epsilon}{2}.
    \end{equation}
    \end{subequations}
    Substituting~\eqref{eq:O_epsilon_U_epsilon-bound} and~\eqref{eq:C_epsilon-bound} into~\eqref{eq:error-decomposition} yields $\varepsilon(\theta, T_{\mathrm{mpa}}) \leq \epsilon$, and thus the claim follows.
\end{proof}

\begin{corollary}[Sample complexity of MPA]
    \label{cor:mpa_sample_complexity}
    If all assumptions of \autoref{thm:mpa_global_optimality} hold, $\beta\in(0,1)$, and the batch size satisfies
    \begin{equation*}
        B \geq \max\left\{ \frac{6L_z^2}{\lambda\epsilon}\log\left(\frac{2}{\beta}\right), \; 1 + \frac{1}{\underline{p}(\epsilon)}\log\left(\frac{2L_z^2}{\lambda\epsilon\beta}\right) \right\},
    \end{equation*}
    then we have $\varepsilon(\theta, T_{\mathrm{mpa}}) \leq \epsilon$ with probability at least $1-\beta$.
\end{corollary}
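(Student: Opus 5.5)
The plan is to derive the corollary directly from \autoref{thm:mpa_global_optimality}. We show that the stated lower bound on~$B$ forces each of the two terms in $\beta(B,\epsilon)$ to be at most $\beta/2$. Then $\beta(B,\epsilon)\le\beta$, and the theorem gives $\PP_0^B(\varepsilon(\theta,T_{\mathrm{mpa}})\le\epsilon)\ge 1-\beta(B,\epsilon)\ge 1-\beta$. All hypotheses of the theorem are assumed, so no further structural argument is needed. In particular, $\underline p(\epsilon)>0$ by \autoref{lem:positive_covering_mass}, which makes the second term of the maximum well defined and finite.

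For the first term, the condition $B\ge \frac{6L_z^2}{\lambda\epsilon}\log(2/\beta)$ rearranges to $B\lambda\epsilon/(6L_z^2)\ge \log(2/\beta)$. Exponentiating the negative of both sides, which is legitimate because $x\mapsto e^{-x}$ is decreasing, gives $\exp(-B\lambda\epsilon/6L_z^2)\le \beta/2$.

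For the second term, the condition $B\ge 1+\underline p(\epsilon)^{-1}\log\bigl(2L_z^2/(\lambda\epsilon\beta)\bigr)$ gives $(B-1)\underline p(\epsilon)\ge \log\bigl(2L_z^2/(\lambda\epsilon\beta)\bigr)$. Hence $\exp(-(B-1)\underline p(\epsilon))\le \lambda\epsilon\beta/(2L_z^2)$. Multiplying by $L_z^2/(\lambda\epsilon)$ then yields $\frac{L_z^2}{\lambda\epsilon}\exp(-(B-1)\underline p(\epsilon))\le\beta/2$.

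A minor point to check is the case where a logarithm is negative, for instance when $2L_z^2/(\lambda\epsilon\beta)<1$. The inequalities still hold in that case, since the chain of implications never needed the logarithms to be nonnegative. I do not expect a real obstacle: the only step requiring care is confirming that the direction of each inequality is preserved under the monotone exponential and under multiplication by positive constants. Summing the two bounds gives $\beta(B,\epsilon)\le\beta$ and concludes the proof.
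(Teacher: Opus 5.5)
Your proposal is correct and follows the paper's own proof: split $\beta(B,\epsilon)$ into its two additive terms, show that each part of the lower bound on $B$ forces the corresponding term below $\beta/2$, and invoke \autoref{thm:mpa_global_optimality}. You also state the implication in the logically needed direction (from the bound on $B$ to the bound on each exponential term), and your remark that $\underline{p}(\epsilon)>0$ by \autoref{lem:positive_covering_mass} is what makes the second threshold well defined.
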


\begin{proof}
    To guarantee that the failure probability $\beta(B, \epsilon)$ from \autoref{thm:mpa_global_optimality} falls below $\beta$, it suffices to bound each of its two additive terms by $\beta/2$. As for the first term, we thus require
    \begin{equation*}
        \exp\left(-\frac{B\lambda\epsilon}{6L_z^2}\right) \leq \frac{\beta}{2} \quad\implies \quad B \geq \frac{6L_z^2}{\lambda\epsilon}\log\left(\frac{2}{\beta}\right).
    \end{equation*}
    Similarly, we require the second term to satisfy
    \begin{equation*}
        \frac{L_z^2}{\lambda\epsilon}\exp\bigl(-(B-1)\underline p(\epsilon)\bigr) \leq \frac{\beta}{2} \quad \implies \quad B \geq 1 + \frac{1}{\underline p(\epsilon)}\log\left(\frac{2L_z^2}{\lambda\epsilon\beta}\right).
    \end{equation*}
    
    Taking $B$ to be the maximum of these two lower bounds ensures both conditions hold simultaneously. This observation completes the proof.
\end{proof}

As is typical of global optimality guarantees for nonconvex optimization, the guarantees established in \autoref{thm:mpa_global_optimality} and \autoref{cor:mpa_sample_complexity} come at the price of a curse of dimensionality.

\begin{remark}[Curse of dimensionality]
\label{rem:covering_mass}
The sample complexity bound in \autoref{cor:mpa_sample_complexity} is governed primarily by the inverse covering mass $1/\underline p(\epsilon)$, which conceals an exponential dependence on the ambient dimension~$m$. Indeed, one can construct at least $(R_\epsilon/2r_\epsilon)^m$ disjoint balls of radius~$r_\epsilon$ with centers in $B_{R_\epsilon}(0)$. As their masses sum to at most one, at least one of them has mass at most $(2r_\epsilon/R_\epsilon)^m$, and since $R_\epsilon$ grows when $\epsilon$ decreases, we conclude that
$p(\epsilon)=\cO(r_\epsilon^m)=\cO(\epsilon^{m/2})$ irrespective of the data-generating distribution~$\PP_0 \in \cP(\RR^m)$. The factor $\epsilon^{m/2}$ thus reflects the volume of the covering ball rather than any tail behavior of~$\PP_0$. One can show that if~$\PP_0$ is a standard normal distribution, then $p(\epsilon)=\Omega(\epsilon^{m/2})$. Even in this best possible case, \autoref{cor:mpa_sample_complexity} requires a batch of size $B=\widetilde\cO(\max\{\epsilon^{-1},\epsilon^{-m/2}\})$, which grows exponentially with the ambient dimension~$m$.
\end{remark}

\begin{remark}[MPA versus PA]
\label{rem:mpa_vs_pa}
The global optimality guarantee of \autoref{cor:mpa_sample_complexity} requires a large batch of sufficiently diverse empirical samples. In the absence of such a batch, MPA may not output a globally optimal adversarial map. In fact, as each round of MPA starts with a reassignment step, the adversarial error of the resulting MPA map can even exceed that of the corresponding PA map. To see this, suppose that $f_i(\theta,\hat z_i)<f_i(\theta,\hat z_j)$, while the stationary point in the basin containing~$\hat z_j$ has a lower objective value than the stationary point in the basin containing~$\hat z_i$. To construct the $i$-th adversarial sample, PA applies gradient ascent initialized at~$\hat z_i$, whereas MPA moves from~$\hat z_i$ to~$\hat z_j$ due to the first reassignment step and then applies gradient ascent initialized at~$\hat z_j$. In this case, PA ends up in the better stationary point and thus finds a better adversarial example than MPA. 

To ensure that MPA dominates PA, one could reverse the order of the reassignment and gradient ascent steps in the inner loop of Algorithm~\ref{alg:implicit_mpa}. The first gradient ascent of this alternative variant of MPA replicates the PA map exactly. Moreover, since both the reassignment operator and the $t$-step ascent operator are monotone by \autoref{lem:mpa_operators}\,{\em (iii)}, every subsequent MPA update can only improve upon PA, regardless of the batch size. The price of this guarantee, however, is the loss of a global optimality result analogous to \autoref{thm:mpa_global_optimality}. In particular, one can construct problem instances for which the alternative variant of MPA produces a strictly suboptimal adversarial map for every possible batch of empirical samples. We tested both variants of MPA in our experiments and found that the original version of Algorithm~\ref{alg:implicit_mpa} consistently attains the lower adversarial error. Hence, we do not pursue the ascent-first variant of MPA further.
\end{remark}

\subsection{An Illustrative Example where PA Stalls and MPA Progresses}
\label{sec:toy_separation}

\autoref{ex:counterexample} presents an instance of problem~\eqref{eq:regularized_dro-dual} in which~$\hat\PP$ is a discrete distribution supported on only two atoms and the discrete PA map fails to satisfy cyclical monotonicity after a fixed (albeit large) number of iterations. We now extend this example by embedding it into an outer minimization problem over~$\theta$ and demonstrate that the loss of cyclical monotonicity leads to a critical failure of the outer optimization dynamics. Specifically, the algorithm becomes trapped at the initial outer iterate, while a single MPA reassignment produces a nonzero and informative outer gradient, thereby enabling further progress of the optimization algorithm.

Note that the loss function in~\autoref{ex:counterexample} is constant in~$\theta$. In the following, we construct a closely related loss function that displays a nontrivial dependence on~$\theta$. Specifically, we set
\begin{equation*}\label{eq:toy_outer_loss}
f(\theta, z) \coloneqq \theta\bigl(f_0(z) - b\bigr),
\quad\text{where}
    \quad
f_0(z) \coloneqq \sum_{k=1}^2 a_k \exp\left(-(z-\mu_k)^\top \Sigma_k^{-1} (z-\mu_k)\right).
\end{equation*}
The constants $a_1$, $a_2$, $\mu_1$, $\mu_2$, $\Sigma_1$ and~$\Sigma_2$ are chosen as in \autoref{ex:counterexample}, and $\theta \in \Theta=[0, 1]$ is a scalar model parameter. The value of~$b \in \RR$ will be specified later. Note that $f(\theta , z)$ is affine in~$\theta$ with uniformly bounded gradients, and thus \autoref{prop:convex_bound_inexact} applies. If we define~$\hat\PP$ and~$\lambda$ as in~\autoref{ex:counterexample},
then \autoref{prop:inner_deterministic_optimizer} implies that the adversarial risk can be represented as
\begin{equation}
\label{example-adversary}
F(\theta) = \max_{T \in \mathcal T}\;
\frac{1}{2}\sum_{i=1}^2 \Bigl(f(\theta, T(\hat z_i)) - \lambda\,\|T(\hat z_i) - \hat z_i\|_2^2\Bigr).
\end{equation}
Next, we approximately minimize $F(\theta)$ over $\Theta$ by means of projected gradient descent with step size $\alpha\in(0,1)$, using inexact gradient estimates. The iterates of this algorithm satisfy
\begin{equation}
\label{eq:toy_outer_update}
\theta_{e+1} = \mathrm{Proj}_{[0,1]}\bigl(\theta_e - \alpha\, g_e\bigr),
\quad\text{where}
    \quad
g_e = \frac{1}{2}\sum_{i=1}^2 \bigl(f_0(T_e(\hat z_i)) - b\bigr)
\end{equation}
is an inexact gradient of~$F(\theta)$ at~$\theta=\theta_e$, and $T_e$ is a near-optimal solution of~\eqref{example-adversary} for each epoch~$e\in [E]$. Note that if $T_e$ were a globally optimal adversarial map, then~$g_e$ would coincide with the exact subgradient of~$F(\theta)$ at~$\theta=\theta_e$ by Danskin's theorem. We initialize~$\theta_1 = 1$.

\paragraph{PA stalls.}
Assume first that the adversary's problem in~\eqref{example-adversary} is addressed with discrete PA such that $T_1(\cdot)=T_{\mathrm{pa}}(t,\cdot)$ for~$t=8,000$ steps. As $f(\theta_1,z)$ coincides with the loss function of \autoref{ex:counterexample} up to the additive constant~$-b$, discrete PA initialized at the empirical samples of~$\hat\PP$ yields
\[
T_1(\hat z_1) = (2.17, 1.98)^\top \quad \text{and} \quad
T_1(\hat z_2) = (-3.96, -2.00)^\top.
\]
Hence, $T_1$ fails to be cyclically monotone as shown in \autoref{ex:counterexample}, which implies via \autoref{lem:assign_stat_implies_cm} that~$T_1$ also fails to be assignment-stationary. Next, set
\begin{equation*}\label{eq:b_choice}
b \coloneqq \frac{1}{2}\bigl(f_0(T_1(\hat z_1)) + f_0(T_1(\hat z_2))\bigr)
\end{equation*}
such that the inexact gradient~$g_1$ vanishes. One can then prove by induction that~$\theta_e=1$ for every~$e\in[E]$. This shows that PA-based adversarial training gets trapped at the initial outer iterate~$\theta_1 = 1$ even though~$\theta_1$ is neither a minimizer nor a stationary point of~$F(\theta)$.

\paragraph{MPA progresses.}
Assume next that the adversary's problem in~\eqref{example-adversary} is addressed with a full-batch ($B=2$) version of MPA over~$R=1$ rounds; see Algorithm~\ref{alg:implicit_mpa}. 

To obtain a near-optimal adversarial map for~\eqref{example-adversary} with~$\theta=\theta_1$, MPA applies one reassignment step followed by one discrete PA step, and ends with a final reassignment step. The near-optimal adversarial samples~$\{z_i\}_{i=1}^2$ corresponding to the empirical samples~$\{\hat z_i\}_{i=1}^n$ are thus constructed iteratively as follows. In the first round, MPA sets $\mathbf z\leftarrow \cR(\mathbf{\hat z})$, where $\cR$ is the reassignment operator from~\autoref{def:reassignment_operator}. One readily verifies that
\begin{align*}
    & f(\theta_1, \hat z_2) - \lambda\,\|\hat z_2 - \hat z_1\|_2^2 > f(\theta_1, z_1) \quad \text{and} \quad 
    & f(\theta_1, \hat z_2) > f(\theta_1, \hat z_1) - \lambda\,\|\hat z_1 - \hat z_2\|_2^2,
\end{align*}
suggesting that the adversary is better off mapping {\em both} empirical samples to~$\hat z_2$ instead of keeping the identity initialization. Hence, in the first reassignment step, the adversary sets~$z_1\leftarrow \hat z_2$ and~$z_2\leftarrow \hat z_2$. The first PA step is then warm-started at the same point $z_1 = z_2 = \hat z_2$ for {\em both} empirical samples. In this step MPA sets $z_i\leftarrow \mathcal{A}_i^t(z_i)$ where $\cA^t_i$ is the $i$-th $t$-step ascent operator from~\autoref{def:ascent-operator}. As in \autoref{ex:counterexample}, we set $t=8{,}000$, which ensures that~$z_i$ closely approximates a stationary point. After this PA step, both samples have almost reached the corresponding global maximizers, and one can readily verify that the adversary would not benefit from reassigning the adversarial samples in the final step, that is, the trivial reassignment is optimal. At this stage, MPA has essentially converged. Subsequent PA iterations would merely refine the values of~$\{z_i\}_{i=1}^2$ by adding insignificant digits, while subsequent reassignment steps would have no effect. See \autoref{fig:2D_counterexample} for the trajectory that each sample takes in the MPA algorithm. The adversarial map~$T_{\mathrm{mpa}}\in\mathcal T$ corresponding to two MPA rounds can thus be defined by setting $T_{\mathrm{mpa}}(\hat z_i)=z_i$ for~$i=1,2$ and arbitrarily setting $T_{\mathrm{mpa}}(z)=0$ for all~$z\not\in\{\hat z_i\}_{i=1}^2$, which ensures Borel measurability. By construction, $T_{\mathrm{mpa}}$ is assignment-stationary and cyclically monotone on~$\{\hat z_i\}_{i=1}^2$. A direct calculation reveals that
\[
    \sum_{i=1}^2 f_0(T_{\rm mpa}(\hat z_i)) > \sum_{i=1}^2 f_0(T_{\rm pa}(\hat z_i)) \quad \implies \quad
    g_1 = \frac{1}{2}\bigl(f_0(T_{\rm mpa}(\hat z_1)) + f_0(T_{\rm mpa}(\hat z_2))\bigr) - b > 0.
\]
This shows that MPA-based adversarial training using the iterative scheme~\eqref{eq:toy_outer_update} escapes from the spurious fixed point~$\theta_1=1$ that traps PA. That is, MPA generates a strictly better iterate~$\theta_2<1$.

This example showcases that the reassignment step in MPA is essential. Its purpose is to warm-start the next PA phase at the best adversarial sample output by the previous phase. In particular, if the output of a PA phase happens to lie within the basin of attraction of the global maximizer associated with an empirical sample~$\hat z_i$, then, after the subsequent PA phase, the corresponding adversarial sample~$z_i$ will be driven arbitrarily close to a global maximizer, provided that a sufficiently large number of PA iterations is performed. Through this mechanism, MPA exploits information about the global optimization landscape that remains inaccessible to PA alone and incorporates it into the outer gradient. As a consequence, the outer minimization over~$\theta$ can continue to make progress under MPA even in situations where PA yields a vanishing gradient.

\section{ICNN Training Details}\label{app:icnn-training}
To guarantee convexity of the parametric potential $\psi_\omega(z)$ in $z$, the hidden-to-hidden and output weights of the ICNN must remain non-negative throughout training. We enforce this constraint by reparametrizing these weights as exponentials, $\exp(W^{y}_{\ell})$ and $\exp(w^{y}_{L})$, so that non-negativity holds by construction regardless of the values of the underlying unconstrained parameters $W^{y}_{\ell}$ and $w^{y}_{L}$. A natural starting point would be to initialize $W^{y}_{\ell}$ and $w^{y}_{L}$ according to a standard zero-mean scheme such as Xavier initialization \citep{xavier}. However, even when the pre-exponential weights are zero-mean, the exponentiated weights $\exp(W^{y}_{\ell})$ and $\exp(w^{y}_{L})$ possess a
strictly positive mean. This positive-mean bias accumulates across layers, distorting both the forward signal and the backpropagated gradient, and has been identified as a source of the optimization instabilities and representational bottlenecks reported in prior work \citep{makkuva2020optimal, korotin2021neural, sivaprasad2021curious}. To mitigate this effect, we adopt the principled initialization scheme of \citet{Principled}, which extends the signal-propagation analysis underlying Xavier initialization to the setting of weights with non-zero means, yielding initialization scales that are explicitly corrected for the positive-mean offset introduced by the exponential reparametrization.

Following~\cite{Principled}, for a layer with $n_{\mathrm{in}}$ inputs, we set the target mean and variance of the exponentials to
\begin{equation}\label{eq:principled-moments}
\mu_w = \sqrt{\frac{6\pi}{n_{\mathrm{in}}\bigl[\,6(\pi-1)+(n_{\mathrm{in}}-1)\bigl(3\sqrt{3}+2\pi-6\bigr)\bigr]}}
\quad \text{and} \quad
\sigma_w^{2} = \frac{1}{n_{\mathrm{in}}},
\end{equation}
respectively. To achieve these target values, we sample the original, pre-exponential weights independently from a normal distribution $W^y_{\ell,ij} \sim \mathcal{N}(\mu, \sigma^2)$. This makes the resulting positive weights independent and follow a lognormal distribution $\exp(W^y_{\ell,ij}) \sim \mathrm{LogNormal}(\mu, \sigma^2)$. To find the exact $\mu$ and $\sigma^2$, we match the lognormal moments $\mathbb{E}[\exp W^y_{\ell,ij}] = \exp(\mu + \tfrac12 \sigma^2)$ and $\operatorname{Var}(\exp W^y_{\ell,ij}) = (\exp \sigma^2 - 1)\exp(2\mu + \sigma^2)$ to the target moments $(\mu_w, \sigma_w^2)$ in~\eqref{eq:principled-moments}. This yields
\begin{equation}\label{eq:lognormal-params}
\mu = \ln\bigl(\mu_w^{2}\bigr) - \tfrac{1}{2}\ln\bigl(\sigma_w^{2} + \mu_w^{2}\bigr)
\quad \text{and} \quad
\sigma^{2} = \ln\bigl(\sigma_w^{2} + \mu_w^{2}\bigr) -\ln\bigl(\mu_w^{2}\bigr).
\end{equation}

\paragraph{Cancelling the positive-mean shift.}
The target mean $\mu_w > 0$ still contributes a systematic bias $\mu_w\, \mathbf{1}^{\!\top} y_{\ell-1}$ to each pre-activation. Two designs cancel this contribution at initialization. The original recipe in~\cite{Principled}  attaches a bias term to each exp-reparameterized layer and initializes it at $-\mu_b\, \mathbf{1}$ with
\begin{equation}\label{eq:mu-b}
\mu_b = \sqrt{\frac{3\,n_{\mathrm{in}}}{6(\pi-1)+(n_{\mathrm{in}}-1)\bigl(3\sqrt{3}+2\pi-6\bigr)}}.
\end{equation}
The construction used throughout our experiments takes the second design: the exp-reparameterized layers carry no bias, and the positive-mean contribution is instead absorbed by the unconstrained passthrough $W^z_\ell z + b_\ell$ at each hidden layer and by $(w^z_L)^{\!\top} z + b_L$ at the readout. The two designs are equivalent at initialization, but the second has a structural advantage during training. The passthrough is $z$-dependent, so the offset it applies adapts to the input, whereas a fixed scalar bias on $\exp(W^y_\ell)$ cannot. This cleanly separates roles, so that convexity-preserving parameters are confined to the exponential reparameterization~\eqref{eq:lognormal-params}, while all expressive, sign-free degrees of freedom are concentrated in the well-initialized passthrough. Simpler ICNN variants without a passthrough still require the explicit offset~\eqref{eq:mu-b}; the codebase supports both conventions for ablation.

\subsection{Adaptive Inner Step Size Selection}
\label{app:inner_stepsize}

\begin{algorithm}[t!]
\caption{\textsc{Adaptive Inner Step Size (BB + Armijo)}}
\label{alg:adaptive_inner_stepsize}
\begin{algorithmic}
\REQUIRE Initial iterate $z^{-1}=z^0$, step size bounds $[\eta_{\min}, \eta_{\max}]$, initial step size $\eta_0\in [\eta_{\min}, \eta_{\max}]$, Armijo sufficient ascent $c \in (0,1)$, shrinkage $\tau \in (0,1)$,  \#~max backtracking steps $J$, \#~steps $K$
\FOR{$k = 0,\ldots,K-1$}
    \STATE $s \gets z^k - z^{k-1}$ and  $y \gets \nabla_z f_{\hat z}(z^{k-1}) - \nabla_z f_{\hat z}(z^{k})$ 
    \IF{$\langle s, y \rangle > 0$ }
    \STATE $\eta_k \gets \mathrm{Proj}_{[\eta_{\min},\eta_{\max}]}\eta_k^\mathrm{BB}$, with $\eta_k^\mathrm{BB}$ as in~\eqref{eq:eta_BB}
    \ELSE 
    \STATE $\eta_k \gets \eta_0$
    \ENDIF
    \STATE $j \gets 0$ 
    \WHILE{$j < J$ and $f_{\hat z}(z^k + \eta_k \nabla_z f_{\hat z}(z^k)) -     f_{\hat z}(z^k) < c \eta_k \|\nabla_z f_{\hat z}(z^k)\|^2_2$}
        \STATE $\eta_k \gets \tau \eta_k,\quad j \gets j+1$
    \ENDWHILE
    \STATE $z^{k+1} \gets z^k + \eta_k \nabla_z f_{\hat z}(z^k)$

\ENDFOR
\STATE \textbf{return} $z^{K}$
\end{algorithmic}
\end{algorithm}

The training dynamics of the adversarial map depend heavily on the inner step size $\eta$. For instance, preserving the monotonicity of the PA map in the one-dimensional case requires $\eta \le (L_{zz}+2\lambda)^{-1}$ (\autoref{prop:pa_monotonicity}). Since a neural network $f(\theta, z)$ is highly non-linear with respect to its inputs, the local curvature $\nabla_{zz}^2 f$ varies significantly across the domain. Consequently, a single fixed step size makes the training process brittle. A conservative small step size safely avoids instability but wastes the inner-loop computational budget. Conversely, a large step size can cause the optimization to diverge. To dynamically adapt to the varying local geometry, we propose to use the Barzilai-Borwein (BB) method~\citep{Barzilai-Borwein} paired with an Armijo backtracking~\cite{armijo1966minimization} safeguard (\autoref{alg:adaptive_inner_stepsize}). The BB method acts as a computationally cheap quasi-Newton step to estimate the effective inverse curvature along the search direction, while the Armijo condition serves as a safety net to guarantee sufficient objective improvement. We detail both components next. To this end, let $f_{\hat{z}}(z) = f(\theta,z)-\lambda\|z-\hat{z}\|_2^2$ denote the local objective to be maximized for a fixed sample $\hat{z}$.

\myparagraph{Barzilai-Borwein method.} Quasi-Newton methods, which motivate the Barzilai-Borwein approach, accelerate gradient ascent on $\max_z f_{\hat{z}}(z)$ by replacing the step size with a local curvature estimate. The update takes the form 
\begin{align} \label{eq:zk_update_quasinewton}
    z^{k+1}= z^k + (B^k)^{-1} \nabla_z f_{\hat{z}}(z^k) ,
\end{align} 
where $B^k$ serves as an approximation of the exact Hessian matrix of $-f_{\hat{z}}$ evaluated at the current iterate $z^k$. After taking a step from $z^{k-1}$ to $z^k$, the algorithm observes the secant pair $s = z^k - z^{k-1}$ and $y = \nabla_z f_{\hat{z}}(z^{k-1})- \nabla_z f_{\hat{z}}(z^k)$, which encodes local curvature along the most recent step. A natural requirement is that $B^k$ satisfies the so-called secant equation $B^k s = y$, ensuring the approximation is consistent with the observed curvature. In high dimensions, however, maintaining a dense matrix $B^k$ that satisfies this condition and solving the resulting linear system becomes computationally prohibitive. The BB method alleviates this by approximating the Hessian with a simple scalar multiple of the identity matrix, $B^k \approx (\eta_k)^{-1} \mathrm{id}$, where the scalar $\eta_k$ is chosen to minimize the squared norm of the secant equation residual. Solving this least-squares problem, $\min_{\eta} \|(\eta)^{-1} s - y\|_2^2$, yields the BB step size
\begin{align}\label{eq:eta_BB}
     \eta_k^\mathrm{BB} = \frac{\|s\|_2^2}{\langle s, y \rangle}.
\end{align}
Substituting $(B^k)^{-1}$ in~\eqref{eq:zk_update_quasinewton} with $\eta_k^\mathrm{BB}$ bypasses the matrix operations and yields an efficient scalar update that scales with the effective inverse curvature. Since neural network landscapes are highly non-concave, the secant product $\langle s, y \rangle$ may occasionally be non-positive. In such cases where the curvature estimate is invalid, we fall back to a default initial step size $\eta_0$ (\autoref{alg:adaptive_inner_stepsize}).

\myparagraph{Armijo backtracking.} While the BB rule provides an inexpensive and often accurate estimate of the local inverse curvature, it does not by itself guarantee that the resulting step yields an improvement in the objective. This issue is particularly pronounced in our setting, where $f_{\hat{z}}$ is generally non-concave, and the BB estimate may occasionally be overly aggressive. To ensure stability of the inner maximization, we complement the BB proposal with an Armijo backtracking line search, which enforces a sufficient ascent condition at each iteration.

Given the current iterate $z^k$ and ascent direction $\nabla_z f_{\hat{z}}(z^k)$, we initialize a trial step size with the BB proposal, \textit{i.e.}, $\eta_k\gets\eta_k^\mathrm{BB}$. The Armijo condition requires that the achieved improvement in the objective is at least a fixed fraction of the linearized gain predicted by the gradient, namely
{%
\setlength{\abovedisplayskip}{2pt plus 1pt minus 1pt}%
\setlength{\belowdisplayskip}{2pt plus 1pt minus 1pt}%
\setlength{\abovedisplayshortskip}{0pt plus 1pt}%
\setlength{\belowdisplayshortskip}{1pt plus 1pt minus 1pt}%
\begin{equation*}
    f_{\hat{z}}(z^k + \eta_k \nabla_z f_{\hat{z}}(z^k))-f_{\hat{z}}(z^k) \geq c\eta_k\|\nabla_z f_{\hat{z}}(z^k)\|_2^2,
\end{equation*}
}%
where $c \in (0,1)$ controls the required level of sufficient ascent. This condition compares the actual improvement against the increase predicted by a first-order approximation of $f_{\hat z}$ at $z^k$. When the trial step size is too large, curvature effects cause this approximation to overestimate the gain, and the condition fails. In that case, we reduce the step size to $\eta_k \leftarrow \tau \eta_k$ with $\tau \in (0,1)$, and repeat the test until the condition is satisfied or the maximum number of backtracking steps $J$ is reached. This procedure prevents overly aggressive updates while preserving the efficiency of the BB initialization.

In practice, we operate on mini-batches $\{\hat{z}_i\}_{i=1}^B$ rather than a single sample. Accordingly, the single-sample objective $f_{\hat{z}}(z)=f(\theta,z)-\lambda\|z-\hat z\|_2^2$ is replaced by its batch counterpart
\[
    f_{\hat{\mathbf{z}}}(\mathbf{z})=\frac{1}{B}\sum_{i=1}^B( f(\theta,z_i)-\lambda\|z_i-\hat z_i\|_2^2)\quad \text{with} \quad \mathbf{z}=(z_1,\ldots,z_B).
\]
All steps are applied to this averaged objective, and the same analysis carries over directly. For notational simplicity, we present the algorithm and derivations in the single-sample form above.

\section{Supplementary Details for~\autoref{sec:experiment}} \label{app:experiments-appendix}
Appendices~\ref{app:training_details}, \ref{app:Robustness_AA_Full}, and~\ref{app:exp_neural_transport_rl} provide full hyperparameter settings and
implementation details for the experiments of Sections~\ref{sec:Adversarial_Logistic Regression}, \ref{sec:Robustness_AA_Full}, and~\ref{sec:exp_rl_main}, respectively. The latter two appendices additionally report gradient-obfuscation checks and a sensitivity analysis in the penalty parameter~$\lambda$. For all methods requiring an inner maximization step---namely PA, RO, WFR, NN-DRO, MPA, and ICNN-DRO---we conduct a comprehensive search over
the inner step size~$\eta$, evaluating both constant step sizes and adaptive Barzilai--Borwein step sizes coupled with an Armijo line search (Algorithm~\ref{alg:adaptive_inner_stepsize}), and report the best-performing configuration for each method. All experiments were run on an \texttt{NVIDIA A100-SXM4-80GB} GPU with dual \texttt{AMD EPYC 7543} 32-core CPUs. \looseness=-1

We briefly outline the configurations of our baselines, with exact hyperparameter values deferred to the subsequent experimental sections. To ensure a fair comparison among parametrized-map methods, the network architectures of NN-DRO and ICNN-DRO are matched in depth and width. For the entropy-regularized baselines, we introduce a regularization parameter $\varepsilon_{\mathrm{ent}} > 0$. SDRO~\cite{wang2021sinkhorn} exploits the fact that the entropy-regularized worst-case distribution admits a closed-form Gibbs representation \cite[Remark~4]{wang2021sinkhorn} to reformulate the inner maximization as a nested log-expectation \cite[Theorem~1]{wang2021sinkhorn}, whose gradient is then estimated via randomized multilevel Monte Carlo with importance samples drawn from a truncated geometric distribution~\citep{blanchet2015unbiased}, parametrized by a success probability $p\in(0,1)$ and truncation level $M\in\mathbb{N}_+$. WFR~\cite{xu2025gradientflowsamplerbaseddistributionally}, by contrast, targets the adversarial distribution directly by evolving a cloud of $n\in\mathbb{N}_+$ auxiliary particles per data point under an interacting particle dynamics. Unlike PA, which perturbs the empirical samples themselves, WFR keeps the training data fixed and uses these auxiliary particles to approximate the local adversarial distribution.
The parameters $\varepsilon_{\mathrm{ent}}$, $p$, $M$, and $n$ are specified in each experimental section.

\subsection{Supplementary Details for~\autoref{sec:Adversarial_Logistic Regression}}
\label{app:training_details}
This section provides additional details for the adversarial multi-class logistic regression experiment of
\autoref{sec:Adversarial_Logistic Regression}. All methods are trained for $10$ epochs with batch size $B=128$ and penalty parameter
$\lambda=10$. The outer minimization uses step size
$\alpha=5\times 10^{-3}$ for all robust benchmarks and
$\alpha=10^{-3}$ for the ERM baseline. The inner maximization runs for $K=20$ ascent iterations for ICNN-DRO and for $K=100$ iterations for every other method, with step sizes selected by the BB+Armijo procedure with $(\eta_0, \eta_{\min}, \eta_{\max}) = (5 \times 10^{-4}, 10^{-6}, 1)$
and $(c, \tau, J) = (0.1, 0.5, 10)$. For method-specific parameters, MPA performs $R=5$ rounds of $K=20$ iterations, hence $100$ ascent iterations in total and, counting the terminal reassignment, six reassignments. RO has an $\ell_2$-perturbation
budget of $\varepsilon=0.04$. The entropy-regularized baselines (SDRO and WFR) use a regularization strength of $\varepsilon_{\mathrm{ent}}=0.2$. Moreover,  WFR utilizes $n=16$ auxiliary particles per data point. Additionally, SDRO employs a maximum truncation level of $M=5$ with success probability $p=0.5$ for its multilevel Monte Carlo estimator. The $\ell_2$-constrained PGD attack budget ($\Delta \in [0, 0.08]$) applied to the covariates and the feature embedding process remains exactly as detailed in the main text. \looseness=-1

\subsection{Supplementary Details for~\autoref{sec:Robustness_AA_Full}}
\label{app:Robustness_AA_Full}

Across all benchmarks, the classifier $h_\theta$ is a ResNet-18~\cite{he2015deepresiduallearningimage}, pre-trained on clean CIFAR-10 for $200$ epochs and then adversarially trained under each method for a further $50$ epochs with batch size $B=256$ and penalty parameter $\lambda=30$. The outer minimization uses SGD with Nesterov momentum parameter $0.9$ \cite{Nesterov1983AMF} under a one-cycle schedule with cosine annealing~\cite{cycliclearningrate} and peak learning rate $0.1$.
\looseness=-1

For all gradient-based inner solvers, the inner maximization runs for $K=20$ iterations with the step size set by the Barzilai-Borwein rule with Armijo backtracking~(\autoref{alg:adaptive_inner_stepsize}): initial step $\eta_0 = 2{\times}10^{-4}$, clipping interval $[\eta_{\min},\eta_{\max}]=[10^{-7},\,2.5{\times}10^{-1}]$, sufficient-ascent constant $c=10^{-4}$, shrinkage factor $\tau=0.5$, and at most $J=10$ backtracking steps. The RO inner attack is $\ell_2$-constrained PGD with budget $\varepsilon=0.5$ under the same BB+Armijo schedule, so RO and PA differ only in the $\ell_2$ feasibility projection, not in step-size selection. 
\looseness=-1

MPA performs $R=3$ rounds of $K=10$ ascent iterations, amounting to $30$
ascent iterations and, counting the terminal reassignment, four reassignments in
total. Its step size is constant rather than adaptive, set to $0.1$ in the first
round and to $0.05$ thereafter, which consistently outperformed BB+Armijo in our
experiments. 

For SDRO, we replace the multilevel Monte Carlo estimator used in the lower-dimensional experiments of~\S~\ref{sec:exp_robust_ls}
and~\S~\ref{sec:Adversarial_Logistic Regression} by direct sampling from the Gibbs worst-case distribution, as that estimator showed high-variance in this high-dimensional regime. We use the Metropolis-adjusted Langevin algorithm~\citep{roberts1996exponential}, which repeatedly displaces each particle by a step of size $h$ along the gradient of the log-density, perturbs it by Gaussian noise, and accepts or rejects the resulting move to correct for discretization bias. We run $20$ such steps with
$h=5{\times}10^{-3}$ on $n=8$ particles per data point and set $\varepsilon_{\mathrm{ent}}=0.05$. WFR utilizes $n=32$ auxiliary particles per data point with the same $\varepsilon_{\mathrm{ent}}=0.05$. 

\myparagraph{Per-epoch runtime comparison.}
We record the per-epoch wall-clock time of the full min-max training loop across all methods in~\autoref{tab:CIFAR-10-CIFAR101-CIFAR102}. All runs use PyTorch Distributed Data Parallel (DDP) across $4\times$ A100 GPUs. The DDP communication backend, the large-shared memory dataloader configuration, the random seed, and the per-GPU local batch size are held fixed across methods, so that the reported per-epoch times are directly comparable wall-clock measurements at matched effective batch size and matched all-reduce overhead.  In particular, the per epoch times in~\autoref{tab:CIFAR-10-CIFAR101-CIFAR102} are full DDP wall-clock times (forward,
backward, gradient all-reduce, optimizer step, and inner-loop ascent), not single-GPU compute times. \looseness=-1

\myparagraph{Certifying the absence of gradient obfuscation.}
A reliable adversarial training algorithm must avoid \emph{obfuscated gradients}~\cite{athalye2018obfuscatedgradientsfalsesense}. These are superficial defenses that resist first-order attacks not through true robustness, but by disrupting the gradient signal via shattered, stochastic, or exploding/vanishing gradients. Hence, robustness against PGD alone is therefore insufficient as a robustness certificate. We provide two complementary checks confirming that the robustness of \methodname reflects genuine geometric regularity rather than gradient masking.

AutoAttack (AA) is a parameter-free collection of attacks, each designed specifically to expose the problems described above. Its adaptive variants APGD-CE and APGD-DLR replace the fixed step size of standard PGD with a schedule adapted to the local curvature, and so navigate ill-conditioned or shattered landscapes on which fixed-step PGD stagnates. Square Attack, by contrast, is a score-based random-search procedure that never queries the gradient of the loss function, and is therefore manifestly immune to shattered, stochastic, and vanishing-gradient effects. Reporting robustness under AA thus provides one evidence for the absence of gradient obfuscation. The tight PGD-AA gap of $0.4\%$ for \methodname (vs.\ $3.55\%$ for PA) is the quantitative signature of this consistency (see~\autoref{tab:CIFAR-10-CIFAR101-CIFAR102}).\looseness=-1

We complement the AutoAttack (AA) evaluation with a qualitative inspection of the input-space loss landscape, following~\cite{athalye2018obfuscatedgradientsfalsesense,engstrom2019exploringlandscapespatialrobustness}.
  For a test sample $z=(x,y)$, let $x_{\mathrm{adv}}$ be the adversarial input obtained by $\ell_2$-PGD at budget $\varepsilon=0.5$, using 20 steps and five random restarts. Define the adversarial displacement
  $r=x_{\mathrm{adv}}-x\in\mathbb{R}^{m-1}$ and draw a random direction $u\in\mathbb{R}^{m-1}$ orthogonal to $r$. Writing $\hat{r}=r/\|r\|_2$ and $\hat{u}=u/\|u\|_2$, we probe the classifier on the affine plane
  {%
  \setlength{\abovedisplayskip}{1pt plus 1pt minus 1pt}%
  \setlength{\belowdisplayskip}{1pt plus 1pt minus 1pt}%
  \setlength{\abovedisplayshortskip}{0pt plus 1pt}%
  \setlength{\belowdisplayshortskip}{1pt plus 1pt minus 1pt}%
  \begin{equation*}
      x(\alpha,\beta)
      \;=\; x+\alpha\,\hat{r}+\beta\,\hat{u},
      \qquad
      (\alpha,\beta)\in[-\rho,\rho]^2,
      \qquad \rho=1.
  \end{equation*}
  }%
  Perturbations are measured in pixel-space $\ell_2$ units, with standard CIFAR-10 normalization applied before classifier evaluation. We evaluate the affine plane without additional pixel clipping, recording on a
  $101\times101$ grid both the cross-entropy $f(\theta,(x(\alpha,\beta),y))$ and the classification correctness
  $\mathbbm{1}\left\{\arg\max_k h_\theta(x(\alpha,\beta))_k=\arg\max_k y_k\right\}$.

  \autoref{fig:loss_landscape} presents the resulting surfaces and decision regions for two uniformly sampled test examples. The horizontal and vertical axes correspond to $\beta\,\hat{u}$ and $\alpha\,\hat{r}$,
  respectively; the dashed circle $\alpha^2+\beta^2=\varepsilon^2$ marks the attack budget. Orange and blue denote correct and incorrect classification, while the red curve traces the loss along $\beta=0$. For both
  samples, the loss rises smoothly and monotonically from the clean input to the PGD endpoint, with no visible oscillations or plateaus along this segment. Together with the AA results, these local observations provide
  complementary evidence against gradient masking.\looseness=-1

\begin{figure}[t!]
    \centering
    \includegraphics[width=0.85\textwidth]{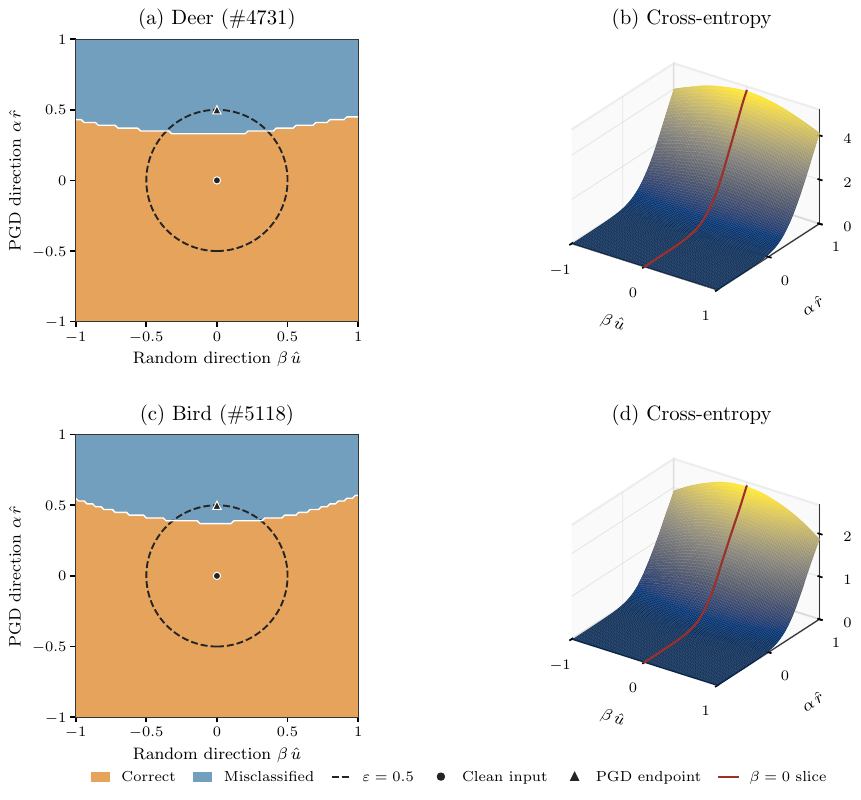}
    \caption{\textbf{Loss landscape of \methodname{} on CIFAR-10.} Panels (a) and (b) correspond to test sample \#4731 (Deer), while panels (c) and (d) detail test sample \#5118 (Bird). For each sample, the left panel (a, c) displays the 2D decision regions. The dashed circle marks the $\ell_2$ budget $\varepsilon=0.5$; orange and blue indicate correct and incorrect classification, respectively; and point markers specify the exact coordinates of the initial clean image (black circle) and the final adversarial image (black triangle). The right panel (b, d) plots the corresponding 3D cross-entropy surface where the red curve traces the loss along the adversarial direction.\looseness=-1}
    \label{fig:loss_landscape}
\end{figure}
\looseness=-1

\subsection{Supplementary Details for \autoref{sec:exp_rl_main}}
\label{app:exp_neural_transport_rl}
This section collects the experimental details of the robust control experiment:
the CartPole environment, hyperparameter configurations, differentiable surrogate
formulations, and additional results.
\looseness=-1

\myparagraph{Setting.}
We model the CartPole environment with a continuous state $s = (x, \dot x, \vartheta, \dot\vartheta) \in \RR^{4}$, capturing the cart's position and velocity alongside the pole's angle and angular velocity. The agent selects discrete actions $a \in \{0, 1\}$, which dictate a horizontal control force $u \in \{-u_{\max}, u_{\max}\}$ applied directly to the cart's center of mass. The system state then evolves according to known dynamics $s_{t+1} = F(s_t, a_t, z)$~\cite{barto1983neuronlike, freeman2021brax}  that are parameterized by the uncertainty vector $z = (m_p, \ell) \in \cZ\subseteq\mathbb{R}^2$, where $m_p$ and $\ell$ denote the mass and half-length of the pole and $\cZ=[0.05, 0.20] \times [0.25, 0.75]$. The agent receives a reward $r_t:=r(s_t,a_t) = 1$ at every non-terminal step. Episodes terminate early if the cart position or pole angle exceeds the physical bounds $x_{\max}$ or $\vartheta_{\max}$, and are otherwise truncated at a maximum horizon of $H=1{,}000$ steps. Hence, the total undiscounted reward corresponds exactly to the episode length censored at $1{,}000$. The loss function $f(\theta, z) \coloneqq  - \mathbb{E}_{\pi_\theta} [ \sum_{t=0}^{H-1} r(s_t, a_t) ]$ is the expected negative undiscounted return under policy $\pi_\theta$ and physics~$z$; see \S~\ref{sec:exp_rl_main}.

Since the physical parameters $z = (m_p, \ell)$ have incommensurable units (kilograms and meters), a standard Euclidean penalty would skew the optimization based on numerical scale. We resolve this by normalizing each coordinate by its domain width. Defining the lower and upper bounds as $\underline{z} = (0.05, 0.25)$ and $\overline{z} = (0.20, 0.75)$ respectively, we employ the width-normalized metric
{%
\setlength{\abovedisplayskip}{1pt plus 1pt minus 1pt}%
\setlength{\belowdisplayskip}{1pt plus 1pt minus 1pt}%
\setlength{\abovedisplayshortskip}{0pt plus 1pt}%
\setlength{\belowdisplayshortskip}{1pt plus 1pt minus 1pt}%
\begin{equation*}
    \|z - \hat{z}\|_w^2 \coloneqq \sum_{k=1}^2 \left(\frac{z_k - \hat{z}_k}{w_k}\right)^2 \quad \text{with} \quad w \coloneqq \overline{z} - \underline{z} = (0.15,\, 0.50).
\end{equation*}
}%
Replacing the Euclidean norm with $\|\cdot\|_w^2$ in the inner maximization objectives~\eqref{eq:regularized_dro-dual} and~\eqref{eq:maximize-over-maps} is equivalent to solving the standard Euclidean DRO problem in the rescaled coordinates $z_k \gets z_k / w_k$.

\myparagraph{Hyperparameters.} 
All methods in the robust control experiment are trained for $E=100$ epochs. For the outer minimization, we optimize the policy $\pi_\theta$ using Proximal Policy Optimization~(PPO)~\citep{schulman2017ppo}. We parameterize PPO's underlying actor (policy) and critic (value) networks as MLPs. Denoting by $g_\theta:\RR^4\to\RR^2$ the two action logits produced by the actor, the policy is  $\pi_\theta(a| s)=\exp(g_{\theta,a}(s))/\sum_{b=0}^{1}\exp(g_{\theta,b}(s))$
for $a\in\{0,1\}$. We update the weights of both networks via SGD with Nesterov momentum~\citep{Nesterov1983AMF} using a momentum parameter of 0.9, while keeping all other PPO settings at their standard defaults (including the outer step size $\alpha$). At each iteration, we draw a batch of $B=64$ nominal uncertainty anchors $\{\hat z_i\}_{i=1}^B$ independently from the uniform distribution on~$\cZ$ and generate one adversarial counterpart for each anchor, yielding a combined batch of $2B=128$ environments. Updates are performed on this mixture of nominal and adversarial conditions so that the agent remains capable on standard tasks while learning to handle worst-case scenarios. We report results for penalty parameters $\lambda \in \{0.1, 0.5, 1\}$. \looseness=-1

MPA performs $R=3$ rounds of $K=15$ ascent iterations, amounting to
$45$ ascent iterations in total and, including the terminal reassignment,
four reassignment steps. For RO, we tune the
$\ell_2$-perturbation budget to the regularization level, using
$\varepsilon=0.75$ for $\lambda=0.1$, $\varepsilon=0.5$ for
$\lambda=0.5$, and $\varepsilon=0.25$ for $\lambda=1.0$. Both
entropy-regularized baselines, SDRO and WFR, use a regularization strength
of $\varepsilon_{\mathrm{ent}}=0.01$. WFR approximates the adversarial
distribution using $n=8$ auxiliary particles per data point, whereas SDRO
uses a multilevel Monte Carlo gradient estimator based on a truncated
geometric distribution with success probability $p=0.5$ and 
truncation level $M=3$.

\myparagraph{Additional results.}
To evaluate the sensitivity of the learned robustness to the regularization
parameter $\lambda$, \autoref{tab:eval_variants} reports results for three
values, with all other training and evaluation settings kept fixed:
panel~(a) uses $\lambda=0.1$, panel~(b) uses $\lambda=0.5$, and panel~(c)
uses $\lambda=1.0$, corresponding to the experiment reported in
\autoref{fig:radar_RL_main}. ICNN-DRO attains the highest mean episode length on every evaluation variant at every penalty level, with MPA second throughout. MPA's advantage over the remaining baselines is substantial at $\lambda=0.5$ and~$\lambda=1.0$, but narrows at $\lambda=0.1$, where it stays ahead of WFR on all seven variants by only a small margin. WFR is consistently the
strongest remaining baseline, but falls substantially behind ICNN-DRO
on the harder extrapolative shifts, particularly Heavy and Short. Overall, performance varies non-monotonically with $\lambda$
for several methods, while the ordering of the two strongest approaches remains stable throughout.

\begin{table}[!t]
    \centering
    \renewcommand{\arraystretch}{1.15}
    \caption{Mean episode length $\pm$~standard error over $1{,}000$ deterministic rollouts across different $\lambda$. Panel~(a) reports $\lambda=0.1$; panel~(b) reports $\lambda=0.5$ and panel~(c) reports $\lambda=1$. The leftmost column corresponds to the original training configuration, the middle group to easier physics variants (Light, Long, Soft Gravity), and the right group to harder variants (Heavy, Short, Strong Gravity). Bold entries indicate the best result per column.\looseness=-1}
    \resizebox{\linewidth}{!}{
        \begin{tabular}{l c ccc ccc}
            \toprule
            \multirow{2.5}{*}{Method} & \multicolumn{1}{c}{Original} & \multicolumn{3}{c}{Easier Environments} & \multicolumn{3}{c}{Harder Environments} \\
            \cmidrule(lr){2-2} \cmidrule(lr){3-5} \cmidrule(lr){6-8}
            & Original & Light & Long & Soft Gravity & Heavy & Short & Strong Gravity \\
            \midrule
            \multicolumn{8}{l}{\emph{(a) $\lambda{=}0.1$}} \\
            \midrule
            ERM  & $186.4 \pm 1.9$  & $188.7 \pm 1.9$  & $191.4 \pm 1.2$  & $217.3 \pm 1.9$  & $178.6 \pm 1.5$  & $155.6 \pm 0.8$  & $157.6 \pm 1.1$ \\
            PA & $382.9 \pm 7.4$  & $395.1 \pm 7.4$  & $266.9 \pm 4.1$  & $543.3 \pm 7.1$  & $289.0 \pm 5.8$  & $205.1 \pm 1.7$  & $207.0 \pm 2.8$ \\
            RO & $380.1 \pm 5.1$ & $398.3 \pm 5.5$ & $270.5 \pm 3.4$ & $556.6 \pm 5.3$ & $295.1 \pm 2.1$ & $209.0 \pm 1.6$ & $212.4 \pm 1.2$\\
            WFR      & $507.3 \pm 7.6$  & $532.9 \pm 8.0$  & $434.5 \pm 5.8$  & $756.8 \pm 7.2$  & $387.0 \pm 4.6$  & $313.4 \pm 2.5$  & $323.9 \pm 3.0$ \\
            SDRO     & $239.5 \pm 3.3$  & $241.6 \pm 4.7$  & $238.2 \pm 2.9$  & $355.8 \pm 6.0$  & $213.5 \pm 2.4$  & $173.6 \pm 1.4$  & $182.7 \pm 1.6$ \\
            NN-DRO   & $225.3 \pm 1.8$  & $228.2 \pm 1.6$  & $221.0 \pm 1.3$  & $289.6 \pm 3.1$  & $210.3 \pm 1.7$  & $184.1 \pm 0.6$  & $185.3 \pm 1.4$ \\
            \cmidrule(lr){1-8}
            MPA     & $516.4 \pm 7.5$  & $549.3 \pm 7.0$  & $446.3 \pm 4.3$  & $763.3 \pm 8.1$  & $390.6 \pm 5.8$  & $323.3 \pm 2.7$  & $337.4 \pm 3.4$ \\
            ICNN-DRO & $\mathbf{771.3} \pm 7.3$  & $\mathbf{779.4} \pm 7.5$  & $\mathbf{616.3} \pm 6.2$  & $\mathbf{978.2} \pm 1.6$  & $\mathbf{665.2} \pm 7.8$  & $\mathbf{478.3} \pm 4.3$  & $\mathbf{415.3} \pm 5.5$ \\
            \midrule
            \multicolumn{8}{l}{\emph{(b) $\lambda{=}0.5$}} \\
            \midrule
            ERM  & $186.4 \pm 1.9$  & $188.7 \pm 1.9$  & $191.4 \pm 1.2$  & $217.3 \pm 1.9$  & $178.6 \pm 1.5$  & $155.6 \pm 0.8$  & $157.6 \pm 1.1$ \\
            PA & $255.1 \pm 2.0$  & $259.0 \pm 2.0$  & $252.9 \pm 1.1$  & $354.2 \pm 3.4$  & $239.1 \pm 1.3$  & $219.9 \pm 0.9$  & $217.6 \pm 1.0$ \\
            RO & $260.3 \pm 2.1$ & $267.4 \pm 3.2$ & $258.8 \pm 4.1$ & $366.6 \pm 7.5$ & $245.1 \pm 5.2$& $227.0 \pm 1.2$& $221.3 \pm 2.5$\\
            WFR      & $481.9 \pm 7.2$  & $491.7 \pm 7.3$  & $375.9 \pm 4.6$  & $656.3 \pm 7.2$  & $400.6 \pm 5.9$  & $301.8 \pm 2.2$  & $311.9 \pm 3.4$ \\
            SDRO     & $227.6 \pm 4.1$  & $231.8 \pm 3.8$  & $216.2 \pm 3.2$  & $303.5 \pm 6.0$  & $210.7 \pm 2.6$  & $181.8 \pm 1.3$  & $184.5 \pm 1.6$ \\
            NN-DRO   & $221.0 \pm 1.3$  & $224.4 \pm 1.7$  & $213.0 \pm 1.4$  & $285.4 \pm 2.5$  & $207.6 \pm 1.2$  & $187.2 \pm 0.6$  & $188.2 \pm 1.3$ \\
            \cmidrule(lr){1-8}
            MPA      & $641.1 \pm 7.7$  & $659.8 \pm 7.3$  & $481.4 \pm 6.4$  & $835.2 \pm 5.0$  & $488.6 \pm 7.0$  & $421.7 \pm 3.6$  & $381.6 \pm 3.5$ \\
            ICNN-DRO & $\mathbf{773.6} \pm 7.6$  & $\mathbf{781.6} \pm 7.8$  & $\mathbf{618.2} \pm 6.9$  & $\mathbf{984.3} \pm 1.8$  & $\mathbf{656.3} \pm 8.1$  & $\mathbf{468.3} \pm 4.1$  & $\mathbf{401.3} \pm 5.3$ \\
            \midrule
            \multicolumn{8}{l}{\emph{(c) $\lambda{=}1.0$}} \\
            \midrule
            ERM  & $186.4 \pm 1.9$  & $188.7 \pm 1.9$  & $191.4 \pm 1.2$  & $217.3 \pm 1.9$  & $178.6 \pm 1.5$  & $155.6 \pm 0.8$  & $157.6 \pm 1.1$ \\
            PA & $342.4 \pm 5.4$  & $349.9 \pm 5.4$  & $298.0 \pm 3.1$  & $545.2 \pm 7.4$  & $278.8 \pm 3.0$  & $236.4 \pm 1.3$  & $238.4 \pm 1.8$ \\
            RO & $345.0 \pm 4.2$ & $351.6 \pm 4.3$ & $300.6 \pm 2.7$  & $551.2 \pm 2.3$ & $286.3 \pm 1.2$ & $240.0 \pm 3.8$ & $243.7 \pm 1.6$ \\
            WFR      & $435.0 \pm 6.8$  & $459.6 \pm 7.3$  & $387.4 \pm 5.2$  & $703.8 \pm 7.7$  & $337.5 \pm 4.1$  & $276.2 \pm 2.0$  & $287.3 \pm 2.8$ \\
            SDRO     & $246.4 \pm 4.5$  & $248.7 \pm 4.5$  & $244.6 \pm 3.5$  & $378.1 \pm 7.0$  & $215.9 \pm 2.9$  & $176.7 \pm 1.1$  & $183.3 \pm 1.8$ \\
            NN-DRO   & $228.0 \pm 1.7$  & $231.4 \pm 1.9$  & $227.0 \pm 1.1$  & $293.3 \pm 2.7$  & $214.8 \pm 1.5$  & $193.4 \pm 0.9$  & $192.6 \pm 1.1$ \\
            \cmidrule(lr){1-8}
            MPA      & $570.3 \pm 8.4$  & $581.6 \pm 8.5$  & $428.6 \pm 5.9$  & $772.1 \pm 6.8$  & $426.3 \pm 7.0$  & $342.3 \pm 2.5$  & $339.8 \pm 3.6$ \\
            ICNN-DRO & $\mathbf{829.3} \pm 6.6$  & $\mathbf{837.4} \pm 6.3$  & $\mathbf{797.5} \pm 6.7$  & $\mathbf{992.7} \pm 0.6$  & $\mathbf{725.2} \pm 7.7$  & $\mathbf{561.7} \pm 6.4$  & $\mathbf{472.5} \pm 7.6$ \\
            \bottomrule
        \end{tabular}
    }
    \label{tab:eval_variants}
\end{table}

\myparagraph{Differentiable rollouts and pathwise gradients.}
To ascend $\mathcal{L}(\theta;\omega)$ with respect to $\omega$, we need $\nabla_\omega f(\theta, T_\omega(\hat{z}))$. Applying the chain rule, the gradient of the objective $f(\theta, T_\omega(\hat{z}))$ decomposes as
\[
    \nabla_\omega f(\theta, T_\omega(\hat{z})) = \left[ \frac{\partial T_\omega(\hat{z})}{\partial \omega} \right]^\top \nabla_z f(\theta, z) \Big|_{z = T_\omega(\hat{z})}.
\]
We must resolve three distinct non-differentiability challenges to ensure both components of this chain rule permit continuous gradient flow. Specifically, the environment gradient $\nabla_z f(\theta, z)$ presents two discontinuities within the standard MDP formulation, while the map Jacobian $\partial T_\omega(\hat{z}) / \partial \omega$ introduces a third related to physical boundaries. We address these in sequence. 

Evaluating $\nabla_z f(\theta, z)$, which is also required by particle-based baselines like PA and SDRO to optimize their adversarial perturbations, requires differentiating the episodic return directly through the unrolled environment dynamics; \textit{i.e.}, $\nabla_z r_t$. Following the decoupled formulation in~\cite[Appendix~A.1]{you2025acceleratingvisualpolicy}, the gradient of the per-step reward $r_t$ with respect to $z$ propagates recursively
\[
    \nabla_z r_t = (\nabla_z a_t)^\top \nabla_{a_t} r_t
    +
    (\nabla_z s_t)^\top \nabla_{s_t} r_t,
\]
where the state  and action Jacobians are given recursively by
\[  
\nabla_z s_t = \frac{\partial s_t}{\partial z} + \frac{\partial s_t}{\partial s_{t-1}} \nabla_z s_{t-1} + \frac{\partial s_t}{\partial a_{t-1}} \nabla_z a_{t-1},\quad \nabla_z a_{t-1}= \frac{\partial a_{t-1}}{\partial s_{t-1}} \nabla_z s_{t-1}.
\]
Two terms in this standard discrete MDP formulation break the gradient flow. First, the original action $a_t \in \{0, 1\}$ is discrete, rendering $\nabla_z a_t = 0$ almost everywhere and completely severing the gradient. We address this with a straight-through estimator~\cite{bengio2013estimating}. Specifically, we define a vector of available physical forces $\mathbf{u}\coloneqq(-u_{\max},u_{\max} )^\top$. In the forward pass, the simulator applies the hard action $a_t = \argmax_{a\in\{0,1\}} g_{\theta,a}(s_t)$, so that the realized force is
$-u_{\max}$ if $a_t=0$ and $u_{\max}$ otherwise. In the backward pass, derivatives are
instead taken through the relaxed control signal $u_t \coloneqq \mathrm{softmax}(g_\theta(s_t))^\top \mathbf{u}$, where 
\[  
    [\mathrm{softmax}(\mathbf{v})]_a = \frac{e^{v_a / \tau}}{\sum_{a'\in\{0,1\}}e^{v_{a'} / \tau}}
\]  
for a temperature $\tau > 0$, so that the degenerate term $\nabla_z a_t $ is replaced by $\nabla_z u_t $. Second, an episode terminates once $|x_t|>x_{\max}$ or
$|\vartheta_t|>\vartheta_{\max}$, at which point the per-step reward switches
from one to zero. As a function of~$s_t$, this reward has zero gradient away
from the termination boundaries and is nondifferentiable on the boundaries. We circumvent this pathology by replacing the hard indicator with a smooth survival function~$p_t(s_t)$, which captures the probability that the agent remains within physical bounds at time~$t$. Given the limits $x_{\max}$ and $\vartheta_{\max}$ and the logistic sigmoid function $\sigma(x) = (1 + e^{-\beta x})^{-1}$ with a sharpness parameter $\beta>0$, we define
\[    
    p_t(s_t) = \sigma\big(x_{\max} - |x_t|\big) \cdot \sigma\big(\vartheta_{\max} - |\vartheta_t|\big).
\]
The loss $f(\theta, z)$ is then approximated by the sum of cumulative survival probabilities over the horizon
\[
    f(\theta, z) \approx -\sum_{t=0}^{H-1} \prod_{k=0}^{t} p_k(s_k).
\]
As the parameters $\beta \to \infty$ and $\tau \to 0^+$, this formulation exactly recovers the true undiscounted return and the discrete action logic of the original MDP.  By operating away from these asymptotic limits in practice (we set $\beta=20$ and $\tau=1$), these relaxations provide a smooth approximation that carries informative gradients throughout the rollout. This aligns with established practices in differentiable simulation, where smoothing discrete transitions is essential to prevent chaotic or vanishing gradients~\citep{hu2020difftaichi, freeman2021brax, suh2022differentiable, metz2022gradientsnotall}. With these relaxations, we estimate the environment gradient $\nabla_z f(\theta,z)$ by backpropagating the surrogate loss through each simulated trajectory while holding its sampled initial state fixed, and then averaging the resulting derivatives over the sampled initial states. This procedure is known as pathwise, or reparameterization, gradient estimation~\citep{heess2015svg}; see also \citep[\S5]{mohamed2020mcgradient}. Because the smooth survival score modifies the reward and the straight-through action rule substitutes relaxed derivatives for discrete actions, the resulting estimator is generally biased relative to the original discrete-MDP objective~\cite{bengio2013estimating}.

With the issues related to $\nabla_z f(\theta, z)$ resolved, we finally address the third challenge regarding the map Jacobian $\frac{\partial T_\omega(\hat{z})}{\partial \omega}$. The physical parameter $z$ must satisfy the box constraints $\underline{z} \leq z \leq \overline{z}$ for the dynamics to remain well-defined. While particle-based baselines (such as PA and WFR) can enforce this via projected gradient ascent, which does not require differentiating through the projection operator itself, optimizing $\omega$ requires backpropagating through the entire mapping mechanism. If we bounded the map's output using a hard projection operator, the Jacobian $\partial T_\omega(\hat{z}) / \partial \omega$ would evaluate to zero everywhere outside the valid domain (\ie, whenever  $z>\overline{z}$ or $z<\underline{z}$), severing the gradient flow. To guarantee a non-vanishing Jacobian, we reparameterize $T_\omega$ such that its output is naturally bounded. We lift the normalized nominal parameters into an unconstrained latent space using the logistic sigmoid function $\sigma(x) = (1 + e^{-x})^{-1}$, apply a neural mapping $H_\omega$, and then map the result back to the physical bounds via the logistic sigmoid function $\sigma$. Expressed as a single composition, the full adversarial map becomes
\[
    T_\omega(\hat{z}) = \underline{z} + (\overline{z} - \underline{z}) \odot \sigma\left( H_\omega\left( \sigma^{-1}\Big( \frac{\hat{z} - \underline{z}}{\overline{z} - \underline{z}} \Big) \right) \right),
\]
where both $\sigma$ and $\sigma^{-1}$ are applied element-wise and $\odot$ denotes the Hadamard product. For the NN-DRO baseline, $H_\omega$ is parameterized as a standard Multi-Layer Perceptron. For our ICNN-DRO approach, $H_\omega$ is explicitly modeled as the Brenier gradient $\nabla \psi_\omega$. This formulation respects the physical box constraints while remaining differentiable end-to-end. 

\section{Additional Experiments}
This section collects experiments and analyses that complement the main text. Section~\ref{sec:exp_robust_ls} presents a robust least-squares experiment; Section~\ref{sec:cross-model-universality} examines the transferability of adversarial perturbations across classifiers; Section~\ref{app:monge_audit} audits empirical Monge gaps across penalty levels. Two further studies probe the geometry of ICNN-DRO itself: Section~\ref{sec:design_axes} reveals that the rank~$r_{\mathrm{out}}$ of the quadratic term in the ICNN's output layer---rather than its depth or width---is the binding constraint on adversarial capacity, and Section~\ref{sec:lambda_annealing} investigates the effect of annealing the penalty~$\lambda$ during training.

\subsection{Robust Least Squares}
\label{sec:exp_robust_ls}
\begin{wrapfigure}[18]{r}{0.4\textwidth}
    \vspace{-0.7cm}
    \includegraphics[width=\linewidth]{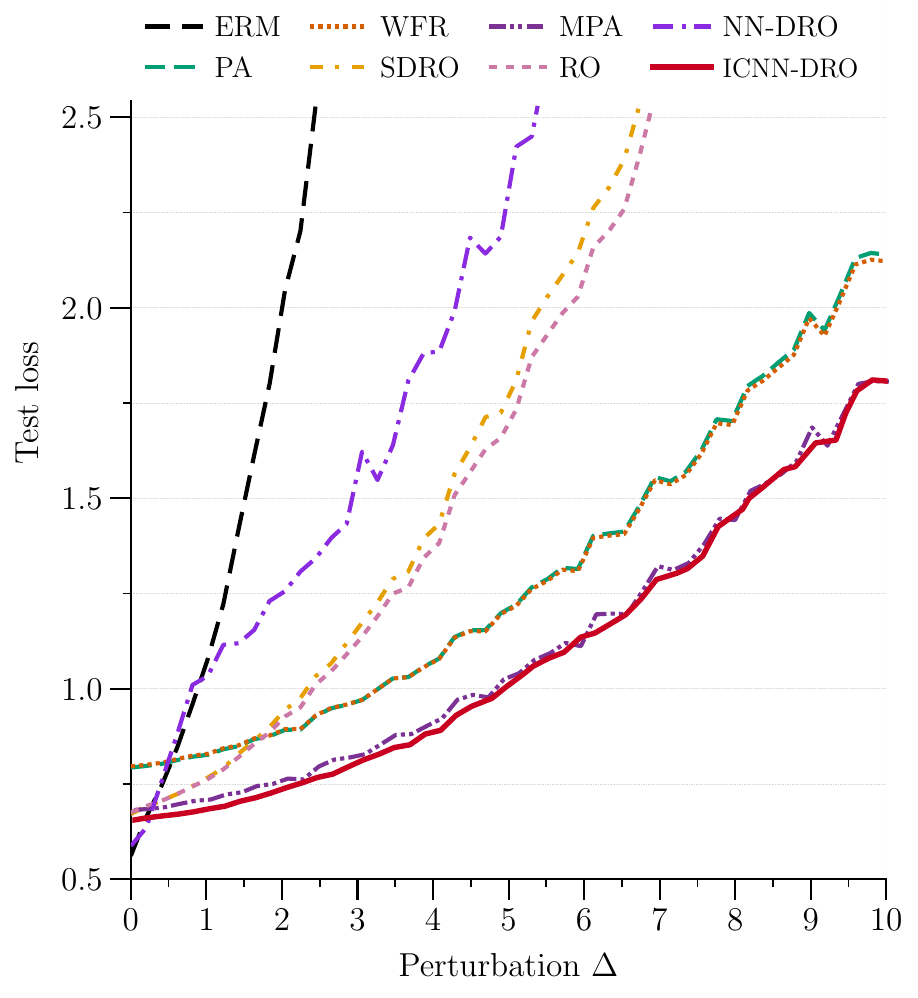}
    \caption{Test loss vs.\ perturbation level $\Delta$ on uncertain least squares.}
    \label{fig:Least_squares}
\end{wrapfigure}
We evaluate robustness on a synthetic \emph{uncertain least squares} problem, following the setup in~\cite{xu2025gradientflowsamplerbaseddistributionally}, which in turn builds on~\cite{zhu2021kernel, el1997robust}. The goal is to learn a parameter vector $\theta \in \Theta=\mathbb{R}^{10}$ for a least-squares problem in which uncertainty enters through the system matrix. Specifically, the uncertainty variable is one-dimensional, with values restricted to the uncertainty domain $[-1,1]$. The per-sample loss is defined as $f(\theta,z)\coloneqq\|A(z)\theta-b\|_2^2$, where $A(z){\coloneqq} A_0+zA_1 \in \mathbb{R}^{10\times 10}$. The matrices $A_0,A_1\in\mathbb{R}^{10\times 10}$ and the vector $b\in\mathbb{R}^{10}$ are fixed realizations with entries sampled independently from $\mathcal N(0,1)$. To simulate a limited observation window, we generate $N=10$ nominal training samples $\{\hat z_i\}_{i=1}^N$ independently from the uniform distribution on~$[-0.5,0.5]$ to construct the empirical reference measure $\hat{\mathbb P}=\frac{1}{N}\sum_{i=1}^N \delta_{\hat z_i}$. Robustness is then evaluated by testing the learned models on a shifted distribution $\PP_\Delta\coloneqq\mathrm{Unif}([-0.5(1+\Delta),\,0.5(1+\Delta)])$ for $\Delta\in[0,10]$, where larger $\Delta$ corresponds to a greater departure from the training distribution. 

All methods are trained for $E=10$ epochs using full-batch gradient descent ($N=B=10$) with an outer step size of $\alpha=0.01$ and a penalty parameter of $\lambda=0.1$. Every inner maximization is given the same budget of $3{,}000$ ascent iterations, with step sizes selected by the BB+Armijo procedure (\autoref{alg:adaptive_inner_stepsize}) under the best configuration from our
search, namely  $[\eta_{\min}, \eta_{\max}] = [10^{-6}, 10^{1}]$,  $\eta_0 = 10^{-1}$, and $(c, \tau, J) = (10^{-4}, 0.5, 10)$. MPA splits this
budget into $R=5$ rounds of $K=600$ iterations, which amounts to six
reassignments once the terminal one is counted, and RO has an
$\ell_2$-perturbation budget of $\varepsilon=0.316$. Both entropy-regularized baselines, SDRO and WFR, utilize a regularization strength of $\varepsilon_{\mathrm{ent}}=0.1$. Within this group, WFR approximates the adversarial distribution using $n=8$ auxiliary particles per data point, while SDRO employs a multilevel Monte Carlo gradient estimator based on a truncated geometric distribution with success probability $p=0.5$ and a truncation level of $M=4$. 
\looseness=-1

The results in \autoref{fig:Least_squares}, averaged over $10$ independent runs, compare ICNN-DRO against ERM and the established robust baselines. As $\Delta$ increases, all DRO variants substantially outperform ERM, confirming that robust training improves stability under distribution shift. Among the DRO baselines, the SDRO method degrades the fastest, likely due to the bias inherent in its nested Monte Carlo estimation. Since $m=1$, \autoref{prop:pa_monotonicity} guarantees that PA is already cyclically monotone, so
one might expect PA and MPA to perform similarly. Yet MPA still outperforms PA, showing that its
benefit extends beyond restoring cyclical monotonicity. Even when cyclical monotonicity already 
holds, the reassignment step acts as an implicit multi-start mechanism, letting each particle escape 
a weak local maximum by inheriting a stronger point discovered by a neighboring sample. ICNN-DRO attains the best performance in this experiment, followed closely by MPA, 
while PA and WFR trail by a wider margin. This behavior is expected because, in one dimension, the inner maximization pushes worst-case samples toward the boundaries of the uncertainty set, collapsing to either $-1$ or $1$. As a result, the various DRO formulations ultimately discover the same worst-case support, leaving limited room for our structured map-based approach to demonstrate a significant advantage. \looseness=-1

\vspace{-0.35cm}
\subsection{Cross-Model Universality}
\vspace{-0.25cm}
\label{sec:cross-model-universality}
When solving~\eqref{prob:regularized_dro} with an ICNN as described in \autoref{sec:explicit_maps}, the resulting adversarial perturbations are constructed with respect to a specific trained model $\theta$. This raises the question of whether these perturbations are tailored to that model or whether they generalize across different models. In this appendix, we investigate the extent to which perturbations generated against a given \emph{source model} remain effective when applied to other \emph{target models} that differ from it in training or in architecture. Here, ``universality'' should be understood in the \emph{cross-model} sense: the ICNN still produces an \emph{input-dependent} perturbation for each test example, but we ask whether that perturbation
continues to fool models other than the one used during adversary training.

\begin{wraptable}[20]{r}{0.55\textwidth}
  \vspace{-0.4cm}
  \centering
  \caption{\textbf{Cross-architecture transfer of the ICNN adversary trained against R1.} 
    \emph{Drop}: Clean-Adv.\ gap; \emph{Ovlp}: counts examples fooled by both
    the source and the target; \emph{\% src err}: percentage of source-model failures that transfer to the target;
    \emph{Lbl mis.\%}: percentage of overlap examples on which source and target predict different incorrect labels.}
  \label{tab:adv_transfer}
  \vspace{-0.1cm}
  \footnotesize
  \setlength{\tabcolsep}{3pt}
  \renewcommand{\arraystretch}{0.9}
  \begin{tabular}{@{}l ccc ccc@{}}
    \toprule
    & \multicolumn{3}{c}{\textbf{Accuracy (\%)}} & \multicolumn{3}{c}{\textbf{Transfer}} \\
    \cmidrule(lr){2-4} \cmidrule(lr){5-7}
    \textbf{Model} & \textbf{Clean}$\uparrow$ & \textbf{Adv.}$\uparrow$ & \textbf{Drop}$\downarrow$ & \textbf{Ovlp.} & \textbf{\% src} & \textbf{Lbl mis.\%} \\
    \midrule
    \textit{R1-ICNN (src)} & $92.38$ & $74.29$ & $18.09$ & $2{,}336$ & --- & --- \\
    \midrule
    ViT1 & $89.37$ & $49.39$ & $39.98$ & $1{,}903$ & $81.46$ & $56.96$ \\
    ViT2 & $78.22$ & $53.54$ & $24.68$ & $1{,}696$ & $72.60$ & $54.13$     \\
    ViT3 & $82.32$ & $54.38$ & $27.95$ & $1{,}763$ & $75.47$ & $55.30$     \\
    ViT4 & $92.42$ & $57.29$ & $35.12$ & $1{,}839$ & $78.72$ & $55.41$ \\
    \midrule
    R2   & $98.14$ & $42.34$ & $55.80$ & $2{,}120$ & $90.75$ & $42.78$     \\
    R3   & $91.54$ & $50.99$ & $40.55$ & $2{,}043$ & $87.46$ & $36.76$     \\
    R4   & $89.99$ & $46.35$ & $43.64$ & $2{,}029$ & $86.86$ & $42.34$     \\
    R5   & $91.71$ & $52.45$ & $39.26$ & $1{,}927$ & $82.49$ & $47.85$     \\
    \bottomrule
  \end{tabular}
  
\end{wraptable}
We consider a population of CIFAR-10 classifiers consisting of five ResNet-18 models (R1--R5) and four Vision Transformers (ViT1--ViT4)~\cite{dosovitskiy2021an}, where models within each group differ only in their training hyperparameters. We take R1 (the first ResNet-18) as the \emph{source model}, train an ICNN adversary against it, and denote the resulting pair by R1-ICNN. The remaining models, namely R2-R5 and ViT1-ViT4, serve as \emph{target models}. This distinction is important. Specifically, the source model is the one that appears inside the minimax problem during adversary training, whereas the target models are used only at evaluation time to test transfer.

We evaluate on a fixed subset of $N=9{,}085$ CIFAR-10 test points. For each point, we generate one perturbation against the source model R1 using its trained ICNN adversary, and then apply that perturbed input unchanged to every target model. The source model is misclassified on $2{,}336$ of these perturbed examples. In \autoref{tab:adv_transfer}, the accuracy block reports clean accuracy, adversarial accuracy, and their gap on this same fixed subset. The transfer block then measures how source failures propagate to each target: \emph{Ovlp.} counts perturbed examples misclassified by both source and target (and is therefore at most $2{,}336$), \emph{\% src} is this overlap as a percentage of the $2{,}336$ source adversarial failures, and \emph{Lbl mis.\%} is the percentage of overlap examples on which source and target predict different incorrect labels.

It is clear that transfer is much stronger within the ResNet family than across architectures. For the held-out ResNets R2-R5, adversarial accuracy drops by $39.26$ to $55.80$ points and the transferred attack recovers $82\%$--$91\%$ of the source errors. For ViT1-ViT4, the drops are smaller, $24.68$ to $39.98$ points, and the recovery rate falls to $73\%$--$81\%$. This indicates that the learned perturbations are not merely exploiting one particular set of weights; rather, they capture vulnerabilities that are shared most strongly within the architectural class of the source model. ResNet targets are not only fooled more often (\emph{\% src}: $82\%$--$91\%$ vs.\ $73\%$--$81\%$ for ViTs), but also more often agree with the source on the wrong label (\emph{Lbl mis.\ (\%)}: $36.76\%$--$47.85\%$ vs.\ $54.13\%$--$56.96\%$). This suggests that perturbations learned against R1 transfer more faithfully within the ResNet family. This is a desirable failure mode for a robust training method: transfer persists beyond the source model, but degrades across architectures. \looseness=-1

\subsection{An Empirical Monge Gap}
\label{app:monge_audit}
\begin{table}[!t]
  \centering
  \renewcommand{\arraystretch}{1.00}
  \caption{Monge gap on CIFAR10.}
  \resizebox{\linewidth}{!}{
    \begin{tabular}{@{}c l S S S S S@{}}
      \toprule
       & & \multicolumn{5}{c}{Regularization strength $\lambda$} \\
      \cmidrule(lr){3-7}
       & Method & {$2$} & {$5$} & {$10$} & {$15$} & {$30$} \\
      \midrule
      
       & ERM      & {$0$} & {$0$} & {$0$} & {$0$} & {$0$} \\
      \hdashline\noalign{\vskip 1pt}
      \multirow{4}{*}{\rotatebox[origin=c]{90}{Baselines}}
       & PA       & 1.24e-2 & 1.15e-2 & 3.01e-3 & 7.93e-4 & 7.97e-4 \\
       & WFR      & 1.13e-2 & 1.02e-2 & 1.73e-3 & 5.68e-4 & 5.58e-4 \\
       & SDRO     & 1.83e-0 & 1.04e-0 & 4.52e-3 & 1.27e-3 & 1.29e-3 \\
       & NN-DRO   & 1.37e-2 & 1.21e-2 & 3.54e-3 & 3.28e-3 & 3.16e-3 \\
      \cmidrule(l){2-7}
      \multirow{2}{*}{\rotatebox[origin=c]{90}{Ours}}
       & MPA                                    & 1.22e-3 & 1.08e-3 & 9.03e-4 & 5.89e-4 & 5.07e-4 \\
    
       & ICNN-DRO & 1.52e-7 & 1.33e-7 & 1.28e-7 & 1.25e-7 & 1.12e-7 \\
      \bottomrule
    \end{tabular}
  }
  \label{tab:monge_gap_vs_lambda_lr}
  \vspace{-0.5cm}
\end{table}
The analysis in \S~\ref{sec:Preliminaries} shows that cyclical monotonicity is
tied to the quality of the inner maximization. By~\autoref{prop:wasted_transport},
transport waste induced by a non-monotone admissible map appears as a positive
Monge gap, and hence as suboptimality in problem~\eqref{eq:maximize-over-maps}.
In this appendix, we examine this effect empirically by estimating the Monge
gap in two settings: the feature-space logistic regression experiment in
\S~\ref{sec:Adversarial_Logistic Regression} and the robust CartPole control
experiment in \S~\ref{sec:exp_rl_main}. \looseness=-1

Recall from \autoref{def:monge_gap} that, for any $\hat\PP\in\cP$, the Monge gap of a map $T\in\cT$ is given by 
\[
    \cM_{\hat\PP}(T)
    \;=\;
    \EE_{\hat\PP}\bigl[\|\hat Z-T(\hat Z)\|_2^2\bigr] \; - \; \wasserstein^2 \bigl(T_\#\hat\PP,\,\hat\PP\bigr).
\]
In this appendix we specialize this definition to the empirical measure $\hat\PP = \frac{1}{B}\sum_{i=1}^{B}\delta_{\hat z_i}$. Since $\wasserstein^2(T_\#\widehat\PP,\widehat\PP)$ is costly to evaluate
directly for high-dimensional empirical measures, we follow
the approach in~\cite{uscidda2023monge} and replace it with the 
Sinkhorn divergence. Specifically, for $\PP,\hat\PP\in\cP$, we first define the entropy-regularized optimal transport cost as
\begin{equation*}
    \OT(\PP,\hat\PP)
    \;\coloneqq\;
    \inf\big\{ \EE_{\gamma}\!\bigl[\|Z-\hat Z\|_2^2\bigr] + \rho\,\mathrm{KL} \big(\gamma\,\|\,\PP\otimes\hat\PP\big) :\gamma\in\Gamma(\PP,\hat\PP) \big\},
\end{equation*}
and we write $\mathsf{S}(\PP,\hat\PP)$ for the associated Sinkhorn
divergence as $ \mathsf{S}(\PP,\hat\PP) \coloneqq \OT(\PP,\hat\PP)-\tfrac{1}{2}\OT(\PP,\PP)-\tfrac{1}{2}\OT(\hat\PP,\hat\PP)$~\cite{feydy2019interpolating}. We then estimate the Monge gap  $\cM_{\widehat\PP}(T)$ by
\begin{equation*}
    \widehat{\cM}_{\widehat\PP}(T)
    \;\coloneqq\;
    \EE_{\hat\PP}\big[\|T(\hat Z)- \hat Z\|_2^2\big]
    \;-\;
    \mathsf{S}\bigl(T_\#\hat\PP, \hat\PP\bigr).
\end{equation*}
The parameter $\rho>0$ is used only in this Sinkhorn computation and is distinct
from the training-time entropic regularization $\varepsilon_{\mathrm{ent}}$
used in SDRO and WFR. Moreover, we have $\widehat{\cM}_{\widehat\PP}(T)\geq\cM_{\widehat\PP}(T)$, and the two coincide when $\rho=0$; see~\cite[Appendix~A.3]{uscidda2023monge}. Thus, we obtain an upper bound on the actual Monge gap. In all experiments we set
\[
    \rho = 0.01 \cdot \mathrm{median}\big\{\|u-v\|_2^2 : u,v \in \{\widehat z_i\}_{i=1}^B\cup\{T(\widehat z_i)\}_{i=1}^B,\; u\neq v \big\},
\]
that is, $\rho$ is $1\%$ of the median pairwise squared distance between distinct points in the combined set of empirical and adversarial samples. We solve the resulting Sinkhorn problem for $1{,}000$ iterations to solver tolerance $10^{-7}$. Next, we report the (approximate) Monge gap for the experiments in \S~\ref{sec:Adversarial_Logistic Regression} and \S~\ref{sec:exp_rl_main}; see \autoref{tab:monge_gap_vs_lambda_lr} and \autoref{tab:monge_gap_vs_lambda_rl}. A common pattern in both tables is that, as $\lambda$ increases, the quadratic penalty in~\eqref{eq:regularized_dro-dual} and~\eqref{eq:maximize-over-maps} becomes more dominant, and thus the objective function of the inner maximization problem becomes more concave. Hence, the adversarial map is forced closer to the identity, and the problem moves closer to ERM. Accordingly, the Monge gap decreases across methods. The ERM row is identically zero in both tables, since its map is the identity. \looseness=-1

\myparagraph{Adversarial multi-class logistic regression (\S~\ref{sec:Adversarial_Logistic Regression}).}
We evaluate $\widehat{\cM}_{\widehat\PP}(T)$ on the full CIFAR-10 dataset, so
that $N=B=60{,}000$, and report the results in
\autoref{tab:monge_gap_vs_lambda_lr} for
$\lambda\in\{2,5,10,15,30\}$; the main-text experiment in
\S~\ref{sec:Adversarial_Logistic Regression} uses $\lambda=10$. At
$\lambda=10$, \methodname is the best-performing method in \autoref{fig:Combined_Logistic_regression} and also attains the smallest Monge gap estimate, essentially at numerical precision. This is consistent with the
role of transport waste in the inner-maximization error. Still, the Monge gap does not determine the full empirical ranking: MPA has the second-smallest Monge gap at $\lambda=10$ and improves over PA by roughly an order of magnitude (see \autoref{tab:monge_gap_vs_lambda_lr}), yet it is not the second-best
method empirically. This is not surprising, since the Monge gap appears only in a lower bound on the adversarial suboptimality; \cf~\autoref{prop:wasted_transport}.
\looseness=-1

\myparagraph{Robust control (\S~\ref{sec:exp_rl_main}).} For this experiment, we evaluate $\widehat{\cM}_{\widehat\PP}(T)$ on a set of $B=32{,}768$ points sampled uniformly from the CartPole box $(m_p,\ell)\in[0.05,0.20]\times[0.25,0.75]$, using adversaries trained for $H=1{,}000$ episode steps. The resulting estimates are reported in \autoref{tab:monge_gap_vs_lambda_rl} for $\lambda\in\{0.1,0.5,1\}$. Across all three values of $\lambda$, ICNN-DRO attains the smallest Monge gap estimate, while MPA is consistently second and improves over PA by roughly three orders of magnitude. The method with the smallest gap throughout, ICNN-DRO, yields also the best-performing controller in \S~\ref{sec:exp_rl_main}.  At $\lambda=1$ (\ie, the value used in \autoref{fig:radar_RL_main} in~\S~\ref{sec:exp_rl_main}), ICNN-DRO achieves the best control performance, with MPA as the runner-up; these are also the two methods with the smallest Monge gap estimates in \autoref{tab:monge_gap_vs_lambda_rl}. Thus, although the Monge gap enters only in a lower bound on the adversarial error, it still appears to be a useful indicator of empirical performance in this low-dimensional
control setting.

\begin{table}[t]
  \centering
  \renewcommand{\arraystretch}{1.0}
  \caption{Monge gap on robust control (CartPole).}
  \begin{tabular}{@{}c l S S S@{}}
    \toprule
     & & \multicolumn{3}{c}{Regularization strength $\lambda$} \\
    \cmidrule(lr){3-5}
     & Method & {$0.1$} & {$0.5$} & {$1$} \\
    \midrule
    
     & ERM      & {$0$} & {$0$} & {$0$} \\
    \hdashline\noalign{\vskip 1pt}
    \multirow{4}{*}{\rotatebox[origin=c]{90}{Baselines}}
     & PA       & 4.87e-2 & 2.01e-2 & 1.99e-2 \\
     & WFR      & 1.87e-2 & 1.41e-2 & 1.13e-2 \\
     & SDRO     & 2.18e-2 & 2.03e-2 & 1.93e-3 \\
     & NN-DRO   & 2.17e-2 & 2.07e-2 & 1.98e-3 \\
    \cmidrule(l){2-5}
    \multirow{2}{*}{\rotatebox[origin=c]{90}{Ours}}
     & MPA      & 6.16e-6 & 1.16e-6 & 1.15e-6 \\
     & ICNN-DRO & 6.92e-7 & 6.82e-7 & 8.81e-8 \\
    \bottomrule
  \end{tabular}
  \label{tab:monge_gap_vs_lambda_rl}
\end{table}

\subsection{Design Axes of the ICNN Adversary}
\label{sec:design_axes}

Three design choices govern the effective capacity of the ICNN adversary with architecture~\eqref{ICNN Architecture}. The first is the depth and the width of the hidden layers. The second is the placement of the quadratic injections, either at every hidden layer or only at the readout. The third is the rank $r_{\mathrm{out}}$ of the matrix $A_L\in\mathbb{R}^{r_{\mathrm{out}}\times m}$ that generates the quadratic term of the readout. Depth, width, and injection placement are the conventional levers, whereas $r_{\mathrm{out}}$ has no counterpart in standard network design. We now isolate the contribution of each axis and show that the third one dominates.

\myparagraph{Design grid.}
For the depth and width axes, we consider three backbones, \texttt{[64,64,64,64]} (simple), \texttt{[128,128,128,128]} (moderate) and \texttt{[1024,512,512,256,128,64]} (complex). For the quadratic injections, we consider two variants: quadratic only in the readout (\textit{i.e.}, ICNN-DRO) and quadratic at all layers (\textit{i.e.}, ICNN-DRO-Q). For the readout rank we sweep $r_{\mathrm{out}}\in\{32,128,512\}$ in the readout-only variant and $r_{\mathrm{out}}\in\{8,32,128\}$ in the all-layers variant. We repeat the robust image classification experiment in~\autoref{sec:Robustness_AA_Full} with all of these network architectures. The results are collected in~\autoref{tab:icnn-capacity}. We observe that a simple ICNN equipped with a high-rank readout outperforms substantially larger ICNNs with low-rank readouts, while requiring markedly less computation. Hence, $r_{\mathrm{out}}$ serves as the effective capacity knob of our architecture and determines the strength of the learned adversarial map more directly than network width or depth, and than the injection of quadratics.

\looseness=-1

\begin{table*}[h!]
    \centering
    \vspace{-0.3cm}
    \caption{\textbf{Adversary capacity versus robustness.}
    Best results per column are in bold, and second-best results are underlined.
    PGD cells are shaded green for configurations satisfying our robustness
    criterion (PGD accuracy $\geq 56\%$), with darker green indicating stronger
    robustness, and red otherwise, with darker red indicating a larger shortfall.
    Runtime cells are shaded green only for configurations satisfying this
    robustness criterion, with darker green indicating lower computational cost.
    A runtime marked with $\dagger$ corresponds to a configuration with
    PGD accuracy below $56\%$. Although such configurations may be substantially
    faster, we do not regard them as successful accelerations of ICNN training
    because they fail our minimum robustness criterion. The objective of this
    ablation is specifically to reduce the approximately $10^4$\,s wall-clock
    training time of the full-width quadratic ICNN while retaining at least
    $56\%$ PGD accuracy.}

    \vspace{-0.2cm}

    \setlength{\tabcolsep}{20pt}
    \renewcommand{\arraystretch}{1.25}

    \resizebox{0.99\linewidth}{!}{%
    \begin{tabular}{
        l
        !{\hspace{5pt}} c c
        !{\hspace{8pt}} c c
        !{\hspace{8pt}} c
    }
        \toprule
        \multirow{2.5}{*}{Method}
        & \multirow{2.5}{*}{$r_{\mathrm{out}}$}
        & \multirow{2.5}{*}{\shortstack{Total\\params}}
        & \multicolumn{2}{c}{CIFAR-10}
        & \multirow{2.5}{*}{\shortstack{Wall-clock\\Runtime~(s)}} \\
        \cmidrule(lr){4-5}
        & & & Clean & PGD$_{\varepsilon=0.5}$ & \\
        \midrule

        \rowcolor{gray!12}
        \multicolumn{6}{l}{
            \textit{Complex backbone}~
            \texttt{[1024, 512, 512, 256, 128, 64]}
        } \\
        \addlinespace[1pt]

        ICNN-DRO-Q & 8 & 31,693,314
        & 91.37
        & \cellcolor{gtrade!17}57.36
        & \cellcolor{gtrade!4}10854 \\

        ICNN-DRO-Q & 32 & 31,767,042
        & 91.30
        & \cellcolor{gtrade!18}57.41
        & \cellcolor{gtrade!3}10908 \\

        ICNN-DRO-Q & 128 & 32,061,954
        & 91.21
        & \cellcolor{gtrade!21}57.58
        & \cellcolor{gtrade!2}11325 \\

        ICNN-DRO & 8 & 8,665,602
        & \textbf{91.45}
        & \cellcolor{gtrade!16}57.28
        & \cellcolor{gtrade!17}6439 \\

        ICNN-DRO & 32 & 8,739,330
        & \underline{91.42}
        & \cellcolor{gtrade!18}57.40
        & \cellcolor{gtrade!15}6813 \\

        ICNN-DRO & 128 & 9,034,242
        & 91.35
        & \cellcolor{gtrade!20}57.51
        & \cellcolor{gtrade!14}6909 \\

        ICNN-DRO & 512 & 10,213,890
        & 91.35
        & \cellcolor{gtrade!40}\textbf{58.46}
        & \cellcolor{gtrade!13}7095 \\

        \midrule
        \rowcolor{gray!12}
        \multicolumn{6}{l}{
            \textit{Moderate backbones}
            ~\texttt{[128, 128, 128, 128]}
        } \\
        \addlinespace[1pt]
        ICNN-DRO-Q
        & 8 & 6,402,690
        & 90.61
        & \cellcolor{rtrade!23}43.38
        & \cellcolor{rtrade!8}6271$^{\dagger}$ \\

        ICNN-DRO-Q
        & 32 & 6,476,418
        & 90.81
        & \cellcolor{rtrade!8}53.48
        & \cellcolor{rtrade!8}6548$^{\dagger}$ \\

        ICNN-DRO-Q
        & 128 & 6,771,330
        & 91.10
        & \cellcolor{rtrade!3}55.70
        & \cellcolor{rtrade!8}6729$^{\dagger}$ \\

        ICNN-DRO
        & 32 & 1,733,250
        & 90.85
        & \cellcolor{rtrade!8}53.46
        & \cellcolor{rtrade!8}4089$^{\dagger}$ \\

        ICNN-DRO
        & 128 & 2,028,162
        & 91.07
        & \cellcolor{rtrade!3}55.68
        & \cellcolor{rtrade!8}4135$^{\dagger}$ \\

        ICNN-DRO
        & 512 & 3,207,810
        & 91.36
        & \cellcolor{gtrade!35}\underline{58.22}
        & \cellcolor{gtrade!31}4473 \\

        \addlinespace[5pt]

        \midrule
        \rowcolor{gray!12}
        \multicolumn{6}{l}{
            \textit{Simple backbones}
            ~\texttt{[64, 64, 64, 64]}
        } \\
        \addlinespace[1pt]

        ICNN-DRO-Q
        & 8 & 3,219,778
        & 90.60
        & \cellcolor{rtrade!23}43.40
        & \cellcolor{rtrade!8}4462$^{\dagger}$ \\

        ICNN-DRO-Q
        & 32 & 3,293,506
        & 90.81
        & \cellcolor{rtrade!11}52.44
        & \cellcolor{rtrade!8}4483$^{\dagger}$ \\

        ICNN-DRO-Q
        & 128 & 3,588,418
        & 90.93
        & \cellcolor{rtrade!4}55.25
        & \cellcolor{rtrade!8}4665$^{\dagger}$ \\

        ICNN-DRO
        & 16 & 860,482
        & 90.67
        & \cellcolor{rtrade!18}47.56
        & \cellcolor{rtrade!8}3118$^{\dagger}$ \\

        ICNN-DRO
        & 32 & 909,634
        & 90.81
        & \cellcolor{rtrade!11}52.42
        & \cellcolor{rtrade!8}3415$^{\dagger}$ \\

        ICNN-DRO
        & 64 & 1,007,938
        & 90.83
        & \cellcolor{rtrade!9}53.17
        & \cellcolor{rtrade!8}3568$^{\dagger}$ \\

        ICNN-DRO
        & 128 & 1,204,546
        & 90.92
        & \cellcolor{rtrade!4}55.21
        & \cellcolor{rtrade!8}3713$^{\dagger}$ \\

        ICNN-DRO
        & 256 & 1,597,762
        & 91.32
        & \cellcolor{gtrade!10}56.81
        & \cellcolor{gtrade!40}\textbf{3867} \\

        ICNN-DRO
        & 512 & 2,384,194
        & 91.41
        & \cellcolor{gtrade!33}58.11
        & \cellcolor{gtrade!37}\underline{4037} \\

        \bottomrule
    \end{tabular}%
    }
    \label{tab:icnn-capacity}
\end{table*}

\myparagraph{Why the readout rank is the binding constraint.}
To understand why the expressiveness of the adversarial map is mainly determined by $r_{\mathrm{out}}$, we isolate the readout quadratic by setting $\delta_L=0$ in~\eqref{ICNN Architecture}. With $A_L =[a_1,\ldots,a_{r_{\mathrm{out}}}]^\top$, the transport map $T_\omega(z) = \nabla_z\psi_\omega(z)$ expands to
{%
\setlength{\abovedisplayskip}{2pt plus 1pt minus 1pt}%
\setlength{\belowdisplayskip}{2pt plus 1pt minus 1pt}%
\setlength{\abovedisplayshortskip}{0pt plus 1pt}%
\setlength{\belowdisplayshortskip}{1pt plus 1pt minus 1pt}%
}%
\[
T_\omega(z)
=\nabla_z\phi_\omega(z)+\sum_{k=1}^{r_{\mathrm{out}}}(a_ka_k^\top)z +w_L^z,
\]%

where $\phi_\omega(z)\coloneqq\exp(w_L^y)^\top y_L(z)$ collects the output of the hidden layers. The adversary perturbs nominal samples via two main components. The first component is the neural network gradient $\nabla_z\phi_\omega(z)$, and the second is a linear transformation of the input sample $\sum_{k=1}^{r_{\mathrm{out}}}(a_ka_k^\top)z$. Crucially, the second term is confined to $\operatorname{span}\{a_1,\ldots, a_{r_{\mathrm{out}}}\}$, meaning it can only ever explore a fixed set of at most $r_{\mathrm{out}}$ directions. While $\nabla_z\phi_\omega(z)$ could theoretically supply additional directions, it is empirically dominated by the second term, meaning that the relative norm $\|\nabla_z\phi_\omega(z)\|_2/\| A_L^\top A_Lz\|_2$ is empirically small. More precisely, across $4{,}096$ training samples, this relative norm is merely $0.0711$ for $r_{\mathrm{out}}=1$ and $0.0910$ for $r_{\mathrm{out}}=2$. Because the second term dictates the overall perturbation magnitude, $r_{\mathrm{out}}$ effectively governs the map's total directional capacity. This explains why increasing the hidden backbone width or depth (\autoref{tab:icnn-capacity}) cannot overcome a low readout rank, and increasing $r_{\mathrm{out}}$ serves as the main component of the expressivity of the ICNN.

\subsection{Penalty Annealing for the Global Map}
\label{sec:lambda_annealing}
\begin{table}[t!]%
    \centering
    \caption{\textbf{Penalty annealing versus fixed-penalty adversarial training on CIFAR-10.} The $K=20$ fixed-$\lambda$ baseline is reproduced from \autoref{tab:CIFAR-10-CIFAR101-CIFAR102} for comparison. Best results per column are in bold, and the second best are underlined.}
    \setlength{\tabcolsep}{5pt}
    \renewcommand{\arraystretch}{1.2}
    \resizebox{0.99\linewidth}{!}{%
    \begin{tabular}{l | c | ccc c c cccccc}
        \toprule
        \multirow{2.5}{*}{Method}
        & \multirow{2.5}{*}{$K$}
        & \multicolumn{3}{c}{CIFAR-10}
        & \multicolumn{1}{c}{CIFAR-10.1}
        & \multicolumn{1}{c}{CIFAR-10.2}
        & \multicolumn{6}{c}{CIFAR-10-C (Common Corruption)} \\
        \cmidrule(lr){3-5} \cmidrule(lr){6-6} \cmidrule(lr){7-7} \cmidrule(lr){8-13}
        & & Clean & PGD$_{\varepsilon=0.5}$ & AA$_{\varepsilon=0.5}$ & Clean & Clean & 1 & 2 & 3 & 4 & 5 & Avg \\
        \midrule
        ICNN-DRO~(fixed $\lambda=30$)
        & 50
        & 91.20 & \underline{59.83} & \underline{58.81} & \underline{82.57} & 79.98 & \underline{89.28} & \underline{86.85} & \underline{84.10} & \underline{80.00} & \underline{74.06} & \underline{82.85} \\
        ICNN-DRO~(fixed $\lambda=30$)
        & 20
        & \underline{91.41} & 58.11 & 57.71 & 82.56 & \textbf{80.12} & 89.20 & 86.79 & 84.01 & 79.92 & 73.95 & 82.77 \\
        
        ICNN-DRO~+~$\lambda$-annealing
        & 20
        & \textbf{91.48} & \textbf{59.94} & \textbf{58.92} & \textbf{82.60} & \underline{80.10} & \textbf{89.31} & \textbf{86.93} & \textbf{84.15} & \textbf{80.04} & \textbf{74.12} & \textbf{82.91} \\
        \bottomrule
    \end{tabular}
    }
    \label{tab:CIFAR-10-CIFAR101-CIFAR102-lambdaannealing}
\end{table}%
The penalty parameter $\lambda$ in~\eqref{prob:regularized_dro} controls the strength of the adversary. A large $\lambda$ severely constrains the adversary, forcing the worst-case map toward the identity and reducing the objective to standard ERM. Conversely, a small $\lambda$ permits a highly potent adversary. Rather than committing to a single $\lambda$, we anneal it along a decreasing schedule $\lambda_1 > \lambda_2 > \cdots > \lambda_S$ and, crucially, \emph{never reset the adversarial map between stages}. The ICNN weights $\omega$ and the classifier $\theta$ learned under $\lambda_s$ initialize the next phase under $\lambda_{s+1}$. This defines a smooth sequence of penalized DRO problems, one per penalty level. Note that warm-starting each stage from the previous solution is coherent because our adversary is a single global map $T_\omega = \nabla_z \psi_\omega$ defined on all of $\mathbb{R}^m$, \ie, what carries across stages is the entire transport, not a set of transductive particles. \looseness=-1 

This scheduling serves two main purposes. First, it creates an adversarial curriculum for the classifier. The adversary begins near the ERM baseline at $\lambda_1$ and steadily strengthens as $\lambda$ decreases. Second, it stabilizes the ICNN optimization by allowing the map's displacement to grow incrementally. Because the worst-case displacement scales as $1/\lambda$, warm-starting avoids the unstable jumps of a cold solve, instead expanding the transport map by a modest factor at each stage. Note that $T_\omega$ is always the gradient of a convex potential, the refined map remains cyclically monotone throughout the entire schedule. \looseness=-1

\myparagraph{Setup.} We evaluate this schedule using the CIFAR-10 image-space pipeline from~\autoref{tab:CIFAR-10-CIFAR101-CIFAR102} (\S~\ref{sec:Robustness_AA_Full}). We retain the exact ICNN architecture used in the main text (a $[64,64,64,64]$ backbone, readout-only quadratic injection, and $r_{\mathrm{out}}=128$). Consequently, this run differs from the fixed-penalty baseline solely in its penalty schedule. The penalty decreases along $\lambda \in \{60, 50, 40, 35, 30\}$ over $S=5$ stages. Each stage lasts for $E=10$ adversarial epochs ($50$ in total, preceded by a $2$-epoch adversary-only warmup at $\lambda_1=60$). Because the schedule terminates at the pipeline's standard operating point ($\lambda_S=30$), it steadily transitions from a mild, near-ERM adversary down to our standard robust formulation. In each stage, we perform $K=20$ BB+Armijo inner ascent steps per batch. Crucially, neither the classifier $\theta$ nor the ICNN weights $\omega$ are reset at stage boundaries, allowing the transport map to be continuously refined across penalty levels rather than re-solved from scratch.\looseness=-1

\myparagraph{Results.} As shown in~\autoref{tab:CIFAR-10-CIFAR101-CIFAR102-lambdaannealing}, penalty annealing attains the best performance across nearly every metric. Compared to the matched fixed-$\lambda$ baseline at $K=20$, the curriculum improves every reported metric except CIFAR-10.2, where it is lower by $0.02$ percentage points. Remarkably, it even yields a higher PGD accuracy than the more computationally expensive fixed-$\lambda$ configuration ($59.94\%$ vs.\ $59.83\%$), despite using $2.5\times$ fewer inner ascent steps ($K=20$ vs.\ $50$). This demonstrates that the curriculum discovers a stronger adversary at a fraction of the inner-loop cost. Crucially, this avoids the typical robustness-accuracy trade-off: the annealed model simultaneously achieves the highest clean CIFAR-10 accuracy ($91.48\%$) and superior performance on the natural shifts of CIFAR-10.1, CIFAR-10.2, and CIFAR-10-C. These gains are consistent with the warm-started curriculum successfully stabilizing the classifier $\theta$ before the adversary reaches full strength.  \looseness=-1



\end{document}